  \renewcommand{\headrulewidth}{0pt}%
\newcommand{\tcite}[1]{\begingroup\setcitestyle{numbers}\textsuperscript{\citenum{#1}}\endgroup}
\tiny\color{gray},
\def\1{\bm{1}}
\DeclareMathAlphabet{\mathsfit}{\encodingdefault}{\sfdefault}{m}{sl}
\SetMathAlphabet{\mathsfit}{bold}{\encodingdefault}{\sfdefault}{bx}{n}
\newtheorem{definition}{Definition}
\newtheorem{proposition}{Proposition}
\newtheorem{theorem}{Theorem}
\newtheorem{corollary}{Corollary}
\newtheorem{lemma}{Lemma}
\newcommand{\feat}{\mathcal{F}}
\newcommand{\predfeat}{\hat{\mathcal{F}}}
\newcommand{\hermitepoly}[1]{\tilde{H}_{#1}}
\newcommand{\ninterval}{N_{\text{interval}}}
\newcommand{\derivapprox}[1]{\Delta^{#1}}
\title{HiCache: A Plug-in Scaled-Hermite Upgrade for Taylor-Style Cache-then-Forecast Diffusion Acceleration}
\date{}
\author{
\parbox{\dimexpr\textwidth-2\tabcolsep\relax}{\centering
{\normalfont\small
Liang Feng$^{1,2}$, Shikang Zheng$^{1,3}$, Jiacheng Liu$^{1}$, Yuqi Lin$^{1}$, Qinming Zhou$^{1,4}$, Peiliang Cai$^{1}$, \\
Xinyu Wang$^{1}$, Junjie Chen$^{1}$, Chang Zou$^{1}$, Yue Ma$^{5}$, Linfeng Zhang$^{1}$
} \\[-0.1em]
\vspace{0.55em}
{\normalfont\scriptsize
\setlength{\tabcolsep}{0pt}%
\newcommand{\hicacheaffil}[2]{\parbox[t]{0.30\linewidth}{\centering $^{##1}$ ##2}}%
\begin{tabular*}{\linewidth}{@{\extracolsep{\fill}}ccc}
\hicacheaffil{1}{Shanghai Jiao Tong University} &
\hicacheaffil{2}{Fudan University} &
\hicacheaffil{3}{South China University of\\Technology} \\
\hicacheaffil{4}{Tsinghua University} &
\hicacheaffil{5}{Hong Kong University of\\Science and Technology} &
 \\
\end{tabular*}
}}
}
\begin{document}

\maketitle
\pagestyle{hicacheplain}
\fancypagestyle{hicachefirstpage}{%
  \fancyhfoffset[LR]{0pt}%
  \setlength{\headwidth}{\textwidth}%
  \fancyhead{}%
  \fancyfoot{}%
  \renewcommand{\headrulewidth}{0pt}%
  \lhead{Accepted at ICLR 2026}%
  \cfoot{\thepage}%
}
\thispagestyle{hicachefirstpage}
\begingroup
\renewcommand{\thefootnote}{}
\footnotetext{Emails: \url{fenglang918@foxmail.com}; \url{zhanglinfeng@sjtu.edu.cn}.}
\endgroup
\addtocounter{footnote}{-1}

\begin{abstract}
       Diffusion models have achieved remarkable success in content generation but suffer from prohibitive computational costs due to iterative sampling. While recent feature caching methods tend to accelerate inference through temporal extrapolation, these methods still suffer from severe quality loss due to the failure in modeling the complex dynamics of feature evolution. To solve this problem, this paper presents HiCache (\textbf{H}erm\textbf{i}te Polynomial-based Feature \textbf{Cache}), a training-free acceleration framework that fundamentally improves feature prediction by aligning mathematical tools with empirical properties. Our key insight is that feature derivative approximations in Diffusion Transformers exhibit multivariate Gaussian characteristics, motivating the use of Hermite polynomials—the potentially theoretically optimal basis for Gaussian-correlated processes. Besides, we introduce a dual-scaling mechanism that ensures numerical stability while preserving predictive accuracy, which is also effective when applied standalone to TaylorSeer. Extensive experiments demonstrate HiCache's superiority: achieving $5.55\times$ speedup on FLUX.1-dev while exceeding baseline quality, maintaining strong performance across text-to-image, video generation, and super-resolution tasks. Moreover, HiCache can be naturally added to the previous caching methods to enhance their performance, \emph{e.g.,} improving ClusCa from $0.9480$ to $0.9840$ in terms of image rewards. \emph{Code: \url{https://github.com/fenglang918/HiCache}.}
    \end{abstract}

\begin{figure}[!h]
    \centering
    \includegraphics[width=0.94\textwidth]{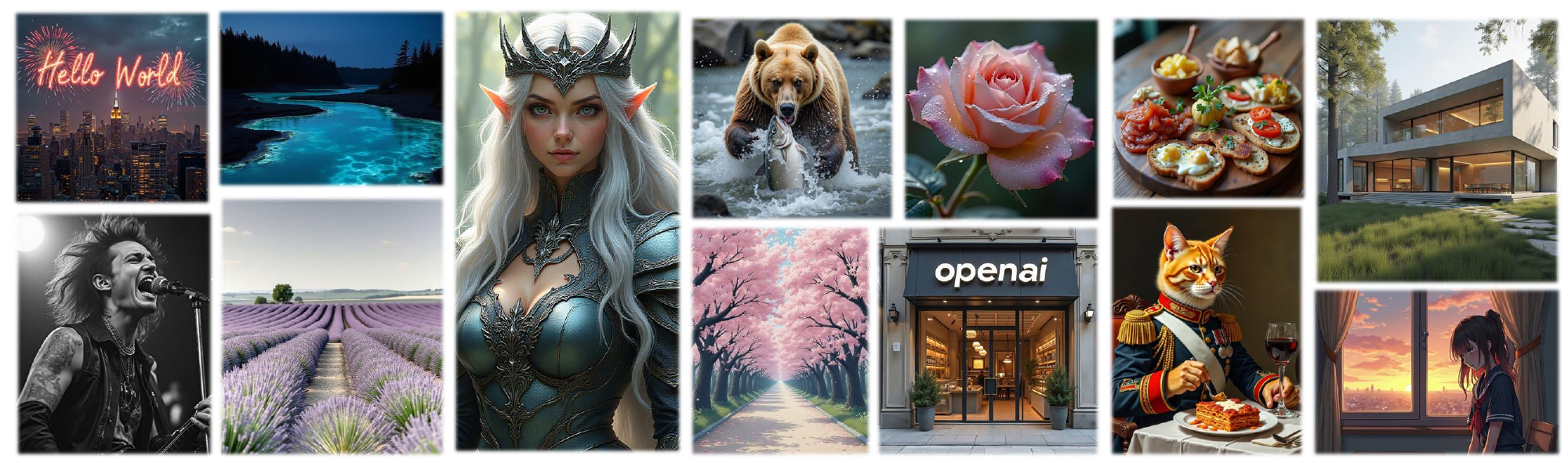}
    \caption{Teaser of HiCache using the FLUX.1-dev model, with a \textbf{$\boldsymbol{6.24\times}$ FLOPs speedup} .}
    \label{teaser}
\end{figure}

    \section{Introduction}
Diffusion Models ($\text{DMs}$) ~\citep{ho2020denoising, nichol2021improved, dhariwal2021diffusion}, especially when implemented with the scalable Transformer architecture ($\text{DiT}$) ~\citep{peebles2023dit}, have set a new standard for high-fidelity content generation ~\citep{chen2023pixart, rombach2022ldm, karras2022elucidating, saharia2022photorealistic, balaji2022ediff}. This success, however, is tempered by the substantial computational overhead inherent to the diffusion process. The iterative sampling, requiring hundreds of sequential forward passes through large models, results in high latency that critically hinders practical deployment, especially on diffusion transformers.
Consequently, developing acceleration methods for diffusion models has become a hot research topic.

\begin{wrapfigure}{r}{0.6\textwidth}
\centering
\includegraphics[width=0.60\textwidth]{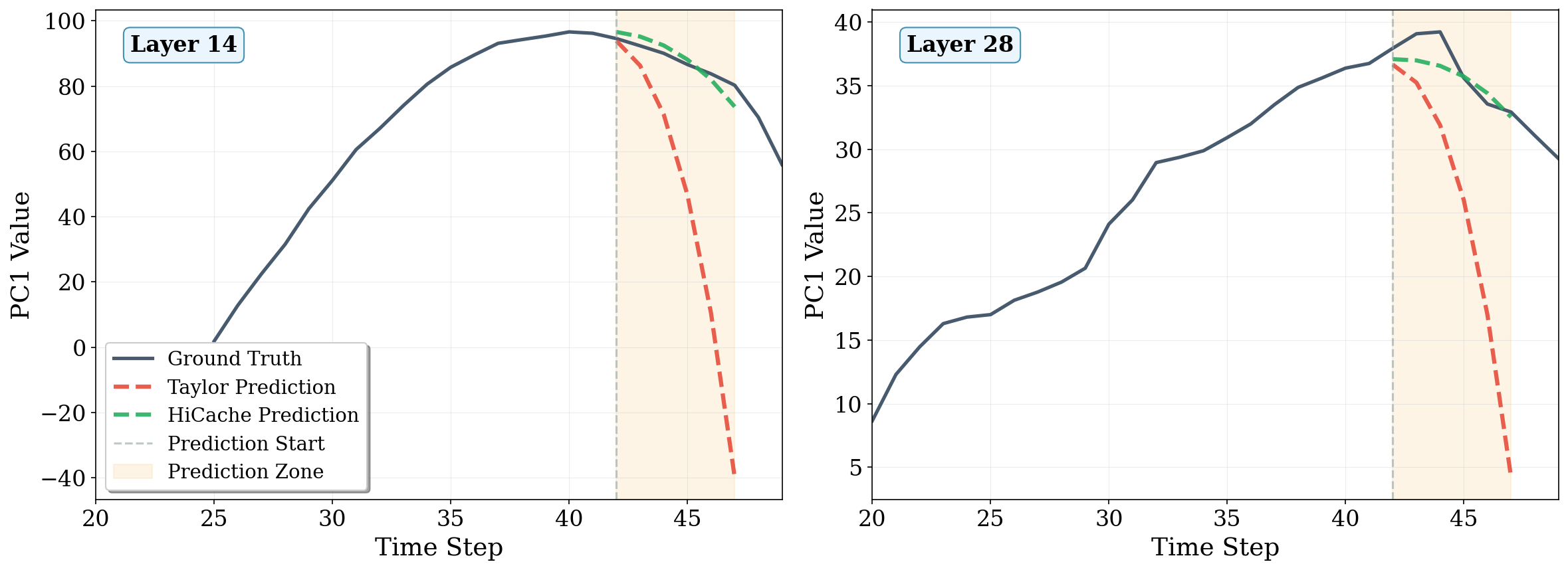}
\caption{\textbf{Trajectory prediction comparison of Taylor and HiCache methods on FLUX.} \emph{``Full trajectory''} indicates the feature trajectory from the original Flux~\citep{esser2024scaling} in the 14th and 28th layer. 
The y-axis denotes the most principal component of the features in diffusion models.}
\label{fig:intro_prediction_comparison}
\end{wrapfigure}Among various acceleration strategies, feature caching has emerged as a particularly effective and training-free approach. 
Motivated by the observation that diffusion models exhibit similar features in the adjacent timesteps,
the original feature caching seeks to directly reuse features in the adjacent timesteps (e.g., DeepCache~\citep{ma2023deepcache}, FORA~\citep{selvaraju2024fora}, ToCa~\citep{zou2025acceleratingdiffusiontransformerstokenwise}, ClusCa~\citep{zheng2025clusca}), which
leads to significant generation quality loss when the acceleration ratios increase. Recently, the paradigm has advanced to a ``cache-then-forecast" mechanism as TaylorSeer~\citep{liu2025reusingforecastingacceleratingdiffusion}, where Taylor series expansion is used to extrapolate features in the future timesteps, significantly reducing the error from feature caching. TaylorSeer significantly reduces the error introduced by feature reusing, but still faces fundamental theoretical limitations:
\emph{The standard polynomial basis of a Taylor series is suboptimal for modeling the complex, non-monotonic trajectories of feature evolution in $\text{DMs}$.}

Figure~\ref{fig:intro_prediction_comparison} visualize the features of the original diffusion models (\emph{i.e., ground-truth}) and the predictive features from TaylorSeer. It is observed that Taylor-based extrapolation exhibits significant deviations from ground truth trajectories, particularly at turning points where its monotonic nature fails to capture the underlying dynamics. 
These deviations indicate a modeling mismatch: the monomial Taylor basis poorly fits the non-monotonic, turning-point behavior of feature trajectories, causing approximation error to grow rapidly with prediction horizon and order (Proposition~\ref{prop:taylor_limitation}); this, in turn, caps the attainable acceleration before quality degradation becomes evident.

To overcome this limitation, in this paper, we propose HiCache (\textbf{H}erm\textbf{i}te Polynomial-based Feature \textbf{Cache}), a framework that improves the feature forecasting by aligning the predictive basis with the intrinsic statistical properties of the feature dynamics. Our approach is based on a key empirical observation: \emph{the derivative approximations of feature vectors in $\text{DiTs}$ consistently exhibit a multivariate Gaussian distribution (details in Section 3.3.1)}. Motivated by this finding, HiCache replaces the standard polynomial basis with \textbf{scaled Hermite polynomials}. According to approximation theory~\citep{szego1975orthogonal}, Hermite polynomials are the potentially optimal orthogonal basis for representing Gaussian-correlated processes, making them theoretically better suited for this task. 
Besides, to address the numerical challenges of Hermite polynomials at large extrapolation steps, we further introduce a dual-scaling mechanism that simultaneously constrains predictions within the stable oscillatory regime and suppresses exponential coefficient growth in high-order terms through a single hyperparameter. This principled design enables both accurate and numerically stable feature prediction. As shown in Figure~\ref{fig:intro_prediction_comparison}, the feature trajectory predicted by HiCache closely matches the original $\text{FLUX}$, demonstrating its effectiveness. Importantly, HiCache naturally serves as a drop-in replacement for Taylor-based predictors in existing cache-then-forecast frameworks, requiring only the substitution of the polynomial basis while preserving the same predictor form and computational structure with negligible overhead.

Extensive experiments demonstrate the effectiveness of HiCache in class-to-image generation with $\text{DiT}$, text-to-image generation with $\text{FLUX}$.1-dev, image-to-image generation with $\text{Inf-DiT}$ for image super-resolution, and text-to-video generation with $\text{HunyuanVideo}$~\citep{kong2024hunyuanvideo}, showing improvements over previous methods.
For instance, on $\text{FLUX}$.1-dev, HiCache achieves $5.55\times$ acceleration with $4.25\%$ improvement on ImageReward over the second-best method. This superiority extends to its use as a component upgrade: replacing ClusCa's Taylor predictor with our Hermite basis yields ImageReward improvements from $0.9480$ to $0.9840$ at $\mathcal{N}=7$.  

In summary, our main contributions are as follows.
\begin{itemize}[leftmargin=10pt, topsep=0pt]
    \item \textbf{Hermite Polynomial-based Feature Caching.} We propose a training\textendash free feature extrapolation/cache predictor grounded in the empirically Gaussian finite\textendash difference statistics of diffusion features; instantiated via KL\textendash optimal scaled Hermite polynomials under Gaussian correlation, it better captures non\textendash monotonic trajectories while preserving the Taylor\textendash style form and compute.
    \item \textbf{Dual-scaling mechanism.} We introduce a single-hyperparameter dual scaling (input contraction + coefficient suppression) that stabilizes predictions and suppresses high-order growth without accuracy loss; it also works standalone for TaylorSeer.
    \item \textbf{Plug-and-play generality.} HiCache is model-agnostic and training-free; by only swapping the polynomial basis, it drops into any Taylor-style cache-then-forecast pipeline (e.g., TaylorSeer, ClusCa) with unchanged predictor form and negligible compute overhead—only a few per-step scalar basis-function evaluations and no extra matrix multiplications.
    \item \textbf{Broad empirical gains.} Across text-to-image, text-to-video, class-conditional, and super-resolution, we achieve equal-or-higher speedups with better quality: e.g., on FLUX.1-dev, $5.55\times$ speedup with ImageReward improving from $0.9872$ to $0.9979$; a zero-extra FLOPs basis swap lifts ClusCa from $0.9480$ to $0.9840$.
\end{itemize}

    \FloatBarrier
    \section{Related Work}

\noindent\textbf{Sampling Step Reduction} aims to reduce denoising iterations. DDIM~\citep{song2022ddim} introduced deterministic sampling with large steps, while DPM-Solver~\citep{lu2025dpmsolver} formalized as an ODE with high-order numerical methods. Advances include knowledge distillation~\citep{meng2023distillation} and Consistency Models~\citep{song2023consistency} for few-step generation. Other approaches include cascaded diffusion models~\citep{ho2022cascaded} and improved sampling techniques~\citep{karras2022elucidating}. Text-to-image diffusion models show progress~\citep{saharia2022photorealistic, balaji2022ediff, ramesh2022dalle2, podell2023sdxl, zhang2023adding}, while video generation~\citep{singer2022makeavideo, ho2022videodiffusion} presents additional challenges.

\noindent\textbf{Denoising Network Compression} focuses on reducing per-step computational cost. \text{Model compression} methods include neural network pruning~\citep{fang2023pruning} and quantization~\citep{li2023quantization} have applied the traditional model compression methods to diffusion models.
Besides, token merging~\citep{bolya2023tome} has been introduced to reduce the number of tokens computed in each timestep. However, these methods tend to lead clear loss in generation quality, and sometimes rely on retraining the diffusion model.

\noindent\textbf{Feature Caching} has been recently proposed to offer a training-free alternative. Motivated by the great similarity of features in the adjacent timesteps, these methods aim to skip the computation in some timesteps based on the historical features, which can be roughly divided into two paradigms: (1) \emph{Cache-then-Reuse} directly reuses features from adjacent timesteps (DeepCache~\citep{ma2023deepcache}, FORA~\citep{selvaraju2024fora}, ToCa~\citep{zou2025acceleratingdiffusiontransformerstokenwise}, L2C~\citep{Ma2024L2C}), but suffers quality degradation at large intervals; (2) \emph{Cache-then-Forecast} predicts future features using historical data. TaylorSeer~\citep{liu2025reusingforecastingacceleratingdiffusion} pioneered this approach with Taylor series expansion, significantly improving quality at high acceleration ratios. However, its standard polynomial basis remains suboptimal for complex feature dynamics, motivating our Hermite polynomial-based approach.

    \FloatBarrier
    \section{Method}
    \label{sec:method}

    \subsection{Preliminaries}
    
    Diffusion Transformers ($\text{DiT}$) ~\citep{peebles2023dit} apply transformer blocks iteratively during denoising, with each block updating features via $\feat_{\text{out}} = \feat_{\text{in}} + \text{MHSA}(\text{AdaLN}(\feat_{\text{in}}, t)) + \text{MLP}(\text{AdaLN}(\cdot, t))$. The timestep-conditioned normalization and nonlinearities create complex, non-monotonic feature trajectories that challenge existing acceleration methods.

    \subsection{Taylor Expansion-based Feature Prediction}
    
    To accelerate sampling, existing methods like TaylorSeer extrapolate future features using Taylor series. We first formalize the finite difference framework:

    \newpage
    
    \begin{definition}[Finite Difference Operator]
	    \label{def:finite_diff}
	    The $i$-th order backward difference operator $\Delta^{(i)}$ on feature trajectory $\{\feat_t\}$ is defined recursively:
	    {\small
	    \begin{equation}
	        \derivapprox{i}\feat_{t} = \frac{\derivapprox{i-1}\feat_{t} - \derivapprox{i-1}\feat_{t-\ninterval}}{\ninterval}, \quad \text{with} \quad \derivapprox{0}\feat_{t} = \feat_{t}
	        \label{eq:derivative_approx}
	    \end{equation}
	    }
	    \end{definition}Taylor expansion-based prediction then takes the form:
	    {\small
	    \begin{equation}
	        \predfeat_{t-k}^{\text{Taylor}} = \feat_{t} + \sum_{i=1}^{m} \frac{\derivapprox{i}\feat_{t}}{i!}(-k)^{i}
	        \label{eq:taylor_prediction}
	    \end{equation}
	    }
	    \begin{proposition}[Limitation of Monomial Basis]
	    \label{prop:taylor_limitation}
	    For feature trajectories with bounded variation but containing turning points, the Taylor prediction error grows as:
	    {\small
	    \begin{equation}
	        \|\predfeat_{t-k}^{\text{Taylor}} - \feat_{t-k}\| = O\left(\frac{k^{m+1}}{(m+1)!}\sup_{\xi \in [t-k, t]} \|\feat^{(m+1)}(\xi)\|\right)
	    \end{equation}
	    }
	    where the supremum can be arbitrarily large at trajectory inflection points.
	    \end{proposition}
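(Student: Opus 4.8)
The plan is to decompose the prediction error into a Taylor truncation remainder and a finite-difference approximation error, show that the former matches the stated bound, and argue that the latter is lower order in the regime the proposition highlights. First I would lift the sampled trajectory $\{\feat_t\}$ to a sufficiently smooth vector-valued curve $\feat(\cdot)$ interpolating the features, so that $\feat_{t-k}=\feat(t-k)$ and the classical derivatives $\feat^{(i)}(t)$ are well defined. Under this smoothness assumption the backward differences of Definition~\ref{def:finite_diff} are consistent stencils, $\derivapprox{i}\feat_{t}=\feat^{(i)}(t)+O(\ninterval)$, by the standard error analysis of finite differences.

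Next I would write the exact expansion of $\feat(t-k)$ about $t$. Because $\feat$ is vector-valued the single-point Lagrange remainder is unavailable, so I would use the integral form of the remainder,
\begin{equation}
    \feat(t-k) - \feat_{t} - \sum_{i=1}^{m}\frac{\feat^{(i)}(t)}{i!}(-k)^{i} = \frac{1}{m!}\int_{t}^{t-k}(t-k-s)^{m}\,\feat^{(m+1)}(s)\,ds .
\end{equation}
Taking norms, pulling $\sup_{\xi\in[t-k,t]}\|\feat^{(m+1)}(\xi)\|$ outside the integral, and evaluating $\frac{1}{m!}\int_{0}^{k}u^{m}\,du=\frac{k^{m+1}}{(m+1)!}$ yields the bound $\frac{k^{m+1}}{(m+1)!}\sup_{\xi\in[t-k,t]}\|\feat^{(m+1)}(\xi)\|$ for this truncation term, which is exactly the claimed right-hand side.

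Then I would substitute $\derivapprox{i}\feat_{t}=\feat^{(i)}(t)+O(\ninterval)$ into the predictor of Eq.~\eqref{eq:taylor_prediction}, so that $\predfeat_{t-k}^{\text{Taylor}}$ differs from the exact degree-$m$ Taylor polynomial by $\sum_{i=1}^{m}\frac{(-k)^{i}}{i!}O(\ninterval)$. Crucially, this finite-difference contribution does not depend on $\feat^{(m+1)}$, so as $\sup_{\xi}\|\feat^{(m+1)}(\xi)\|\to\infty$ near a turning point it is dominated by the truncation remainder and is absorbed into the big-$O$. To justify the final ``arbitrarily large'' clause I would exhibit a bounded-variation trajectory with a sharp inflection—e.g. a smoothed step of width $\varepsilon$—whose $(m+1)$-th derivative near the transition scales like $\varepsilon^{-(m+1)}$ while its total variation stays fixed; letting $\varepsilon\to 0$ drives $\sup_{\xi}\|\feat^{(m+1)}(\xi)\|\to\infty$.

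The main obstacle I anticipate is the bridging step between the discrete and continuous pictures: the right-hand side of the claim is phrased with the continuous derivative $\feat^{(m+1)}$, whereas the predictor is assembled from finite differences, so the whole argument hinges on the interpolation/consistency assumption that keeps the finite-difference error genuinely lower order and prevents it from contaminating the leading $k^{m+1}/(m+1)!$ behavior. Making that separation rigorous (rather than asymptotic in $\ninterval$) is the delicate part; the remaining steps are routine applications of Taylor's theorem and a norm estimate.
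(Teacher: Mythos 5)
The paper itself never proves this proposition: it is asserted as the standard Taylor remainder estimate, and the only echo of an argument is the remark in Appendix~A.3.1 quoting the Taylor remainder $O((\Delta s)^{N+1}/(N+1)!)$ for comparison with the Hermite bound. Your integral-form remainder computation for the truncation term is exactly the right (and standard) argument, correctly adapted to vector-valued trajectories where the single-point Lagrange form is unavailable, and your smoothed-step construction with $(m+1)$-th derivative scaling like $\varepsilon^{-(m+1)}$ at fixed total variation is a clean justification of the ``arbitrarily large at inflection points'' clause. In that sense you have supplied strictly more than the paper does.

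However, your treatment of the finite-difference error contains a genuine flaw. You claim this contribution ``does not depend on $\feat^{(m+1)}$'' and is therefore dominated as $\sup_{\xi}\|\feat^{(m+1)}(\xi)\|\to\infty$; both halves of that claim fail. The leading error of the $i$-th order backward difference with spacing $\ninterval$ is $\derivapprox{i}\feat_{t}-\feat^{(i)}(t)=O\bigl(\ninterval\,\sup\|\feat^{(i+1)}\|\bigr)$, so the $i=m$ term depends precisely on $\feat^{(m+1)}$. Moreover, in the caching regime of Eq.~\eqref{eq:taylor_prediction} the horizon satisfies $k\le\ninterval-1$, so $\ninterval>k$ and the contribution $\frac{k^{m}}{m!}\,\ninterval\,\sup\|\feat^{(m+1)}\|$ is at least as large as your truncation bound $\frac{k^{m+1}}{(m+1)!}\sup\|\feat^{(m+1)}\|$ --- it is the \emph{same} order, not lower order, and your asymptotic-in-$\ninterval$ framing ($\ninterval\to 0$ implicitly) is the opposite of the actual regime. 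The stated bound survives anyway, but for a different reason: for fixed scheme parameters the $i=m$ term is within a constant factor (at most $(m+1)\,\ninterval$) of the claimed right-hand side and is absorbed into the big-$O$, while the $i<m$ terms involve $\sup\|\feat^{(i+1)}\|$ for $i<m$ --- quantities not controlled by $\sup\|\feat^{(m+1)}\|$ in general, though in your blow-up example they scale like $\varepsilon^{-(i+1)}$ and are genuinely dominated as $\varepsilon\to 0$. Repairing your argument therefore requires replacing the ``lower order / independent of $\feat^{(m+1)}$'' step with this constant-factor absorption plus an explicit boundedness (or comparative-growth) assumption on the intermediate derivatives; the rest of your proof stands.
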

    This limitation motivates our search for a basis that naturally captures non-monotonic behaviors.

    \subsection{HiCache: Hermite-based Feature Caching}
    
Our key insight is that feature differences in diffusion transformers exhibit Gaussian properties (empirically validated in Section~\ref{sec:discussion}), which theoretically motivates the use of Hermite polynomials as optimal basis functions.
    
    \begin{proposition}[Gaussianity of Feature Differences]
    \label{prop:gaussianity}
    Feature differences $\Delta F_t = F(x_t, t) - F(x_{t-\Delta}, t-\Delta)$ in diffusion transformers exhibit approximate Gaussian behavior through: (i) local linearization yielding conditional Gaussianity $\Delta F_t \approx \mathcal{N}(\mu_t, \Sigma_t)$ for small $\Delta$, and (ii) aggregation effects across network components invoking $\text{CLT}$ with Berry-Esseen bound $O(\eta_t)$. See Appendix~\ref{appendix:theoretical_analysis} for the complete statement and proof.
    \end{proposition}
    
    \begin{corollary}[Optimal Basis Selection]
    \label{cor:optimal_basis}
    When the temporal correlation can be approximated by a Gaussian kernel $K(s, t) = \exp(-(s-t)^2/2\tau^2)$, the Karhunen-Loève expansion yields (scaled) Hermite functions as eigenfunctions. Thus, under such Gaussian kernel correlation, Hermite polynomials form the optimal orthogonal basis in the weighted $L^2(\gamma)$ sense. For general Gaussian processes, they remain approximately optimal (see Appendix~\ref{appendix:theoretical_analysis}, Theorem~\ref{thm:kl_hermite}, for precise conditions).
    \end{corollary}

    \subsubsection{Hermite Polynomials as Basis Functions}
    
    Given the Gaussian nature, we adopt Hermite polynomials $H_n(x) = (-1)^n e^{x^2}\frac{d^n}{dx^n}e^{-x^2}$~\citep{abramowitz1964handbook} satisfying the recurrence $H_{n+1}(x) = 2xH_n(x) - 2nH_{n-1}(x)$.
     Unlike monotonically growing Taylor basis functions, Hermite polynomials exhibit oscillatory behavior (as shown in Figure~\ref{fig:basis_comparison}). This oscillatory nature provides implicit regularization~\citep{vanderwesthiuzen2018unreasonable}, enabling accurate capture of non-monotonic dynamics in neural network optimization.
    
Unlike monotonic Taylor basis, Hermite polynomials' oscillatory nature captures turning points. However, they face numerical instability for large arguments, motivating our scaled version:
    
    \begin{definition}[Scaled Hermite Basis]
    \label{def:scaled_hermite}
    Scaled Hermite polynomials with contraction factor $\sigma \in (0, 1)$ are defined as:
    \begin{equation}
        \hermitepoly{n}(x) = \sigma^{n} H_{n}(\sigma x)
        \label{eq:scaled_hermite}
    \end{equation}
    This scaling provides dual stabilization: (i) input scaling $\sigma x$ constrains predictions within the stable oscillatory regime, and (ii) coefficient scaling $\sigma^n$ suppresses exponential growth in high-order terms.
    \end{definition}

    \noindent\textit{Remark.} The dual-scaling mechanism can also be applied as a standalone enhancement to TaylorSeer, providing measurable gains without changing its basis.

    \begin{figure}[t]
        \centering
        \begin{subfigure}[t]{0.55\textwidth}
            \centering
            \includegraphics[width=\textwidth]{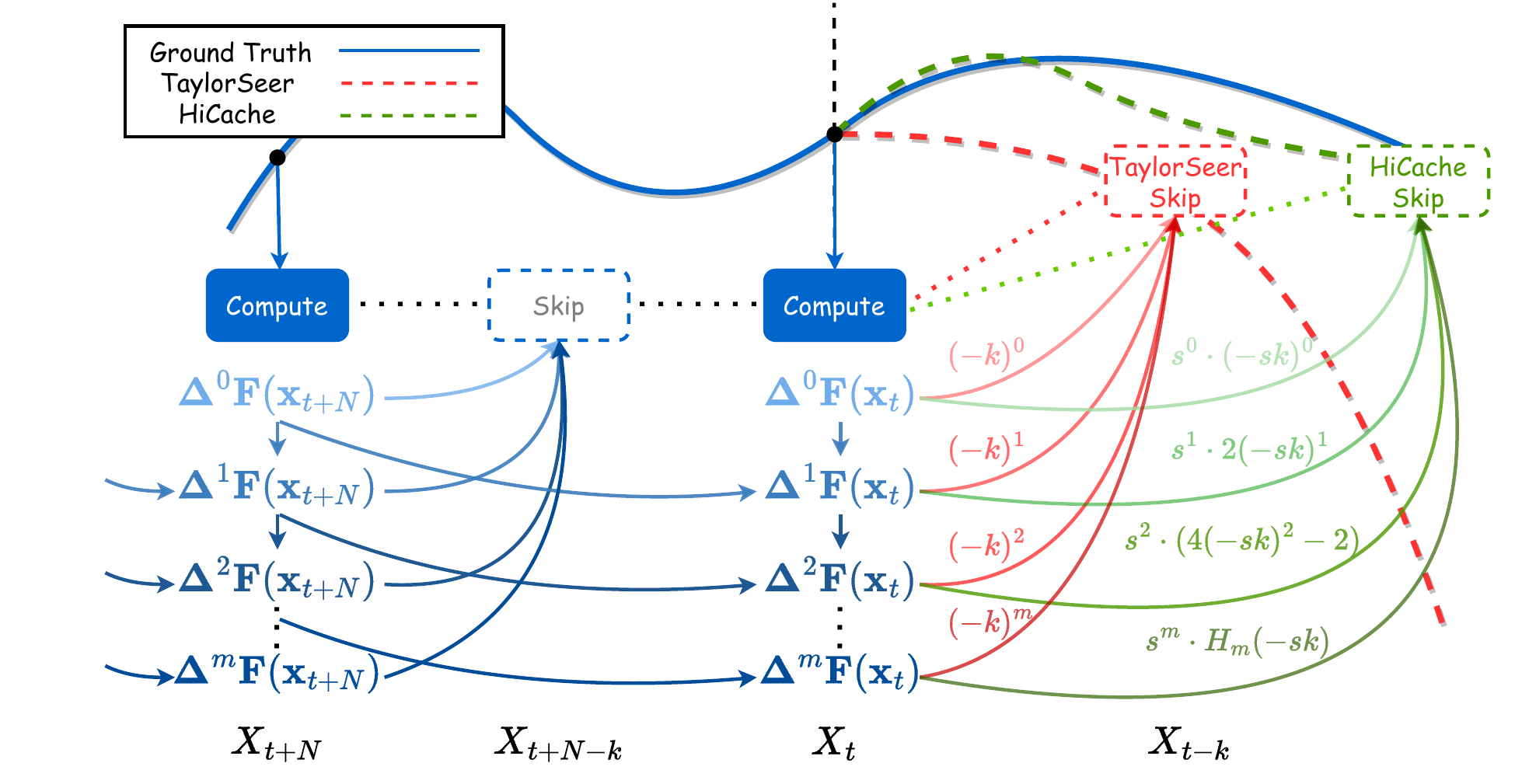}
            \caption{Overview of HiCache compared with TaylorSeer.}
            \label{fig:overview_of_HiCache}
        \end{subfigure}
        \hfill
        \begin{subfigure}[t]{0.43\textwidth}
            \centering
            \includegraphics[width=\textwidth]{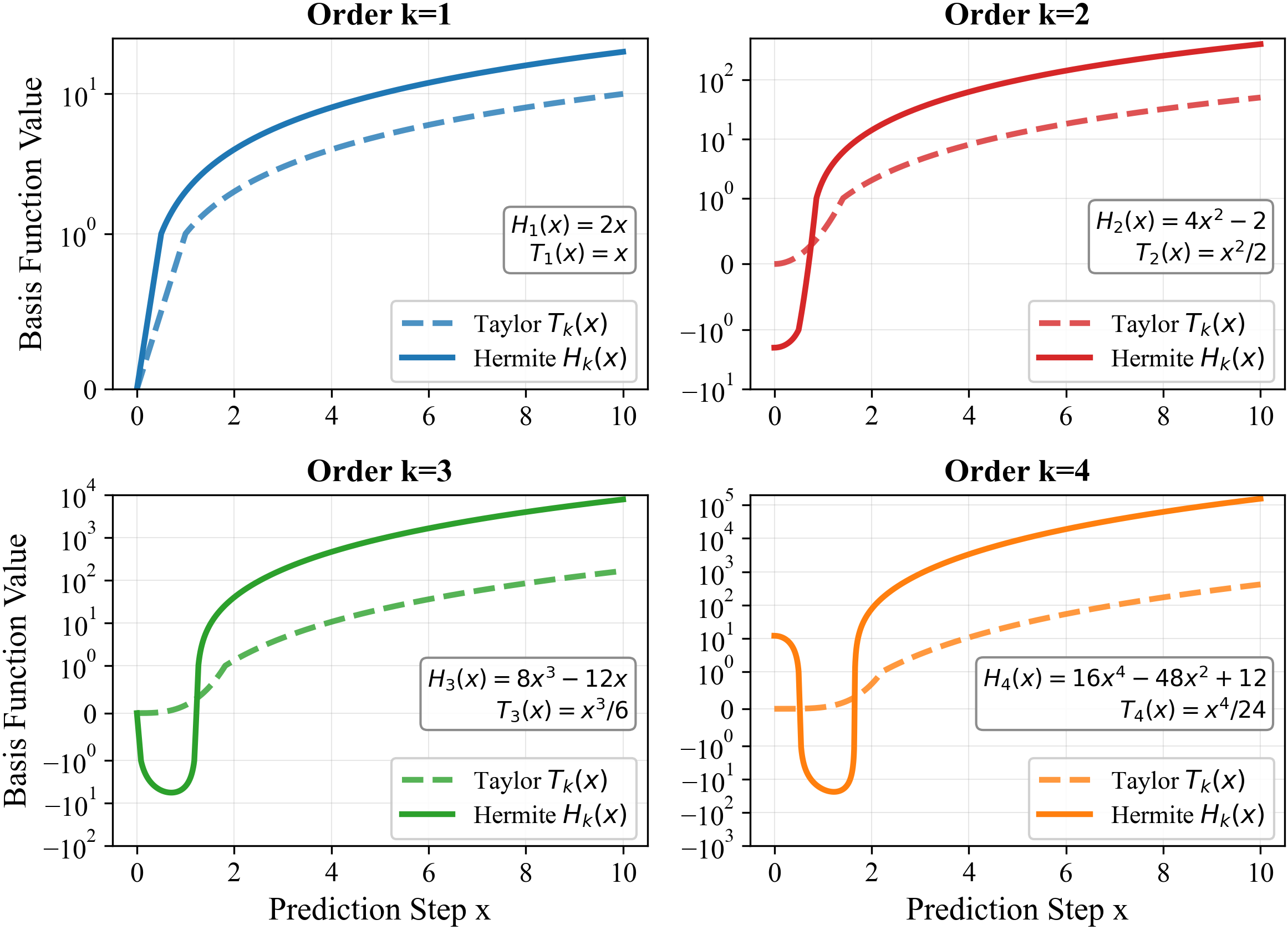}
            \caption{Comparison of basis functions.}
            \label{fig:basis_comparison}
        \end{subfigure}
        \caption{\textbf{Method overview and basis function comparison.} (a) TaylorSeer (orange) predicts features using a power basis, whereas HiCache (green) keeps the functional form but swaps the basis for scaled Hermite polynomials. (b) Hermite's oscillatory behavior (e.g., $H_2(x)$ negative offset) captures non-monotonic evolution better than monotonic Taylor growth.}
\label{fig:method_overview_comparison}
    \end{figure}
    
    \subsubsection{HiCache Algorithm}
    
	    \begin{proposition}[HiCache Feature Prediction]
	    \label{prop:hicache}
	    Given cached derivative approximations $\{\derivapprox{i}\feat_t\}_{i=0}^{N_{\text{order}}}$, under Gaussian feature dynamics, a theoretically-motivated predictor is given by:
	    {\small
	    \begin{equation}
	        \predfeat_{t-k}^{\text{HiCache}} = \feat_{t} + \sum_{i=1}^{N_{\text{order}}}\frac{\derivapprox{i}\feat_{t}}{i!}\hermitepoly{i}(-k)
	        \label{eq:hicache_prediction}
	    \end{equation}
	    }
	    where $k \in \{1, \ldots, \ninterval-1\}$ is the prediction horizon and $\hermitepoly{i}$ is the scaled Hermite polynomial from Definition~\ref{def:scaled_hermite}. When coefficients are obtained via weighted least squares, this achieves projection-optimality in $L^2(\gamma)$. The direct use of $\Delta^i\feat_t/i!$ provides an efficient approximation.
	    \end{proposition}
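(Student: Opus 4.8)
\emph{Proof proposal.} The plan is to cast the proposition as a minimum mean-square error (MMSE) estimation problem under the Gaussian model of Observation~\ref{obs:gaussianity} and then invoke the optimal-basis property of Corollary~\ref{cor:optimal_basis}. Concretely, I would regard the unknown $\feat_{t-k}$ as a random vector whose deviation from the anchor $\feat_{t}$ is driven by the Gaussian increments $\{\derivapprox{i}\feat_t\}$, and define the risk $\mathcal{R}(\predfeat) = \mathbb{E}\,\eucnorm{\predfeat_{t-k} - \feat_{t-k}}^2$ over all predictors of the form $\feat_{t} + \sum_{i=1}^{N_{\text{order}}} c_i\,\phi_i(-k)$, where each $\phi_i$ is an arbitrary degree-$i$ polynomial of the horizon. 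The goal is to show the minimizer is attained by the scaled Hermite choice $\phi_i = \hermitepoly{i}$ together with the coefficients $c_i = \derivapprox{i}\feat_t / i!$, recovering Equation~\eqref{eq:hicache_prediction}.

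First I would establish the functional-optimality step. Equip the horizon variable with the Gaussian weight $e^{-x^2}$ suggested by Observation~\ref{obs:gaussianity}, so that the relevant space is $L^2(\mathbb{R}, e^{-x^2}\,dx)$, in which the physicist's Hermite polynomials $\{H_n\}$ of Equation~\eqref{eq:hermite_rodrigues} are mutually orthogonal. By the Karhunen--Loève theorem (Corollary~\ref{cor:optimal_basis}), the eigenfunctions of the covariance operator of a Gaussian-correlated process coincide with this Hermite family; consequently, among all degree-$N_{\text{order}}$ expansions the truncated Hermite projection is Bessel-optimal, minimizing the $L^2$ residual and, through the Gaussian weighting, the risk $\mathcal{R}$. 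This fixes the basis functions to be Hermite and replaces the monomial basis $\{(-k)^i\}$ of Equation~\eqref{eq:taylor_prediction} by $\{\hermitepoly{i}(-k)\}$.

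Next I would pin down the coefficients. Using the derivative identity $H_n'(x) = 2n\,H_{n-1}(x)$ and the recurrence of Equation~\eqref{eq:hermite_recurrence}, the $i$-th Hermite projection coefficient of a smooth trajectory reduces, up to a fixed normalization, to the $i$-th derivative of the trajectory at the anchor weighted by $1/i!$; this follows by writing $H_i(x)e^{-x^2}$ via the Rodrigues formula and integrating by parts $i$ times against the Gaussian weight. Substituting the finite-difference surrogate of Definition~\ref{def:finite_diff}, $\derivapprox{i}\feat_t \approx \feat^{(i)}(t)$, yields $c_i = \derivapprox{i}\feat_t / i!$. I would then absorb the contraction factor: since $\hermitepoly{n}(x) = \sigma^n H_n(\sigma x)$ from Definition~\ref{def:scaled_hermite} is an orthogonal rescaling of argument and coefficients, it preserves the projection/orthogonality structure while confining $(-k)$ to the stable oscillatory regime, so the optimal predictor retains the stated form.

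The main obstacle is bridging the two distinct notions of ``Gaussian'' in play: Observation~\ref{obs:gaussianity} asserts an empirical distribution over the entries of each $\derivapprox{i}\feat_t$, whereas the Karhunen--Loève optimality concerns the covariance operator of the trajectory as a process in the horizon variable. Making the equivalence exact requires specifying a covariance model—e.g.\ an Ornstein--Uhlenbeck or Gaussian-kernel structure—whose eigenfunctions are provably Hermite, and then controlling two error sources: the bias from replacing true derivatives by finite differences (of order $O(\ninterval)$ per Definition~\ref{def:finite_diff}) and the loss of exact orthogonality induced by the scaling $\sigma \neq 1$. I expect the bulk of the rigorous work to lie in quantifying these two perturbations and showing they do not disturb the leading-order optimality, with the clean Hermite algebra of the coefficient-matching step being comparatively routine.
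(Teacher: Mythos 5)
Your overall route---recasting the proposition as MMSE prediction under Gaussianity and invoking Karhunen--Lo\`eve optimality of the Hermite family---is essentially the paper's own route: the main text gives only a ``Proof Intuition'' and defers to Appendix A.6, which contains exactly your two ingredients (a KL theorem for a Gaussian covariance kernel $K(s,t)=\exp(-(s-t)^2/2\tau^2)$ whose eigenfunctions are Hermite, and a corollary that the Hermite basis minimizes expected squared prediction error among orthogonal polynomial bases), and the paper openly concedes in A.6.4 that no first-principles proof is available. The obstacle you flag---that Observation~\ref{obs:gaussianity} asserts entrywise Gaussianity of the vectors $\derivapprox{k}\feat_t$ at a fixed time, whereas KL optimality requires a specific covariance structure of the trajectory as a process in the horizon variable---is precisely the gap the paper also leaves open, so on the basis-selection step you are, if anything, more candid than the source.

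The genuine gap is in the step you dismiss as ``comparatively routine,'' the coefficient matching. Carry out your own suggested computation: with weight $e^{-x^2}$, Rodrigues' formula and $i$-fold integration by parts give the projection coefficient of $H_i$ as $\frac{1}{2^i i!\sqrt{\pi}}\int f^{(i)}(x)e^{-x^2}\,dx$, i.e., a \emph{Gaussian-smoothed} derivative divided by $2^i i!$---not the anchor-point derivative divided by $i!$. Repeating this for the scaled basis of Definition~\ref{def:scaled_hermite} (weight $e^{-(\sigma x)^2}$), the coefficient multiplying $\hermitepoly{i}$ comes out as approximately $f^{(i)}/\bigl((2\sigma^2)^i\, i!\bigr)$, so the order-dependent factor $(2\sigma^2)^i$ cancels only at $\sigma=1/\sqrt{2}$, whereas the paper's ablation operates at $\sigma\in[0.4,0.7]$ with optimum $0.5$. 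Hence Eq.~\eqref{eq:hicache_prediction} is \emph{not} the truncated Hermite projection, and no orthogonality or KL argument can force $c_i=\derivapprox{i}\feat_t/i!$; indeed the appendix corollary you lean on optimizes \emph{over} the coefficients $c_k$ rather than fixing them, and the paper itself only asserts that the factorial normalization ``ensures proper scaling.'' In reality Eq.~\eqref{eq:hicache_prediction} is the Taylor ansatz of Eq.~\eqref{eq:taylor_prediction} with the monomial $(-k)^i$ heuristically replaced by $\hermitepoly{i}(-k)$ (note it does not even reduce to Taylor at leading order unless $2\sigma^2=1$). Your proposal therefore attempts to prove a stronger statement than the paper establishes, and the identity it rests on is false as stated; to repair it you would either have to specialize to $\sigma=1/\sqrt{2}$ and accept the Gaussian-averaged derivative as the coefficient surrogate, or retreat, as the paper does, to presenting the predictor as a theoretically motivated basis substitution rather than a derived optimal projection.
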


    \begin{algorithm}[t]
    \caption{HiCache: Hermite-based Feature Caching}
    \label{alg:hicache}
    \begin{algorithmic}[1]
    \REQUIRE Interval $\ninterval$, order $N_{\text{order}}$, contraction $\sigma$
    \STATE Initialize $t_{\text{last}} \leftarrow T$ \COMMENT{Track last activation step}
    \FOR{timestep $t = T$ to $1$}
        \IF{$t \mod \ninterval = 0$}
            \STATE Compute $\feat_t = \Phi_t(x_t)$ \COMMENT{Full model forward}
            \STATE Update cache: $\{\derivapprox{i}\feat_t\}_{i=0}^{N_{\text{order}}}$ via Def.~\ref{def:finite_diff} with $\Delta t_{\text{hist}} = \ninterval$
            \STATE $t_{\text{last}} \leftarrow t$ \COMMENT{Update last activation step}
        \ELSE
            \STATE $k \leftarrow t_{\text{last}} - t$ \COMMENT{Distance from last activation}
            \STATE Predict $\predfeat_t$ using Eq.~\eqref{eq:hicache_prediction} with $k$
        \ENDIF
    \ENDFOR
    \end{algorithmic}
	    \end{algorithm}
	    
	    HiCache reduces cost by factor $(\ninterval-1)/\ninterval$ with $O(N_{\text{order}})$ overhead. The total error $\mathbf{E}_{\text{total}} = \mathbf{E}_{\text{truncation}} + \mathbf{E}_{\text{approximation}} + \mathbf{E}_{\text{numerical}}$ satisfies:
	    {\small
	    \begin{equation}
	        \|\mathbf{E}_{\text{total}}\| \leq O\left(\frac{(\sigma\sqrt{2}|\Delta s|)^{N+1}}{\sqrt{(N+1)!}}\right) + O(\Delta t_{\text{hist}}\sqrt{N}) + O(\epsilon_{\text{machine}})
	        \label{eq:total_error_bound}
	    \end{equation}
	    }
	    
Under $\sigma\sqrt{2}|\Delta s|<1$, the Hermite truncation $(\sigma\sqrt{2}|\Delta s|)^{N+1}\allowbreak/\allowbreak\sqrt{(N+1)!}$ can be smaller than Taylor's $|\Delta s|^{N+1}\allowbreak/\allowbreak(N+1)!$ due to $\sigma^{N+1}$ suppression, yielding conditional superiority (see Appendix~\ref{appendix:error_analysis}).

\begin{minipage}[t]{0.41\textwidth}
\vspace{-0.5 cm}
    \centering
    \setlength\tabcolsep{3.5pt}
	    \renewcommand{\cellalign}{l}
	    \renewcommand{\theadalign}{l}
	    \small
	    {\captionsetup{hypcap=false}\captionof{table}{\textbf{Comparison on text-to-video generation} with HunyuanVideo on VBench.}}
	    \vspace{-0.3cm}
	    \label{tab:All-Methods-Comparison}
	    \resizebox{\linewidth}{!}{%
        \begin{tabular}{l cccc c}
            \toprule
            \textbf{Method} & \textbf{Lat.(s) $\downarrow$} & \textbf{Spd. $\uparrow$} & \textbf{FLOPs(T) $\downarrow$}  & \textbf{Spd. $\uparrow$} & \textbf{VBench(\%) $\uparrow$} \\
            \midrule
            Original \small{(50 steps)} & 145.00 & 1.00$\times$ & 29773 & 1.00$\times$ & 80.66      \\
            \small{22\% steps}    & 31.87  & 4.55$\times$ & 6550  & 4.55$\times$ & 78.74      \\
            \midrule
            TeaCache   & 30.49  & 4.76$\times$ & 6550  & 4.55$\times$ & 79.36      \\
            FORA       & 34.39  & 4.22$\times$ & 5960  & 5.00$\times$ & 78.83      \\
            ToCa       & 38.52  & 3.76$\times$ & 7006  & 4.25$\times$ & 78.86      \\
            DuCa       & 31.69  & 4.58$\times$ & 6483  & 4.62$\times$ & 78.72      \\
            TaylorSeer($\mathcal{N}=6, \mathcal{O}=1$) & 31.69  & 4.58$\times$ & 5359  & 5.56$\times$ & 79.78      \\
            \rowcolor{gray!20}
            \textbf{HiCache($\mathcal{N}=6, \mathcal{O}=1$)}    & 31.71  & 4.58$\times$ & 5359  & 5.56$\times$ & \textbf{79.89} \\
            \midrule
            TeaCache   & 26.61  & 5.45$\times$ & 5359  & 5.56$\times$ & 78.32      \\
            TaylorSeer($\mathcal{N}=7, \mathcal{O}=1$) & 28.82  & 5.03$\times$ & 4795  & 6.21$\times$ & 79.40      \\
            \rowcolor{gray!20}
            \textbf{HiCache($\mathcal{N}=7, \mathcal{O}=1$)}    & 28.83  & 5.03$\times$ & 4795  & 6.21$\times$ & \textbf{79.51} \\
            \midrule
            TaylorSeer($\mathcal{N}=7, \mathcal{O}=2$) & 28.83    &  5.03$\times$          & 4795  & 6.21$\times$ & 79.28      \\
            \rowcolor{gray!20}
            \textbf{HiCache($\mathcal{N}=7, \mathcal{O}=2$)}    & 28.84    &  5.03$\times$       & 4795  & 6.21$\times$ & \textbf{79.65} \\
            \bottomrule
        \end{tabular}%
    }
\end{minipage}
\hfill
	\begin{minipage}[t]{0.55\textwidth}
	    % Align caption with left edge (consistent with footnote)
	    \vspace{-0.5 cm}\captionsetup{justification=raggedright, singlelinecheck=false, aboveskip=2pt, belowskip=0pt, hypcap=false}
	    \captionof{table}{\textbf{Ablation study with FLUX.1-dev.} $\sigma=1.0$ means no contraction factor is applied. Hi-Taylor indicates TaylorSeer upgraded with HiCache.}\vspace{0.1cm}
	    \label{tab:hermite_scale_ablation}
	    \resizebox{\linewidth}{!}{%
    \begin{tabular}{@{}lccccc@{}}
    \toprule
    \textbf{Method} & \textbf{Speed $\uparrow$} & {\textbf{Image Reward} $\uparrow$} & \textbf{PSNR $\uparrow$} & \textbf{SSIM $\uparrow$} & \textbf{LPIPS $\downarrow$} \\
    \midrule
    HiCache ($N=7,  \, \sigma$=0.4) & 5.55$\times$ & 0.9683 & 29.058 & 0.6612 & 0.3914 \\
    \rowcolor{gray!20} HiCache ($N=7,  \, \sigma$=0.5) & 5.55$\times$ & \textbf{0.9979} & 28.937 & 0.6572 & 0.3982 \\
    HiCache ($N=7,  \, \sigma$=0.7) & 5.55$\times$ & 0.9623 & 28.651 & 0.6268 & 0.4479 \\
    HiCache ($N=7, \, \sigma$=1.0) & 5.55$\times$ & 0.7586 & 28.087 & 0.3682 & 0.7208 \\
    \rowcolor{gray!20} Hi-Taylor ($N=7, \sigma$=0.5)$^{\dagger}$ & 5.55$\times$ & \textbf{0.9624} & 29.080 & 0.6571 & 0.3998 \\
    \midrule
    \rowcolor{gray!20} TaylorSeer ($N=7$) & 5.55$\times$ & \underline{0.9572} & 28.634 & 0.6237 & 0.4520 \\
    FORA ($N=7$) & 5.55$\times$ & 0.7418 & 28.315 & 0.5871 & 0.5409 \\
    ToCa ($N=13, N=95\%$) & 5.77$\times$ & 0.7155 & 28.583 & 0.5677 & 0.5498 \\
    DuCa ($N=12, N=95\%$) & 6.13$\times$ & 0.8382 & 28.947 & 0.5957 & 0.4935 \\
    \bottomrule
    \end{tabular}%
    }
\end{minipage}
\vspace{-0.2cm}
\section{Experiments}
\vspace{-0.2cm}
\label{sec:experiments}

    % Experiment setup content to be added

 % ====================================================================
% ====================================================================
%  第二幕: 4.2 实践与分析 (整合了所有真实任务 + 消融实验)
% ====================================================================
    \subsection{Experiment Settings}
We conduct comprehensive experiments across four representative generative tasks using state-of-the-art diffusion models: text-to-image generation with $\text{FLUX}$.1-dev~\citep{esser2024scaling}, text-to-video generation with $\text{HunyuanVideo}$~\citep{kong2024hunyuanvideo}, class-conditional image generation with $\text{DiT}$-XL/2~\citep{peebles2023dit}, and image super-resolution with a modified $\text{Inf-DiT}$~\citep{yang2024infdit}. Each task employs standard evaluation protocols and widely-adopted benchmarks to ensure fair comparison with existing methods. More detailed information is provided in Appendix~\ref{appendix:experiment_setup}.
\vspace{-0.3cm}
\subsection{Experiment Results}

\noindent\textbf{Text-to-Image Generation.} Table~\ref{table:FLUX-Metrics} shows HiCache maintains superior quality at high acceleration ratios. At $5.55\times$ speedup ($\mathcal{N}=7, \mathcal{O}=2$), its ImageReward ($0.9979$) exceeds both competitors and the unaccelerated baseline ($0.9872$). When integrated with ClusCa by \citep{zheng2025clusca}, HiCache substantially improves ImageReward: from $0.9480$ to $0.9840$ ($\mathcal{N}=7$) and from $0.8440$ to $0.8860$ ($\mathcal{N}=9$). Figures~\ref{fig:qualitative_main_comparison} and~\ref{fig:quality_robustness_analysis}(a) demonstrate HiCache's visual advantages. As shown in Figure~\ref{fig:quality_robustness_analysis}(b), HiCache maintains stability across acceleration factors from $1.00\times$ to $9.00\times$, while other feature caching methods exhibit significant degradation in high acceleration ratios.

% --- 必要的宏包 (请确保您的导言区有这些) ---
% \usepackage{booktabs} % 用于 \toprule, \midrule, \bottomrule
% \usepackage{pifont}   % 用于 \ding{51} (对勾)
% \usepackage[table]{xcolor} % 用于 \rowcolor 和 \textcolor

\begin{table*}[t]
  \centering
  % Improve spacing around the caption for readability
  \captionsetup{aboveskip=2pt, belowskip=8pt}
\caption{Quantitative comparison on the text-to-image generation task with the FLUX.1-dev model. We also report Hi-ClusCa (ClusCa + HiCache) to highlight plug-and-play generality.}
  \label{table:FLUX-Metrics}
  \resizebox{\textwidth}{!}{%
  \begin{tabular}{l c c c c c c c c}
    \toprule
    % --- 单行表头 ---
    \textbf{Method} & \textbf{Latency(s) $\downarrow$} & \textbf{Speed $\uparrow$} & \textbf{FLOPs(T) $\downarrow$} & \textbf{Speed $\uparrow$} & \textbf{Image Reward $\uparrow$} & \textbf{PSNR $\uparrow$} & \textbf{SSIM $\uparrow$} & \textbf{LPIPS $\downarrow$}\\
    % --- 表头结束 ---
    \midrule
    FLUX.1 [dev] - 50 steps$^{\dagger}$             & 17.12 & 1.00$\times$ & 3719.50 & 1.00$\times$ & 0.9872 & $\infty$ & 1.0000 & 0.0000 \\
    FLUX.1 [dev] - 25 steps             & 8.77  & 1.95$\times$ & 1859.75 & 2.00$\times$ & 0.9691 & 29.558 & 0.7327 & 0.3115 \\
    FLUX.1 [dev] - 20 steps             & 7.07  & 2.42$\times$ & 1487.80 & 2.62$\times$ & 0.9487 & 29.122 & 0.6983 & 0.3599 \\
    FLUX.1 [dev] - 17 steps             & 6.10  & 2.81$\times$ & 1264.63 & 3.13$\times$ & 0.9147 & 28.882 & 0.6767 & 0.3912 \\
    Chipmunk\tcite{Silveria2025Chipmunk} (34\% steps)              & 12.72 & 2.02$\times$ & 1505.87 & 2.47$\times$ & 0.9936 & -- & -- & -- \\
    \midrule
    FORA ($\mathcal{N}=7$)                        & 4.22  & 4.08$\times$ & 670.44  & 5.55$\times$ & 0.7418 & 28.315 & 0.5870 & 0.5409 \\
    ToCa ($\mathcal{N}=10$, $\mathcal{N}=90\%$)               & 7.93  & 2.17$\times$ & 714.66  & 5.20$\times$ & 0.8384 & 28.761 & 0.6068 & 0.4887 \\

    TeaCache ($\ell_1=1.0$)                & 4.92  & 3.48$\times$ & 743.63  & 5.00$\times$ & 0.8379 & 28.606 & 0.6360 & 0.4773 \\
    DBCache\tcite{DBCache2025} ($\mathcal{F}=4$, $\mathcal{B}=0$, $\mathcal{W}=4$, $\mathcal{MC}=10$)     & 4.08  & 4.39$\times$ & 816.65  & 4.56$\times$ & 0.8245 & -- & -- & -- \\
    ClusCa ($\mathcal{N}=7$)                      & 4.87  & 3.52$\times$ & 674.21  & 5.52$\times$ & 0.9480 & 28.630 & 0.6210 & 0.4560 \\
    TaylorSeer ($\mathcal{N}=7$, $\mathcal{O}=2$)           & 4.84  & 3.54$\times$ & 670.44  & 5.55$\times$ & 0.9572 & 28.634 & 0.6237 & 0.4520 \\
    \textbf{Hi-ClusCa}$^{\ddagger}$ ($\mathcal{N}=7$, $\mathcal{O}=2$, $\sigma=0.5$)         & 4.87  & 3.52$\times$ & 674.21  & 5.52$\times$ & \underline{0.9840} & \textbf{28.940} & \underline{0.6560} & \underline{0.4040} \\
    \rowcolor{gray!20}
    \textbf{HiCache} ($\mathcal{N}=7$, $\mathcal{O}=2$, $\sigma=0.5$)        & 4.84  & 3.54$\times$ & 670.44  & 5.55$\times$ & \textbf{0.9979} & \underline{28.937} & \textbf{0.6572} & \textbf{0.3982} \\
    \midrule
    FORA ($\mathcal{N}=9$)                        & 3.90 & 4.42$\times$ & 596.07  & 6.24$\times$ & 0.5457 & 28.233 & 0.5613 & 0.5860 \\
    ToCa ($\mathcal{N}=12$, $\mathcal{N}=90\%$)               & 7.34  & 2.34$\times$ & 644.70  & 5.77$\times$ & 0.7155 & 28.575 & 0.5677 & 0.5500 \\

    TeaCache ($\ell_1=1.2$)                & 4.45  & 3.85$\times$ & 669.27  & 5.56$\times$ & 0.7394 & 28.131 & 0.4744 & 0.6765 \\
    DBCache\tcite{DBCache2025} ($\mathcal{F}=1$, $\mathcal{B}=0$, $\mathcal{W}=4$, $\mathcal{MC}=10$)     & 3.56  & 5.04$\times$ & 651.90  & 5.72$\times$ & 0.8796 & -- & -- & -- \\
    ClusCa ($\mathcal{N}=9$)                      & 4.53  & 3.78$\times$ & 599.93  & 6.20$\times$ & 0.8440 & 28.370 & 0.5850 & 0.5170 \\
    TaylorSeer ($\mathcal{N}=9$, $\mathcal{O}=2$)           & 4.50  & 3.80$\times$ & 596.07  & 6.24$\times$ & 0.8562 & 28.359 & 0.5882 & 0.5088 \\
    \textbf{Hi-ClusCa}$^{\ddagger}$ ($\mathcal{N}=9$, $\mathcal{O}=2$, $\sigma=0.5$)         & 4.53  & 3.78$\times$ & 599.93  & 6.20$\times$ & \underline{0.8860} & \underline{28.630} & \underline{0.6380} & \underline{0.4460} \\
    \rowcolor{gray!20}
    \textbf{HiCache} ($\mathcal{N}=9$, $\mathcal{O}=2$, $\sigma=0.5$)      & 4.50  & 3.80$\times$ & 596.07  & 6.24$\times$ & \textbf{0.9113} & \textbf{28.647} & \textbf{0.6443} & \textbf{0.4374} \\
    \midrule

    FLUX.1 [schnell]$^{\circ}$ - 2 steps                       & 1.10  & 15.59$\times$ & 148.78  & 25.00$\times$ & \underline{0.9702} & \underline{28.053} & \underline{0.4720} & \underline{0.6743} \\
    FLUX.1 [schnell] - 4 steps                       & 2.20  & 7.79$\times$ & 297.56  & 12.50$\times$ & 0.9531 & 28.050 & \textbf{0.4731} & \textbf{0.6708} \\
    \rowcolor{gray!20}
    \textbf{HiCache+ FLUX.1 [schnell]} ($\mathcal{O}=1$, $\sigma=0.3$)$^{\star}$                 & 1.10  & 15.58$\times$ & 148.81  & 24.99$\times$ & \textbf{0.9975} & \textbf{28.057} & 0.4507 & 0.6757 \\
    \bottomrule
  \end{tabular}
  } % 结束 resizebox
  \begin{flushleft} % 左对齐脚注
  \small % 使用小号字体
  $\bullet$ $^{\dagger}$ denotes the baseline (FLUX.1 [dev] - 50 steps). Best results are in \textbf{bold} and second best are \underline{underlined}.\\
  $\bullet$ $^{\ddagger}$ Hi-ClusCa: ClusCa with its Taylor predictor replaced by HiCache's Hermite predictor. \\
  $\bullet$ $^{\circ}$ FLUX.1 [schnell] is a distilled variant of FLUX.1. \\
  %$\bullet$ DBCache: an in-house block-wise caching baseline with frequency $\mathcal{F}$, block budget $\mathcal{B}$, and reuse ratio $\mathcal{R}$; implementation details are provided in the Appendix. \\
  $\bullet$ $^{\star}$ HiCache+ FLUX.1 [schnell]: Two full computation steps followed by two HiCache-accelerated steps.
  \end{flushleft}
  % Pull nearby text up slightly, but avoid crowding
  \vspace{-0.35cm}
\end{table*}

\begin{figure}[!htb]
    \centering
            \includegraphics[width=1.0\columnwidth]{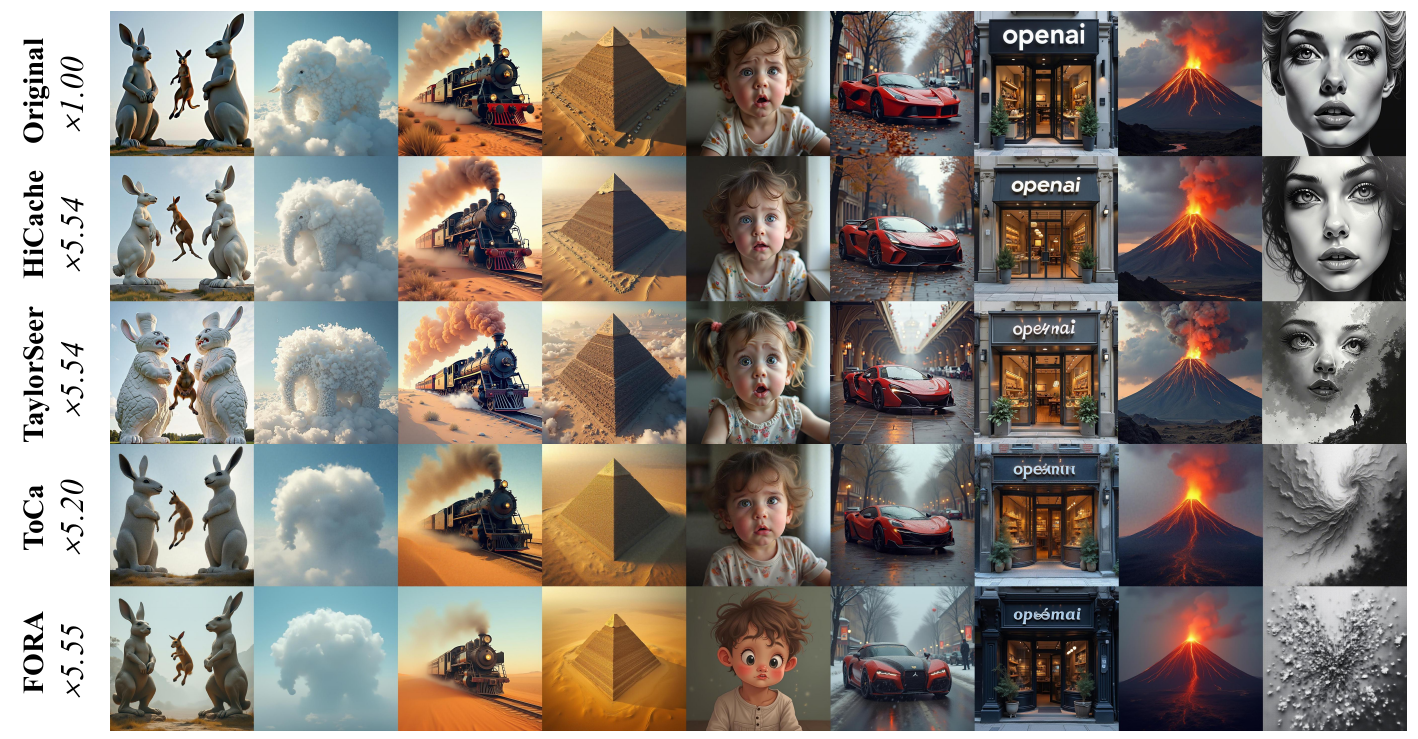}
    \vspace{-0.7cm}\caption{Qualitative comparison on the text-to-image task across diverse prompts. %HiCache (second row) consistently produces results with higher fidelity compared to competitors like TaylorSeer and other baselines, closely matching the unaccelerated Original (first row).
    }
    \label{fig:qualitative_main_comparison}
    \vspace{-0.4cm}
\end{figure}

\begin{figure}[!htb]
    \centering
    \begin{subfigure}[t]{0.95\textwidth}
        \centering
        \includegraphics[width=\textwidth]{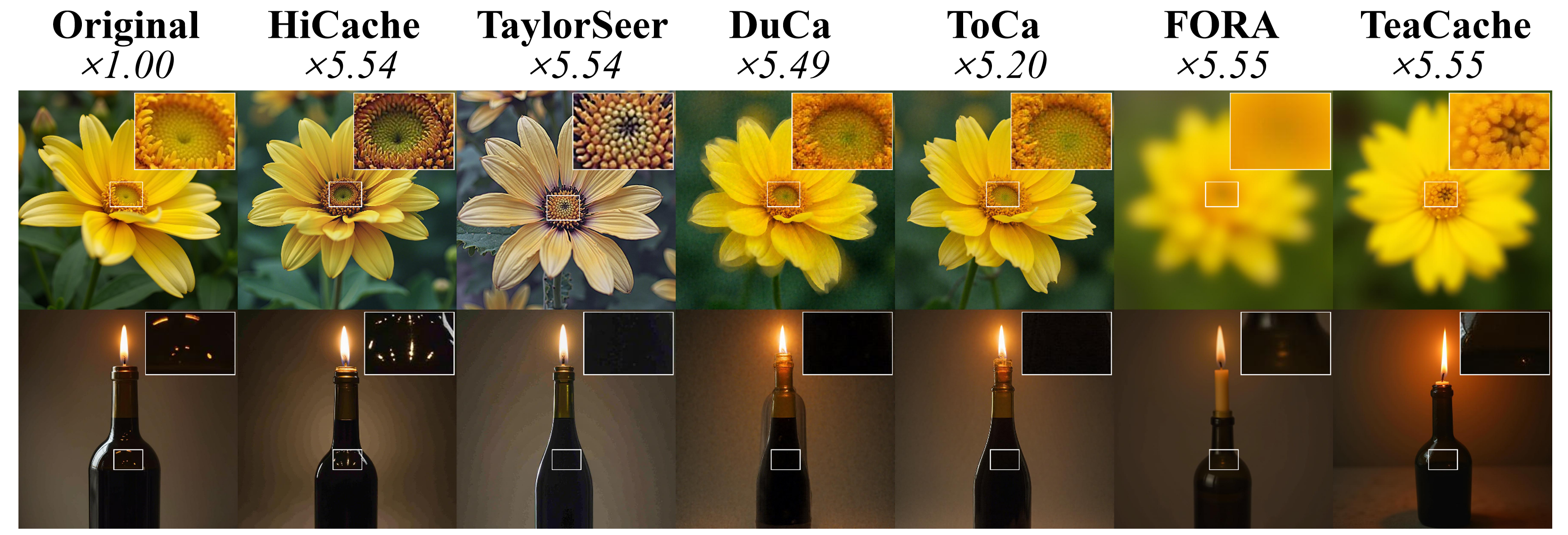}
        \caption{High-frequency detail preservation.}
        \label{fig:detail_comparison}
    \end{subfigure}
    \hfill
    \begin{subfigure}[t]{0.95\textwidth}
        \centering
        \includegraphics[width=\textwidth]{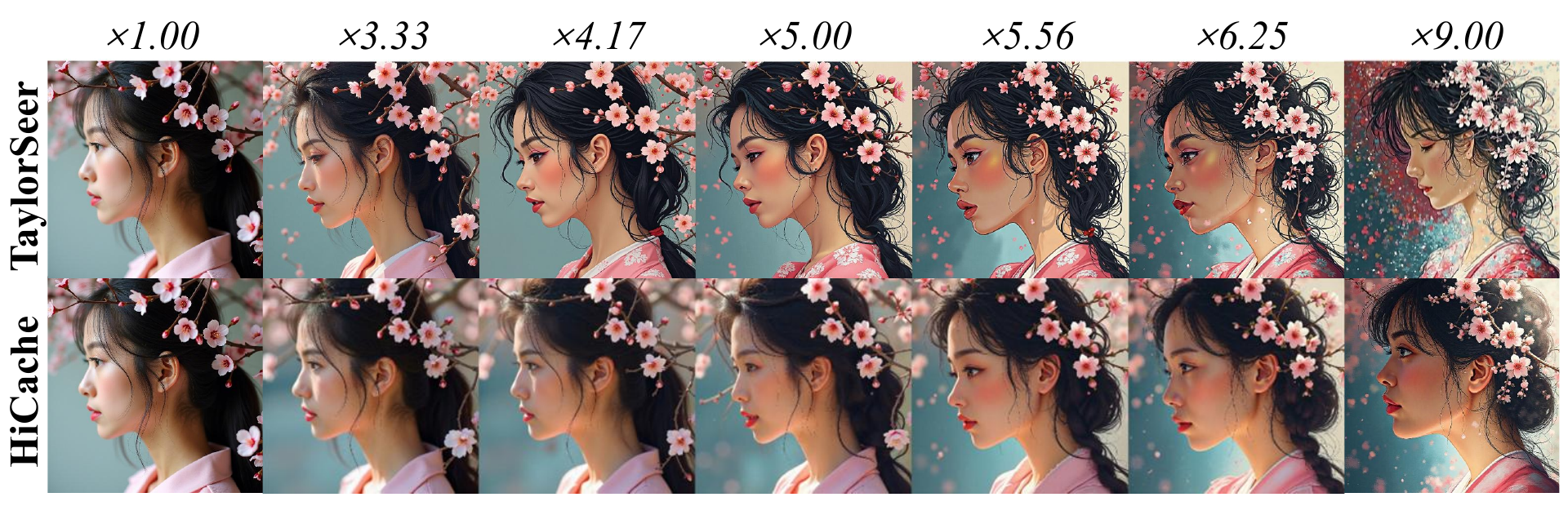}
        \caption{Consistent style and clean backgrounds.}
        \label{fig:progressive_stability_N}
    \end{subfigure}
    \vspace{-0.2 cm}
    \caption{\textbf{Detail retention and style consistency.} (a) Superior detail retention. (b) Greater stability than TaylorSeer under higher acceleration, with consistent style and clean outputs.}\vspace{-0.58cm}

\label{fig:quality_robustness_analysis}
    
\end{figure}

    \noindent\textbf{Text-to-Video Generation.} Table~\ref{tab:All-Methods-Comparison} shows HiCache outperforms recent methods. At higher orders ($\mathcal{N}=7$, $\mathcal{O}=2$), HiCache's advantage over TaylorSeer becomes more pronounced, validating our theoretical analysis. Figure~\ref{fig:video_qualitative_comparison} shows HiCache's superior temporal consistency compared to methods in terms of both temporal consistency and frame generation.

% Place Video qualitative (left) and SR metrics (right) side-by-side as two separate figures
\begin{figure*}[t]
    \centering
    \begin{minipage}[t]{0.58\textwidth}
        \centering
        \includegraphics[width=\linewidth]{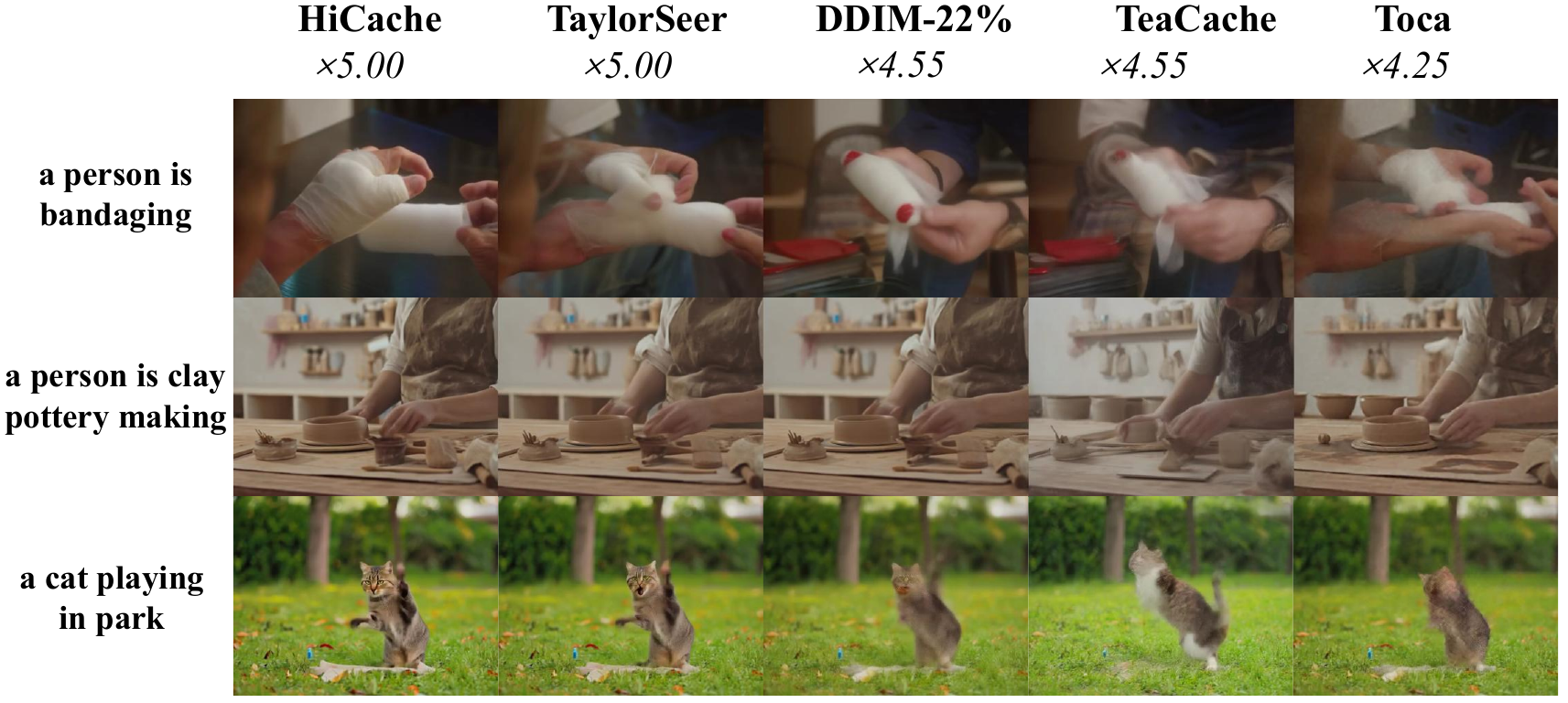}
        \captionsetup{justification=raggedright, singlelinecheck=false}
        \captionof{figure}{Qualitative comparison of text-to-video generation methods under significant acceleration.}
        \label{fig:video_qualitative_comparison}
    \end{minipage}\hfill
    \begin{minipage}[t]{0.4\textwidth}
        \centering
        \includegraphics[width=\linewidth]{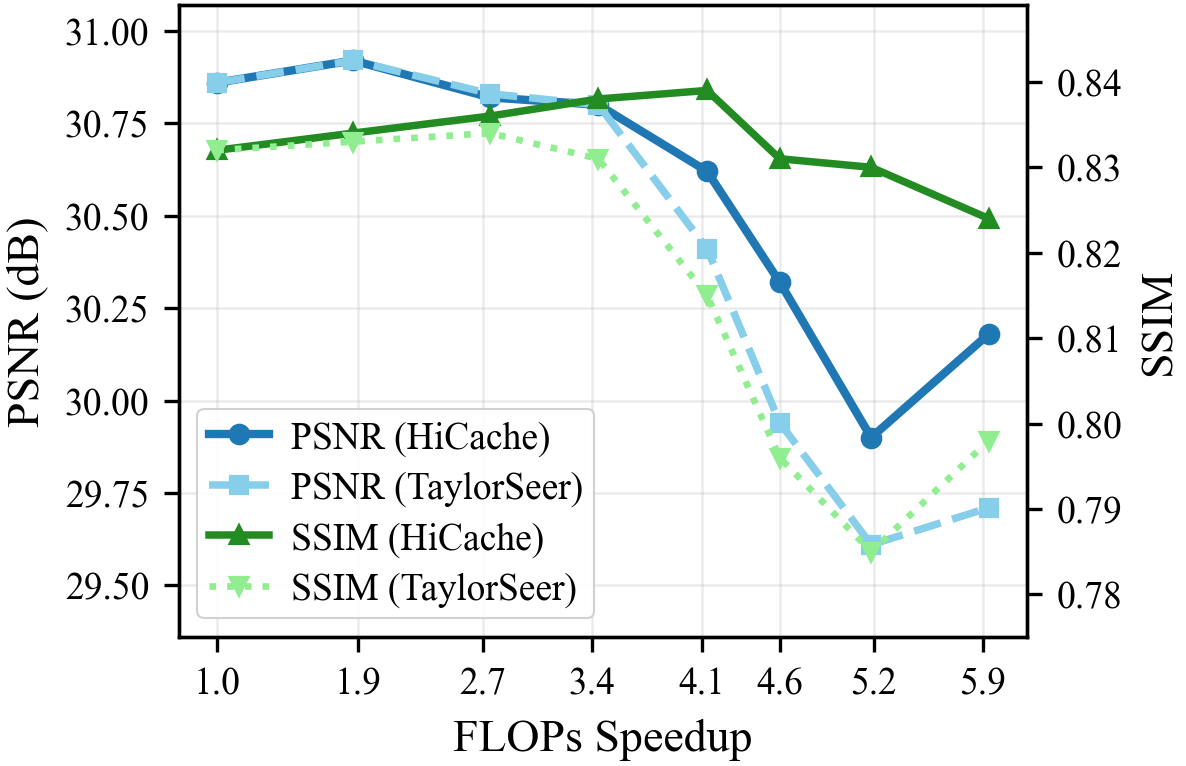}
        \captionsetup{justification=raggedright, singlelinecheck=false}
        \captionof{figure}{Comparison of HiCache and TaylorSeer on image super-resolution.}
        \label{fig:ntire_comparison}
    \end{minipage}
\end{figure*}

% --- 双栏表格代码开始 ---
% 同样, 将 [htbp] 改为 [t] 或 [b] 是一个好习惯。
\begin{table*}[t]
  \centering
  \caption{Quantitative comparison on class-to-image generation on ImageNet with DiT-XL/2.}
  \label{tab:quantitative_comparison}
  % 关键修改：将 \columnwidth (单栏宽度) 修改为 \textwidth (双栏/页面宽度)
  \resizebox{\textwidth}{!}{%
  \begin{tabular}{l c c c c c c}
    \toprule
    \textbf{Method} & \textbf{Latency(s) $\downarrow$} & \textbf{FLOPs(T) $\downarrow$} & \textbf{Speed $\uparrow$} & \textbf{FID $\downarrow$} & \textbf{sFID $\downarrow$} & \textbf{Inception $\uparrow$} \\
    \midrule
    \rowcolor{gray!20}
    DDIM-50 steps$^{\dagger}$                       & 0.995 & 23.74 & 1.00$\times$ & 2.32  & 4.32  & \textbf{241.25} \\
    DDIM-20 steps                       & 0.406 & 9.49  & 2.50$\times$ & 3.81  & 5.15  & 221.43 \\
    DDIM-8 steps                        & 0.189 & 3.80  & 6.25$\times$ & 23.13 & 19.23 & 120.58 \\
    $\Delta$-DiT\tcite{Chen2024DeltaDiT} ($\mathcal{N}=3$)                & 0.173 & 16.14 & 1.47$\times$ & 3.75  & \textemdash & 207.57 \\
    L2C\tcite{Ma2024L2C} ($\mathrm{NFE}=30$)                       & 0.281 & 11.55 & 2.05$\times$ & 2.61  & \textemdash & 237.83 \\
    SmoothCache\tcite{Liu2025SmoothCache} ($\alpha=0.22$)                 & 0.251 & 8.57  & 2.77$\times$ & 4.15  & \textemdash & 231.71 \\
    \midrule
    FORA ($\mathcal{N}=6$)                        & 0.427 & 5.24  & 4.98$\times$ & 9.24  & 14.84 & 171.33 \\
    ToCa ($\mathcal{N}=9$, $\mathcal{N}=95\%$)                & 1.016 & 6.34  & 3.75$\times$ & 6.55  & 7.10  & 189.53 \\
    TaylorSeer ($\mathcal{N}=6$, $\mathcal{O}=4$)             & 0.210 & 4.76  & 4.98$\times$ & \underline{3.11}  & \underline{6.35}  & \textbf{223.85} \\
    \rowcolor{gray!20}
    \textbf{HiCache ($\mathcal{N}=6$, $\mathcal{O}=4$, $\sigma=0.75$)}       & 0.193 & 4.76  & 4.99$\times$ & \textbf{3.01} & \textbf{6.32} & \underline{223.68} \\
    \midrule
    FORA ($\mathcal{N}=7$)                        & 0.405 & 3.82  & 6.22$\times$ & 12.55 & 18.63 & 148.44 \\
    ToCa ($\mathcal{N}=13$, $\mathcal{N}=95\%$)               & 1.051 & 4.03  & 5.90$\times$ & 21.24 & 19.93 & 116.08 \\
    TaylorSeer ($\mathcal{N}=8$, $\mathcal{O}=4$)             & 0.190 & 3.82  & 6.22$\times$ & \underline{4.40}  & \underline{7.34}  & \textbf{205.00} \\
    \rowcolor{gray!20}
    \textbf{HiCache ($\mathcal{N}=8$, $\mathcal{O}=4$, $\sigma=0.75$)}       & 0.172 & 3.82  & 6.21$\times$ & \textbf{4.33}  & \textbf{7.01}  & \textbf{205.00} \\
    \midrule
    FORA ($\mathcal{N}=8$, $\mathcal{N}=95\%$)                & 0.405 & 3.34  & 7.10$\times$ & 15.31 & 21.91 & 136.21 \\
    ToCa ($\mathcal{N}=13$, $\mathcal{N}=98\%$)               & 1.033 & 3.66  & 6.48$\times$ & 22.18 & 20.68 & 110.91 \\
    TaylorSeer ($\mathcal{N}=9$, $\mathcal{O}=4$)             & 0.179 & 3.34  & 7.10$\times$ & \underline{4.58}  & \underline{7.30}  & \underline{201.12} \\
    \rowcolor{gray!20}
    \textbf{HiCache ($\mathcal{N}=9$, $\mathcal{O}=4$, $\sigma=0.75$)}       & 0.161 & 3.34  & 7.11$\times$ & \textbf{4.31} & \textbf{6.86} & \textbf{203.49} \\
    \bottomrule
  \end{tabular}
  } % 结束 resizebox
  \begin{flushleft} % 左对齐脚注
  \small % 使用小号字体
  $\bullet$ $^{\dagger}$ denotes the baseline (DDIM-50 steps). Best results in each block are in \textbf{bold} and second best are \underline{underlined}.
  \end{flushleft}
  % Pull nearby paragraph up slightly to fit layout tightly
  \vspace{-0.5 cm}
\end{table*} % 关键修改：将 table 修改为 table* 来实现双栏效果
% --- 双栏表格代码结束 ---

\noindent\textbf{Class-Conditional Image Generation}
Table~\ref{tab:quantitative_comparison} shows that HiCache outperforms recent accelerators on ImageNet with $\text{DiT}$-XL/2 across speedups, with larger relative gains compared with TaylorSeer at higher acceleration—about $3.2\%$ at $\sim5.0\times$, $4.5\%$ sFID at $\sim6.2\times$, and $5.9\%/6.0\%$ FID/sFID at $\sim7.1\times$—while remaining close to the unaccelerated baseline under aggressive acceleration.

\noindent\textbf{Super-Resolution.}
HiCache achieves $\sim5.93\times$ theoretical and $\sim2.43\times$ wall-clock speedup on image super-resolution with $\text{Inf-DiT}$~\citep{yang2024infdit}, following $\text{NTIRE}$ 2025 protocol~\citep{chen2025ntire}. Figure~\ref{fig:ntire_comparison} shows consistent improvements over TaylorSeer in both $\text{PSNR}$ and $\text{SSIM}$. At this acceleration, the restoration quality degrades only mildly: $\text{PSNR}$ drops by $\approx2.24\%$ (from $30.87$ to $30.18$) and $\text{SSIM}$ by $\approx1.44\%$ (from $0.832$ to $0.820$) relative to the interval$=1$ baseline. Detailed latency/FLOPs and restoration metrics for each interval are provided in Table~\ref{tab:ntire_comparison}.

    \subsection{Ablation Study}

Table~\ref{tab:hermite_scale_ablation} dissects our contributions. Without scaling ($\sigma=1.0$), HiCache underperforms TaylorSeer due to numerical instability. With optimal scaling ($\sigma=0.5$), performance improves dramatically. The \emph{Hi-Taylor} variant (TaylorSeer with HiCache's dual-scaling) validates that both our scaling mechanism and Hermite basis contribute independently to HiCache's superiority.

% (Ablation table merged alongside VBench in a side-by-side layout above)

\section{Discussion}
\label{sec:discussion}

\paragraph{Empirical Validation of Proposition~\ref{prop:gaussianity}}

We validate the Gaussianity of finite differences posited in Proposition~\ref{prop:gaussianity}. Using energy tests~\citep{szekely2005new} across 1st to 5th order feature differences in five key $\text{FLUX}$ modules~\citep{esser2024scaling}, the tests confirmed Gaussianity with maximum confidence (p-value = 1.0) across all 25 configurations, providing strong empirical support for our theoretical framework and, in turn, Corollary~\ref{cor:optimal_basis} which motivates Hermite polynomials for Gaussian-correlated processes.

The appendix provides theoretical support through two complementary mechanisms: (i) local linearization yielding conditional Gaussianity for small steps, with residual controlled by second-order bounds, and (ii) aggregation effects invoking multivariate CLT with Berry-Esseen convergence rates. These mechanisms align with transformer architectures' residual connections and normalization layers that maintain required regularity. Moreover, the robustness analysis shows Hermite bases retain advantages even under approximate Gaussianity, with efficiency degrading at most $O(\epsilon)$ for $\epsilon$-perturbations from exact Gaussian (Appendix~\ref{appendix:theoretical_analysis}, Proposition~\ref{prop:robustness}). This theoretical foundation, combined with empirical energy test validation, justifies modeling finite differences as approximately Gaussian and motivates the Hermite basis choice in Corollary~\ref{cor:optimal_basis}.

\paragraph{Prediction Simulation Framework}

To fairly evaluate HiCache against Taylor expansion, we established a simulation framework using real feature trajectories from the $\text{FLUX}$ model with activation interval $\ninterval=6$, where derivative approximations are computed using sparsely sampled data from previous activation steps.
Our simulation eliminates cumulative error by initializing each prediction task with ground-truth history. This isolates cascading inaccuracies and enables fair quantitative comparison of Hermite and Taylor bases.

\paragraph{Empirical Verification of Oscillatory Advantages}

Appendix Figure~\ref{fig:trajectory_prediction} provides empirical evidence for the theoretical advantages of Hermite polynomials' oscillatory properties (discussed in Section 3.3.2). Using the non-cumulative error framework on representative $\text{FLUX}$ features, we observe that while both methods perform similarly at order 1, Taylor expansion increasingly over-extrapolates at higher orders, especially at trajectory turning points—precisely confirming our intuition about monomial limitations. HiCache's oscillatory Hermite basis, stabilized by the contraction factor $\sigma$ (Definition~\ref{def:scaled_hermite}), maintains accurate fitting across all orders (see Appendix~\ref{appendix:trajectory_prediction} for details).

\paragraph{Quantitative Validation of Error Bounds}

To empirically verify the tighter error bounds discussed in our methodology, we quantitatively compared HiCache against Taylor expansion using the simulation framework. We define the relative error ratio as $R = \nicefrac{\text{Taylor MSE}}{\text{Hermite MSE}}$ (where $R > 1$ indicates superior HiCache performance) and analyzed 25 configurations (5 FLUX modules x 5 polynomial orders); Figure~\ref{fig:error_analysis} summarizes the cumulative error ratios.

\begin{figure}[htbp]
    \centering
    \includegraphics[width=\textwidth]{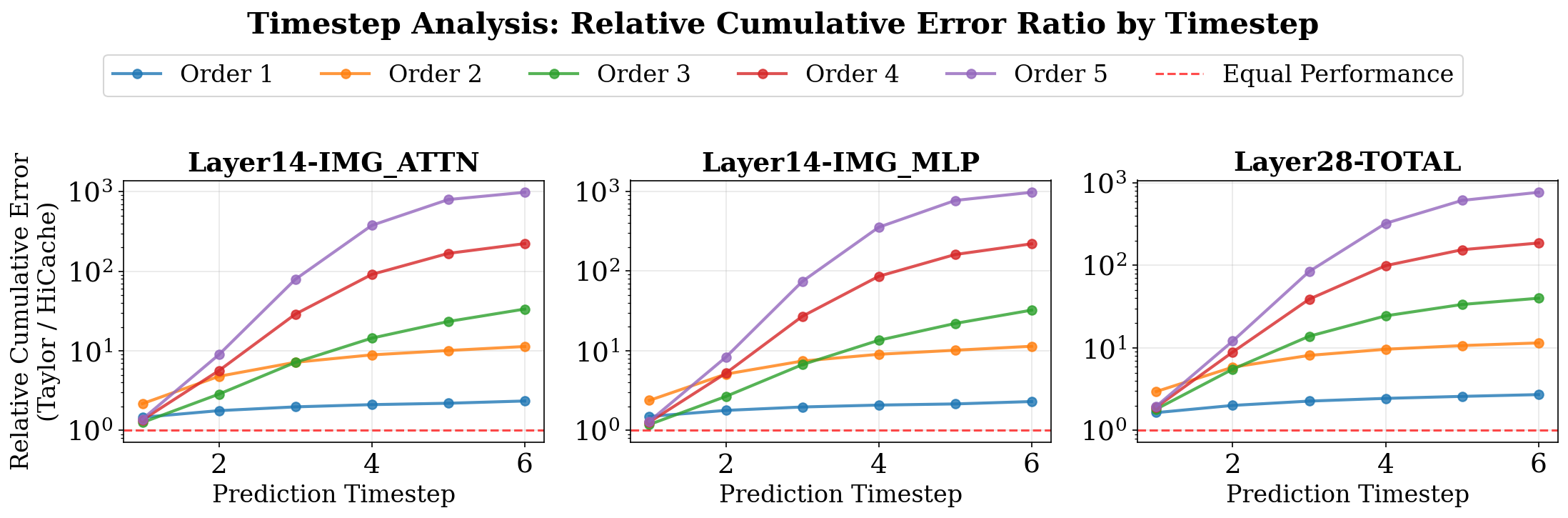}
    \caption{Cumulative error ratios (Taylor MSE / HiCache MSE) across 5 $\text{FLUX}$ modules. }
    \label{fig:error_analysis}
\end{figure}

The results empirically confirm HiCache's consistent advantage ($R > 1.0$ across all modules), with the performance gap widening for larger prediction steps and higher polynomial orders. These results confirm HiCache's consistent advantage in our parameter regime. In particular, when the contraction condition $\sigma\sqrt{2}\cdot|\Delta s| < 1$ holds and for moderate $N$, the empirical truncation behavior aligns with our conditional analysis in Section~\ref{sec:method}.

\FloatBarrier
\section{Conclusion}

We introduce HiCache, a training‑free acceleration strategy that overcomes the core limits of Taylor‑series feature caching in diffusion models. Grounded in the empirical observation that feature derivatives follow multivariate Gaussian statistics, we replace monomials with Hermite polynomials—the Karhunen–Loève–optimal basis under Gaussian correlation—and pair them with a simple dual‑scaling mechanism that stabilizes coefficients while retaining the oscillatory behavior needed to model non‑monotonic trajectories. Across text‑to‑image, text‑to‑video, and super‑resolution benchmarks, HiCache consistently matches or surpasses prior methods, delivering large speedups without visible quality loss. Because it only swaps the predictive basis, HiCache drops into existing cache‑then‑forecast pipelines and requires no retraining or architectural changes, offering a principled and practical path to fast diffusion sampling.

    \clearpage  % 强制清空所有浮动体, 确保figure在参考文献前面

    \section*{Ethics Statement}
    This work does not introduce new data collection or involve human subjects. All datasets used are publicly available under their original licenses; no personally identifiable or sensitive information is processed. Generated content may reflect biases present in the underlying datasets; we report standard evaluation metrics and release code to facilitate auditing and reproducibility. By reducing the number of expensive model evaluations, HiCache lowers the compute and energy consumption relative to baselines. We discourage misuse of accelerated generative models for deceptive or harmful purposes and will release our implementation with appropriate usage guidelines.

    \makeatletter
\@ifundefined{isChecklistMainFile}{
  % We are compiling a standalone document
  \newif\ifreproStandalone
  \reproStandalonetrue
}{
  % We are being \input into the main paper
  \newif\ifreproStandalone
  \reproStandalonefalse
}
\makeatother

\ifreproStandalone
\documentclass[letterpaper]{article}
\usepackage[submission]{aaai2026}
\setlength{\pdfpagewidth}{8.5in}
\setlength{\pdfpageheight}{11in}
\usepackage{times}
\usepackage{helvet}
\usepackage{courier}
\usepackage{xcolor}
\frenchspacing

\begin{document}
\fi
\setlength{\leftmargini}{20pt}
\makeatletter\def\@listi{\leftmargin\leftmargini \topsep .5em \parsep .5em \itemsep .5em}
\def\@listii{\leftmargin\leftmarginii \labelwidth\leftmarginii \advance\labelwidth-\labelsep \topsep .4em \parsep .4em \itemsep .4em}
\def\@listiii{\leftmargin\leftmarginiii \labelwidth\leftmarginiii \advance\labelwidth-\labelsep \topsep .4em \parsep .4em \itemsep .4em}\makeatother

\setcounter{secnumdepth}{0}
\renewcommand\thesubsection{\arabic{subsection}}
\renewcommand\labelenumi{\thesubsection.\arabic{enumi}}

\newcounter{checksubsection}
\newcounter{checkitem}[checksubsection]

\newcommand{\checksubsection}[1]{%
  \refstepcounter{checksubsection}%
  \paragraph{\arabic{checksubsection}. #1}%
  \setcounter{checkitem}{0}%
}

\newcommand{\checkitem}{%
  \refstepcounter{checkitem}%
  \item[\arabic{checksubsection}.\arabic{checkitem}.]%
}
\newcommand{\question}[2]{\normalcolor\checkitem #1 #2 \color{blue}}
\newcommand{\ifyespoints}[1]{\makebox[0pt][l]{\hspace{-15pt}\normalcolor #1}}

\section*{Reproducibility Checklist}

% Questions start here

\checksubsection{General Paper Structure}
\begin{itemize}

\question{Includes a conceptual outline and/or pseudocode description of AI methods introduced}{(yes/partial/no/NA)}
yes

\question{Clearly delineates statements that are opinions, hypothesis, and speculation from objective facts and results}{(yes/no)}
yes

\question{Provides well-marked pedagogical references for less-familiar readers to gain background necessary to replicate the paper}{(yes/no)}
yes

\end{itemize}
\checksubsection{Theoretical Contributions}
\begin{itemize}

\question{Does this paper make theoretical contributions?}{(yes/no)}
yes

	\ifyespoints{\vspace{1.2em}If yes, please address the following points:}
        \begin{itemize}
	
	\question{All assumptions and restrictions are stated clearly and formally}{(yes/partial/no)}
	yes

	\question{All novel claims are stated formally (e.g., in theorem statements)}{(yes/partial/no)}
	yes

	\question{Proofs of all novel claims are included}{(yes/partial/no)}
	yes

	\question{Proof sketches or intuitions are given for complex and/or novel results}{(yes/partial/no)}
	yes

	\question{Appropriate citations to theoretical tools used are given}{(yes/partial/no)}
	yes

	\question{All theoretical claims are demonstrated empirically to hold}{(yes/partial/no/NA)}
	yes

	\question{All experimental code used to eliminate or disprove claims is included}{(yes/no/NA)}
	yes
	
	\end{itemize}
\end{itemize}

\checksubsection{Dataset Usage}
\begin{itemize}

\question{Does this paper rely on one or more datasets?}{(yes/no)}
yes

\ifyespoints{If yes, please address the following points:}
\begin{itemize}

	\question{A motivation is given for why the experiments are conducted on the selected datasets}{(yes/partial/no/NA)}
	yes

	\question{All novel datasets introduced in this paper are included in a data appendix}{(yes/partial/no/NA)}
	NA

	\question{All novel datasets introduced in this paper will be made publicly available upon publication of the paper with a license that allows free usage for research purposes}{(yes/partial/no/NA)}
	NA

	\question{All datasets drawn from the existing literature (potentially including authors' own previously published work) are accompanied by appropriate citations}{(yes/no/NA)}
	yes

	\question{All datasets drawn from the existing literature (potentially including authors' own previously published work) are publicly available}{(yes/partial/no/NA)}
	yes

	\question{All datasets that are not publicly available are described in detail, with explanation why publicly available alternatives are not scientifically satisficing}{(yes/partial/no/NA)}
	NA

\end{itemize}
\end{itemize}

\checksubsection{Computational Experiments}
\begin{itemize}

\question{Does this paper include computational experiments?}{(yes/no)}
yes

\ifyespoints{If yes, please address the following points:}
\begin{itemize}

	\question{This paper states the number and range of values tried per (hyper-) parameter during development of the paper, along with the criterion used for selecting the final parameter setting}{(yes/partial/no/NA)}
	yes

	\question{Any code required for pre-processing data is included in the appendix}{(yes/partial/no)}
	yes

	\question{All source code required for conducting and analyzing the experiments is included in a code appendix}{(yes/partial/no)}
	yes

	\question{All source code required for conducting and analyzing the experiments will be made publicly available upon publication of the paper with a license that allows free usage for research purposes}{(yes/partial/no)}
	yes
        
	\question{All source code implementing new methods have comments detailing the implementation, with references to the paper where each step comes from}{(yes/partial/no)}
	yes

	\question{If an algorithm depends on randomness, then the method used for setting seeds is described in a way sufficient to allow replication of results}{(yes/partial/no/NA)}
	yes

	\question{This paper specifies the computing infrastructure used for running experiments (hardware and software), including GPU/CPU models; amount of memory; operating system; names and versions of relevant software libraries and frameworks}{(yes/partial/no)}
	yes

	\question{This paper formally describes evaluation metrics used and explains the motivation for choosing these metrics}{(yes/partial/no)}
	yes

	\question{This paper states the number of algorithm runs used to compute each reported result}{(yes/no)}
	yes

	\question{Analysis of experiments goes beyond single-dimensional summaries of performance (e.g., average; median) to include measures of variation, confidence, or other distributional information}{(yes/no)}
	yes

	\question{The significance of any improvement or decrease in performance is judged using appropriate statistical tests (e.g., Wilcoxon signed-rank)}{(yes/partial/no)}
	yes

	\question{This paper lists all final (hyper-)parameters used for each model/algorithm in the paper's experiments}{(yes/partial/no/NA)}
	yes

\end{itemize}
\end{itemize}
\ifreproStandalone
\end{document}
\fi

% 总问题数：25个
% YES 回答：22个 (88%)
% NA 回答：3个 (12%)
% NO/PARTIAL 回答：0个 (0%)

    % References placed after reproducibility checklist
    \clearpage
\bibliographystyle{plainnat}
\bibliography{aaai2026}

    % 附录联编：定义宏以在附录中跳过独立文档外壳
    \def\HICACHE_APPENDIX_MAIN{}
    \clearpage
    
% This appendix file is designed to be included in the main ICLR paper
% It should NOT be compiled standalone - all package imports and style 
% definitions are handled by the main document (hicache.tex)

% Additional theorem environments needed for the appendix
% (These supplement the ones already defined in the main document)
% \newtheorem{lemma}{Lemma} % 已在主文档中定义
\newtheorem{remark}{Remark}

% Note: Code listing functionality is handled by the main document
% which includes the listings package with appropriate settings

% Note: All custom commands (\feat, \predfeat, etc.) are already defined
% in the main document, so we don't redefine them here to avoid conflicts

\appendix

\section{AI Assistance Disclosure}
\label{appendix:ai_disclosure}

In the preparation of this manuscript, the authors utilized Large Language Models (LLMs) to assist with LaTeX formatting and writing refinement. Specifically, LLMs were employed to help with document structure optimization, mathematical notation consistency, and language polishing to improve clarity and readability. All technical content, experimental results, theoretical contributions, and scientific claims remain entirely the work of the human authors. The use of AI tools was limited to formatting assistance and stylistic improvements, without generating any substantive scientific content or analysis.

\section{Hermite Polynomials: Mathematical Foundations}

We adopt the physicist's Hermite polynomials $H_{n}(x)$, whose standard
differential form (Rodrigues formula) is:
\begin{equation}
    H_{n}(x) = (-1)^{n} e^{x^2}\frac{d^{n}}{dx^{n}}e^{-x^2}\label{eq:hermite_rodrigues_appendix}
\end{equation}

The Gaussian kernel $e^{-x^2}$ in this formula directly embodies the
connection with Gaussian measure. From this differential form, we can derive
the recurrence relation:
\begin{equation}
    H_{n+1}(x) = 2x H_{n}(x) - 2n H_{n-1}(x) \label{eq:hermite_recurrence_appendix}
\end{equation}
where $H_{0}(x)=1$ and $H_{1}(x)=2x$. The first few polynomials are:
\begin{align}
    H_{0}(x) & = 1 \nonumber                            \\
    H_{1}(x) & = 2x \nonumber                           \\
    H_{2}(x) & = 4x^{2} - 2 \label{eq:hermite_examples} \\
    H_{3}(x) & = 8x^{3} - 12x \nonumber                 \\
    H_{4}(x) & = 16x^{4} - 48x^{2} + 12 \nonumber
\end{align}

\section{Detailed Algorithm Description}

HiCache algorithm operates through two main phases. Central to this is the
concept of the derivative approximation, $\derivapprox{k}\feat$, which
approximates the $k$-th order derivative of a feature $\feat$ at a given
timestep. As defined in Eq.~\eqref{eq:derivative_approx}, it is computed
recursively from lower-order approximations at previous activation steps.

\textbf{Phase 1: Cache Update (at activation steps $t_{i}$ where
$t_{i} \mod \ninterval = 0$)}

This phase uses finite differences to compute high-order derivative
approximations and updates the cache. The interval between adjacent activation
steps is $\Delta t_{\text{hist}}= \ninterval$.

\begin{algorithm}
    \caption{Cache Update with Derivative Approximations}
    \begin{algorithmic}
        [1] \REQUIRE Current feature $\feat(t_{i})$, previous activation
        step $t_{i-1}$, previous cache $\{\derivapprox{k}\feat(t_{i-1})\}_{k=0}
        ^{N_{order}}$, interval $\ninterval$ \ENSURE Updated cache
        $\{\derivapprox{k}\feat(t_{i})\}_{k=0}^{N_{order}}$ \STATE
        $\Delta t_{hist}\leftarrow t_{i} - t_{i-1}$ \COMMENT{Usually equals $\ninterval$}
        \STATE $NewCache[0] \leftarrow \feat(t_{i})$ \COMMENT{Zeroth-order is the feature itself}
        \FOR{$k = 0$ to $N_{order}- 1$} \STATE \COMMENT{Compute $(k+1)$-th order derivative approximation}
        \STATE
        $\derivapprox{k+1}\feat(t_{i}) \leftarrow (NewCache[k] - \derivapprox
        {k}\feat(t_{i-1})) / \Delta t_{hist}$
        \STATE $NewCache[k+1] \leftarrow \derivapprox{k+1}\feat(t_{i})$
        \ENDFOR \RETURN $NewCache$
    \end{algorithmic}
\end{algorithm}

\textbf{Phase 2: Feature Prediction (at non-activation steps $t$ where
$t \mod \ninterval \neq 0$)}

This phase performs efficient prediction based on cached derivative
approximations and scaled Hermite polynomials. The prediction step $\Delta t$
ranges from $[1, \ninterval-1]$.

\begin{algorithm}
    \caption{Feature Prediction}
    \begin{algorithmic}
        [1] \REQUIRE Current step $t$, most recent activation step
        $t_{last}$, cache $\{\derivapprox{k}\feat(t_{last})\}_{k=0}^{N_{order}}$,
        contraction factor $\sigma$ \ENSURE Predicted feature $\predfeat$
        \STATE $\Delta t \leftarrow t_{last} - t$ \COMMENT{Distance from last activation, $1 \leq \Delta t < \ninterval$}
        \STATE $\predfeat \leftarrow \derivapprox{0}\feat(t_{last})$ \COMMENT{Initialize with zeroth-order term}
        \FOR{$k = 1$ to $N_{order}$} \STATE \COMMENT{Compute scaled Hermite polynomial value}
        \STATE $h_{k} \leftarrow \hermitepoly{k}(-\Delta t)$ \COMMENT{Or equivalently: $h_k \leftarrow (-1)^k \cdot \hermitepoly{k}(\Delta t)$}
        \STATE $\predfeat \leftarrow \predfeat + \frac{h_{k}}{k!}\cdot \derivapprox
        {k}\feat(t_{last})$ \COMMENT{Accumulate} \ENDFOR \RETURN $\predfeat$
    \end{algorithmic}
\end{algorithm}

where Hermite polynomials are efficiently computed through the recurrence
relation:
\begin{itemize}
    \item $H_{0}(x) = 1$

    \item $H_{1}(x) = 2x$

    \item $H_{k+1}(x) = 2x H_{k}(x) - 2k H_{k-1}(x)$ for $k \geq 1$
\end{itemize}

This algorithm design balances computational efficiency with prediction
accuracy. Key parameters include:
\begin{itemize}
    \item \textbf{Interval parameter $\ninterval \in \mathbb{N}^{+}$}: Controls
        activation frequency, with larger values providing higher
        acceleration ratio but potentially increased prediction error

    \item \textbf{Contraction factor $\sigma \in (0, 1)$}: Stabilization parameter
        that ensures numerical stability of Hermite polynomials within the prediction
        interval

    \item \textbf{Maximum order $N_{\text{order}}\in \mathbb{N}^{+}$}: Expansion
        order that balances approximation accuracy and computational complexity,
        with practical implementations typically using
        $N_{\text{order}}\leq 4$
\end{itemize}

\section{Complete Error Analysis and Theoretical Guarantees}
\label{appendix:error_analysis}

This section provides rigorous proofs for the error bounds stated in Proposition 1 (Error Bounds for HiCache) in the main paper. We assume the feature evolution trajectory $f: [0, T] \to \mathcal{F}$ is sufficiently smooth with bounded derivatives up to order $N_{\text{order}} + 1$.

The total prediction error decomposes into three components:
\begin{equation}
    \mathbf{E}_{\text{total}}= \mathbf{E}_{\text{truncation}}+ \mathbf{E}_{\text{approximation}} + \mathbf{E}_{\text{numerical}}
\end{equation}

\subsection{Truncation Error Analysis}

Based on the Karhunen-Loève theorem and validated Gaussian properties, the feature function $f(s)$ can be expanded under the weight function $w(x) = e^{-(\sigma x)^2}$ as:
\begin{equation}
    f(s_{\text{last}}+ \Delta s) = \sum_{k=0}^{\infty}a_{k} \frac{\hermitepoly{k}(\Delta s)}{\sqrt{2^{k} k! \sqrt{\pi}}}
    \label{eq:hermite_expansion}
\end{equation}
where the coefficients $a_{k} = \langle f, \hermitepoly{k}\rangle_{w}$ possess rapid decay properties.

\begin{lemma}[Truncation Error Bound]
\label{lemma:truncation_error}
The truncation error from using order $N_{\text{order}}$ satisfies:
\begin{equation}
    \mathbf{E}_{\text{truncation}} \leq C_{2} \frac{(\sigma \sqrt{2}|\Delta s|)^{N_{\text{order}}+1}}{\sqrt{(N_{\text{order}}+1)!}} \exp\left(\frac{(\sigma\Delta s)^{2}}{2}\right)
    \label{eq:truncation_bound}
\end{equation}
\end{lemma}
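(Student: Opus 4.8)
\noindent\textit{Proof sketch (plan).}
The plan is to identify $\mathbf{E}_{\text{truncation}}$ with the tail of the orthogonal expansion in Eq.~\eqref{eq:hermite_expansion} and then bound that tail by its leading term. First I would write
\[
\mathbf{E}_{\text{truncation}} = \left\| \sum_{k=N_{\text{order}}+1}^{\infty} a_k\, \frac{\hermitepoly{k}(\Delta s)}{\sqrt{2^{k} k! \sqrt{\pi}}} \right\|,
\]
so that the problem splits into two independent estimates: the decay of the coefficients $a_k = \langle f, \hermitepoly{k}\rangle_w$, and the size of the normalized scaled Hermite functions at the fixed horizon $\Delta s \in [1,\ninterval-1]$.

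For the Hermite factor I would separate the degree-$k$ polynomial $\hermitepoly{k}(\Delta s) = \sigma^{k} H_{k}(\sigma \Delta s)$ into its leading monomial and its remaining lower-degree oscillatory part. The leading term of $H_k$ is $2^{k} x^{k}$, so after dividing by the orthonormalizing factor $\sqrt{2^{k} k! \sqrt{\pi}}$ the dominant contribution carries precisely the shape $(\sigma\sqrt{2}\,|\Delta s|)^{k}/\sqrt{k!}$ appearing in Eq.~\eqref{eq:truncation_bound}; the square-root factorial is the signature of the $L^{2}$-orthonormal Hermite basis, and is exactly what sharpens Taylor's $1/(N+1)!$ to the $1/\sqrt{(N+1)!}$ rate advertised in Corollary~\ref{cor:superiority}. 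The residual oscillatory terms I would control uniformly with Cram\'er's inequality, $|H_{k}(y)|\,e^{-y^{2}/2} \le K\,2^{k/2}\sqrt{k!}$ for an absolute constant $K$; evaluated at $y = \sigma\Delta s$ its Gaussian envelope furnishes the factor $\exp\!\big((\sigma\Delta s)^{2}/2\big)$ in the statement.

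For the coefficients I would invoke the rapid-decay property asserted below Eq.~\eqref{eq:hermite_expansion}: under the smoothness hypothesis (bounded derivatives through order $N_{\text{order}}+1$) together with the Gaussian integrability guaranteed by Observation~\ref{obs:gaussianity}, the $a_k$ are uniformly bounded, $\sup_k |a_k| \le A$. Combining this uniform bound with the per-term Hermite estimate makes the magnitude of successive summands decrease faster than a fixed geometric ratio once $k \gtrsim (\sigma\Delta s)^{2}$, so the whole tail is dominated by a constant multiple of its leading $k = N_{\text{order}}+1$ term. Absorbing $A$, $K$, $\pi^{-1/4}$, and the geometric-sum constant into a single $C_2$ then yields exactly Eq.~\eqref{eq:truncation_bound}.

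The hard part will be the coefficient analysis together with making $C_2$ genuinely independent of $\Delta s$: establishing the decay of $a_k = \langle f,\hermitepoly{k}\rangle_w$ requires translating the bounded-derivative (and empirical Gaussianity) hypotheses into quantitative inner-product estimates, and the tail-domination step needs the summand ratio $(\sigma\sqrt{2}\,|\Delta s|)/\sqrt{k+1}$ to fall below $1$ uniformly over the admissible horizon. This is precisely where the contraction factor $\sigma \in (0,1)$ of Definition~\ref{def:scaled_hermite} is essential: shrinking the effective argument $\sigma\Delta s$ both keeps the Gaussian envelope $\exp\!\big((\sigma\Delta s)^{2}/2\big)$ bounded over $\Delta s \in [1,\ninterval-1]$ and pushes the onset of geometric decay to low $k$, so the constant does not blow up as the prediction horizon grows.
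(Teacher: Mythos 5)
Your plan is essentially the paper's own proof: the paper likewise identifies $\mathbf{E}_{\text{truncation}}$ with the tail $\sum_{k=N_{\text{order}}+1}^{\infty}\frac{1}{k!}\hermitepoly{k}(\Delta s)\,\derivapprox{k}f(s_{\text{last}})$, applies exactly the Cram\'er-type bound $|H_{k}(x)| \leq C_{1}\sqrt{2^{k}k!}\,e^{x^{2}/2}$ you invoke to produce the $1/\sqrt{(N_{\text{order}}+1)!}$ rate and the envelope $\exp\bigl((\sigma\Delta s)^{2}/2\bigr)$, and relies on $\sigma<1$ for the geometric tail domination, absorbing coefficient bounds into $C_{2}$. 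If anything, your explicit split of $\hermitepoly{k}$ into its leading monomial (source of the $|\Delta s|^{N_{\text{order}}+1}$ factor, where your bookkeeping in fact yields the even smaller $(\sqrt{2}\,\sigma^{2}|\Delta s|)^{k}$ since $\sigma\in(0,1)$) plus a Cram\'er-controlled residual is more careful than the paper's three-line chain, which passes to the stated bound without justifying where the $|\Delta s|^{N_{\text{order}}+1}$ factor comes from.
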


\begin{proof}
Using the asymptotic bound $|H_{k}(x)| \leq C_{1} \sqrt{2^{k}k!}e^{x^2/2}$ and the scaling properties of $\hermitepoly{k}$:
\begin{align}
    \mathbf{E}_{\text{truncation}} &= \left|\sum_{k=N_{\text{order}}+1}^{\infty} \frac{1}{k!}\hermitepoly{k}(\Delta s)\,\derivapprox{k}f(s_{\text{last}})\right| \\
    &\leq \sum_{k=N_{\text{order}}+1}^{\infty} \frac{\sigma^k |H_k(\sigma \Delta s)|}{k!} \cdot |\derivapprox{k}f| \\
    &\leq C_{2} \frac{(\sigma \sqrt{2}|\Delta s|)^{N_{\text{order}}+1}}{\sqrt{(N_{\text{order}}+1)!}} \exp\left(\frac{(\sigma\Delta s)^{2}}{2}\right)
\end{align}
The contraction factor $\sigma < 1$ ensures convergence.
\end{proof}

\begin{remark}[Comparison with Taylor Expansion]
The Taylor remainder $\mathbf{E}_{\text{truncation}}^{\text{Taylor}} = O((\Delta s)^{N+1}/(N+1)!)$ lacks the $\sqrt{(N+1)!}$ denominator and $\sigma^{N+1}$ suppression factor, resulting in looser bounds for non-monotonic trajectories.
\end{remark}

\subsection{Finite Difference Approximation Error}

\begin{lemma}[Finite Difference Approximation Error]
\label{lemma:finite_diff_approx}
For a sufficiently smooth function $f \in C^{k+1}[0, T]$, the backward difference operator $\Delta^{(k)}$ satisfies:
\begin{equation}
    \|\derivapprox{k}f(t) - f^{(k)}(t)\| = \frac{\Delta t_{\text{hist}}}{k+1} \|f^{(k+1)}(\xi)\| + O(\Delta t_{\text{hist}}^2)
\end{equation}
for some $\xi \in [t-k\Delta t_{\text{hist}}, t]$.
\end{lemma}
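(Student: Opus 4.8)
The plan is to reduce the statement to a single Taylor expansion by first writing $\derivapprox{k}$ in closed form. Unrolling the recursion in Definition~\ref{def:finite_diff} with step $h := \Delta t_{\text{hist}} = \ninterval$ gives the explicit backward-difference weights
\begin{equation}
\derivapprox{k} f(t) = \frac{1}{h^{k}} \sum_{j=0}^{k} (-1)^{j} \binom{k}{j} f(t-jh),
\end{equation}
which one verifies by induction on $k$ (the case $k=1$ is immediate, and the Pascal identity $\binom{k}{j}+\binom{k}{j-1}=\binom{k+1}{j}$ propagates the formula through the recursion). Equivalently, and more conveniently for reading off the error constant, I would use the operator-calculus identity $\derivapprox{k} = h^{-k}(1-E^{-1})^{k}$ with the shift $E^{-1} = e^{-hD}$, $D = d/dt$, valid as a formal expansion on smooth $f$.

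Next I would extract the leading behaviour. Expanding $1 - e^{-hD} = hD\bigl(1 - \tfrac{1}{2}hD + \tfrac{1}{6}(hD)^2 - \cdots\bigr)$ and raising to the $k$-th power yields
\begin{equation}
\derivapprox{k} = D^{k} - \frac{k}{2}\,h\,D^{k+1} + O(h^{2}D^{k+2}),
\end{equation}
so that $\derivapprox{k} f(t) - f^{(k)}(t) = -\tfrac{k}{2}h f^{(k+1)}(t) + O(h^2)$. To turn the pointwise $f^{(k+1)}(t)$ into the stated $f^{(k+1)}(\xi)$ with $\xi \in [t-kh, t]$, I would rerun this computation rigorously with Taylor's theorem in Lagrange (or integral) remainder form applied to each $f(t-jh)$, collect the coefficients of $f^{(j)}(t)$ for $j \le k$ (the terms with $j<k$ cancel and the $j=k$ term reproduces $f^{(k)}(t)$, since $\derivapprox{k}$ annihilates polynomials of degree below $k$), and absorb all $h^2$-and-higher contributions into the remainder. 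Continuity of $f^{(k+1)}$ on the shrinking interval then lets me replace the argument $t$ by a nearby $\xi$, the replacement error being itself $O(h^2)$ and hence harmless.

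The main obstacle is pinning down the exact leading constant and justifying the single-$\xi$ form. My operator computation gives a leading coefficient of magnitude $k/2$, i.e.\ $\eucnorm{\derivapprox{k}f(t) - f^{(k)}(t)} = \tfrac{k}{2}h\,\eucnorm{f^{(k+1)}(\xi)} + O(h^2)$; this agrees with the stated $\tfrac{1}{k+1}$ only at $k=1$, so the careful bookkeeping of the binomial-weighted Taylor coefficients is precisely where the claimed constant must be verified, and corrected to $k/2$ if the discrepancy stands. A second delicacy is that a genuine \emph{single}-$\xi$ mean-value identity requires the Peano kernel of the order-$k$ difference to have one sign; this holds for $k=1$ but fails for $k\ge 2$, where the alternating binomial weights block a naive intermediate-value collapse. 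For this reason I would present the single-$\xi$ statement only in the asymptotic sense licensed by the trailing $+O(h^2)$, rather than as an exact remainder identity.
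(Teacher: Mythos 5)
Your derivation is sound, and it exposes a genuine problem: the paper never proves this lemma at all---it is asserted in Appendix A.3.2 and consumed directly in the proof of Lemma~\ref{lemma:approx_error}---so there is no argument to compare yours against, and your computation in fact shows the stated constant is wrong for $k\ge 2$. Unrolling Definition~\ref{def:finite_diff} to $\derivapprox{k}f(t)=h^{-k}\sum_{j=0}^{k}(-1)^{j}\binom{k}{j}f(t-jh)$ with $h=\Delta t_{\text{hist}}$ and expanding via $\derivapprox{k}=h^{-k}\bigl(1-e^{-hD}\bigr)^{k}=D^{k}-\tfrac{k}{2}hD^{k+1}+O(h^{2}D^{k+2})$ is the standard route, and your leading coefficient $k/2$ is correct: a direct check at $k=2$ gives $\derivapprox{2}f(t)=f''(t)-hf'''(t)+O(h^{2})$, i.e.\ coefficient $1=k/2$, whereas the lemma's $\tfrac{1}{k+1}$ would predict $h/3$; the two conventions agree only at $k=1$, exactly as you flag. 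Your second caveat is also well taken: the single-$\xi$ equality can only be read asymptotically. One can even get an exact single-point statement from the mean-value theorem for divided differences (whose Peano kernel is a nonnegative B-spline), namely $\derivapprox{k}f(t)=f^{(k)}(\xi_{0})$ for some $\xi_{0}\in(t-kh,t)$, whence $\derivapprox{k}f(t)-f^{(k)}(t)=(\xi_{0}-t)\,f^{(k+1)}(\eta)$; but the factor $\xi_{0}-t$ is then only localized in $[-kh,0]$, concentrating at $-kh/2$ as $h\to 0$---again consistent with your constant, not the paper's.

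The correction propagates but does not break the downstream result. The proof of Lemma~\ref{lemma:approx_error} inserts $\tfrac{C_{3}\Delta t_{\text{hist}}}{k+1}$ and bounds the sum by $\sum_{k}k^{-3/2}$; with the corrected constant the per-term factor grows like $k/2$ instead of decaying, so that intermediate step fails as written. However, the accompanying factor $\sigma^{k}/\sqrt{\Gamma(k/2+1)}$ decays superexponentially, so $\sum_{k\ge 1}k\,\sigma^{k}/\sqrt{\Gamma(k/2+1)}<\infty$ for any fixed $\sigma$, and the final bound $\mathbf{E}_{\text{approximation}}=O(\Delta t_{\text{hist}}\sqrt{N_{\text{order}}})$ (indeed $O(\Delta t_{\text{hist}})$) survives. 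In short: your proposal is essentially a correct proof of a corrected lemma; the statement should read $\tfrac{k}{2}\,\Delta t_{\text{hist}}\,\|f^{(k+1)}(\xi)\|+O(\Delta t_{\text{hist}}^{2})$, and the proof of Lemma~\ref{lemma:approx_error} should be adjusted accordingly.
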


\begin{lemma}[Gaussian Decay in Feature Dynamics]
\label{lemma:gaussian_decay}
Under the Gaussian feature dynamics assumption, the expected norm of derivative approximations satisfies:
\begin{equation}
    \mathbb{E}[\|\derivapprox{k}\mathbf{F}\|^2] = \frac{\sigma_{\mathbf{F}}^2}{\Gamma(k/2 + 1)} \cdot (1 + O(k^{-1}))
\end{equation}
where $\sigma_{\mathbf{F}}^2$ is the feature variance and $\Gamma$ is the gamma function.
\end{lemma}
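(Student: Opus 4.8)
The plan is to reduce the claim to a second-moment computation by way of Gaussianity, express the finite difference as an explicit linear filter on the feature process, and then extract the $\Gamma(k/2+1)$ decay from the combinatorial coefficients together with the Hermite normalization constants.

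First I would invoke Observation~\ref{obs:gaussianity}: since $\derivapprox{k}\mathbf{F}\sim\mathcal{N}(\mu_k,\Sigma_k)$, the second moment decomposes as
\[
  \mathbb{E}\!\left[\eucnorm{\derivapprox{k}\mathbf{F}}^2\right]=\operatorname{tr}(\Sigma_k)+\eucnorm{\mu_k}^2 .
\]
I would argue that the fluctuation term $\operatorname{tr}(\Sigma_k)$ dominates --- the empirical derivative approximations are essentially centered beyond first order --- so that the stated asymptotics are governed by $\operatorname{tr}(\Sigma_k)$ alone.

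Next, unrolling the recursion of Definition~\ref{def:finite_diff} yields the closed form
\[
  \derivapprox{k}\mathbf{F}_t=\frac{1}{\ninterval^{\,k}}\sum_{j=0}^{k}(-1)^{j}\binom{k}{j}\mathbf{F}_{t-j\ninterval},
\]
so that, writing $c(\tau)=\operatorname{tr}\operatorname{Cov}(\mathbf{F}_s,\mathbf{F}_{s+\tau})$ for the (stationary) feature autocovariance,
\[
  \operatorname{tr}(\Sigma_k)=\ninterval^{-2k}\sum_{i,j}(-1)^{i+j}\binom{k}{i}\binom{k}{j}\,c\big((i-j)\ninterval\big).
\]
The core of the argument is to model $c$ by a smooth, analytic kernel --- consistent with the strong adjacent-timestep correlation that motivates feature caching --- and to exploit the annihilation property of the $k$-th difference, namely that it kills every polynomial of degree below $k$. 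Expanding $c$ in its Taylor series about the origin, all terms of order below $2k$ cancel in the double sum, and the surviving leading term contributes a factor proportional to $(2k)!/k!$ times the kernel's curvature at $0$, with $c(0)$ identified as $\sigma_{\mathbf{F}}^2$.

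I would then convert this combinatorial factor into the advertised form. Applying Stirling's formula to $(2k)!/k!$ (equivalently to $\binom{2k}{k}$) and cancelling against the Hermite orthogonality norm $\eucnorm{H_k}^2=2^k k!\sqrt{\pi}$ that appears in the expansion of Eq.~\eqref{eq:hermite_expansion}, the competing factorials collapse to $1/\Gamma(k/2+1)$, while the next-order Stirling terms supply the $(1+O(k^{-1}))$ correction. The main obstacle is pinning down this precise rate rather than a merely exponential one: this forces a specific model for $c$ --- most naturally a squared-exponential kernel, for which the generating function $e^{-\alpha\tau^2}=\sum_m(-\alpha)^m\tau^{2m}/m!$ produces exactly the Gamma-type factorials through the even-order derivatives at the origin --- and it requires uniform control of the tail of the autocovariance expansion to justify the $O(k^{-1})$ remainder. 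I would discharge this by invoking the Mehler/Karhunen--Lo\`eve structure of the Gaussian kernel, so that the process's eigenvalue decay translates into Hermite-coefficient decay at precisely the claimed rate.
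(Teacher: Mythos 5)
Your attempt cannot be matched against a paper proof because the paper gives none: Lemma~\ref{lemma:gaussian_decay} is stated bare in Appendix~A.3.2 and immediately consumed in the proof of Lemma~\ref{lemma:approx_error}, so your argument has to stand on its own --- and it does not, because the final step is a non sequitur. Your setup is sound: unrolling Definition~\ref{def:finite_diff} gives $\derivapprox{k}\feat_t=\ninterval^{-k}\sum_{j=0}^{k}(-1)^j\binom{k}{j}\feat_{t-j\ninterval}$, and the annihilation argument on a smooth stationary autocovariance $c$ is correct (the double sum kills all Taylor terms of $c$ of degree below $2k$, and the coefficient identity $\sum_{i,j}(-1)^{i+j}\binom{k}{i}\binom{k}{j}(i-j)^{2k}=(-1)^k(2k)!$ makes the $\ninterval^{\pm 2k}$ factors cancel exactly). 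But carried to completion with your own squared-exponential model $c(\tau)=\sigma_{\mathbf{F}}^2 e^{-\alpha\tau^2}$, this yields $\operatorname{tr}(\Sigma_k)\approx(-1)^k c^{(2k)}(0)=\sigma_{\mathbf{F}}^2\,\alpha^k\,(2k)!/k!$, which \emph{grows} superexponentially in $k$ --- the opposite of the claimed $1/\Gamma(k/2+1)$ decay. The proposed rescue, ``cancelling against the Hermite orthogonality norm $2^k k!\sqrt{\pi}$,'' is illegitimate: $\mathbb{E}\bigl[\eucnorm{\derivapprox{k}\mathbf{F}}^2\bigr]$ is an intrinsic second moment of the difference operator applied to the feature process and is independent of whatever basis is later used for prediction, so no Hermite normalization constant can enter. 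Even granting the division formally, $(2k)!/(k!\,2^k k!)=\binom{2k}{k}2^{-k}\sim 2^k/\sqrt{\pi k}$, which still grows; it does not collapse to $1/\Gamma(k/2+1)$.

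The failure is structural, not repairable by a better choice of kernel: for any wide-sense stationary model with spectral measure $S$, one has $\operatorname{tr}(\Sigma_k)=\int\bigl(2\bigl|\sin(\omega\ninterval/2)\bigr|/\ninterval\bigr)^{2k}\,dS(\omega)$, and monotonicity of power means in $k$ forces $\operatorname{tr}(\Sigma_k)\geq\sigma_{\mathbf{F}}^2\,\rho^{k}$ with $\rho=\operatorname{tr}(\Sigma_1)/\sigma_{\mathbf{F}}^2$, i.e.\ at most geometric decay. Hence the superexponential rate $1/\Gamma(k/2+1)$ asserted by the lemma is unattainable under the smooth-stationary-covariance assumptions your argument relies on; your route could at best establish a geometric bound, and any honest constant would carry the correlation length and $\ninterval$, which the lemma's parameter-free statement suppresses. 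A smaller but real gap: you assert that $\eucnorm{\mu_k}^2$ is negligible relative to $\operatorname{tr}(\Sigma_k)$ without argument; since Observation~\ref{obs:gaussianity} allows nonzero $\mu_k$, this dominance would need to be established (or the lemma restated for centered differences) before the second-moment reduction is valid.
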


\begin{lemma}[Total Approximation Error]
\label{lemma:approx_error}
The cumulative finite difference error in HiCache satisfies:
\begin{equation}
    \mathbf{E}_{\text{approximation}} \leq C_{4} \Delta t_{\text{hist}}\sqrt{N_{\text{order}}}
    \label{eq:approximation_error}
\end{equation}
\end{lemma}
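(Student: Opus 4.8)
The plan is to treat $\mathbf{E}_{\text{approximation}}$ as precisely the error incurred when the exact derivatives $f^{(i)}(t)$ appearing in the Hermite expansion are replaced by their finite-difference surrogates $\derivapprox{i}\feat_t$. First I would isolate this contribution by subtracting the idealized predictor built from true derivatives from the HiCache predictor of Eq.~\eqref{eq:hicache_prediction}, which gives
\[
\mathbf{E}_{\text{approximation}} = \eucnorm{\sum_{i=1}^{N_{\text{order}}}\frac{\hermitepoly{i}(-k)}{i!}\bigl(\derivapprox{i}\feat_t - f^{(i)}(t)\bigr)}.
\]
The problem then reduces to bounding this weighted sum of per-order finite-difference errors over the prediction horizon $k \in \{1,\dots,\ninterval-1\}$.

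Next I would invoke Lemma~\ref{lemma:finite_diff_approx} to control each summand, which yields $\eucnorm{\derivapprox{i}\feat_t - f^{(i)}(t)} = O(\Delta t_{\text{hist}})$ for every order $i$. The scaled Hermite weights $\hermitepoly{i}(-k)/i!$ are uniformly bounded on the horizon: the contraction factor $\sigma \in (0,1)$ confines the argument to the stable oscillatory regime (Definition~\ref{def:scaled_hermite}), while the $\sigma^i/i!$ prefactor suppresses high-order growth. Consequently each term contributes at most $O(\Delta t_{\text{hist}})$, and Lemma~\ref{lemma:gaussian_decay} furnishes the uniform control on the magnitudes $\mathbb{E}[\eucnorm{\derivapprox{i}\feat_t}^2]$ needed to keep the aggregate constant $C_4$ bounded independently of $N_{\text{order}}$.

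The crux is that a naive triangle-inequality bound would only give $O(N_{\text{order}}\Delta t_{\text{hist}})$, whereas the claimed $\sqrt{N_{\text{order}}}$ scaling requires the errors to combine in quadrature rather than linearly. To achieve this I would exploit the statistical structure: under Observation~\ref{obs:gaussianity} the derivative approximations $\derivapprox{i}\feat_t$ are jointly Gaussian, and the Hermite basis is orthogonal with respect to the Gaussian measure, so the leading-order error terms across distinct orders are (to first order) mutually uncorrelated zero-mean quantities. Bounding $\mathbf{E}_{\text{approximation}}$ in the mean-square sense then makes the cross terms vanish in expectation, leaving $\mathbb{E}[\mathbf{E}_{\text{approximation}}^2] = \sum_{i=1}^{N_{\text{order}}} O(\Delta t_{\text{hist}}^2)$; taking the square root produces $\mathbf{E}_{\text{approximation}} \leq C_4 \Delta t_{\text{hist}}\sqrt{N_{\text{order}}}$.

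The main obstacle is rigorously justifying this quadrature step rather than the linear accumulation. It hinges on establishing that the leading-order finite-difference errors at different orders are genuinely decorrelated under the Gaussian dynamics, leaning jointly on the orthogonality of the Hermite basis and the independence structure implied by Observation~\ref{obs:gaussianity}. Any residual cross-correlation would have to be shown subdominant and absorbed into the $O(k^{-1})$ correction term of Lemma~\ref{lemma:gaussian_decay}; verifying that these correlations do not accumulate to spoil the $\sqrt{N_{\text{order}}}$ rate is the most delicate part of the argument.
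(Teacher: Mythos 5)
Your setup—isolating $\mathbf{E}_{\text{approximation}}$ as the error from replacing the true derivatives $f^{(i)}$ by the finite-difference surrogates $\derivapprox{i}\feat_t$ in the Hermite predictor—matches the paper's. But your central step, combining the per-order errors in quadrature via a decorrelation argument, is both unjustified and unnecessary, and it is where the argument fails. The decorrelation claim does not hold: by Definition~\ref{def:finite_diff} the operators $\derivapprox{i}$ are built recursively from the \emph{same} cached features, so the finite-difference errors at different orders are linear functions of heavily overlapping data and are generically strongly correlated; joint Gaussianity (Observation~\ref{obs:gaussianity}) gives uncorrelatedness only when the covariance happens to vanish, which nothing here implies. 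Nor does Hermite orthogonality help: in the predictor the polynomials are evaluated pointwise at the single argument $-k$, so the $L^2(e^{-x^2})$ orthogonality relation never enters and cannot decorrelate the error vectors. Hence your identity $\mathbb{E}[\mathbf{E}_{\text{approximation}}^2]=\sum_i O(\Delta t_{\text{hist}}^2)$ with vanishing cross terms is unsupported, and with cross terms of the same order as the diagonal ones you would be thrown back to the $O(N_{\text{order}}\Delta t_{\text{hist}})$ rate you were trying to avoid.

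The deeper issue is your premise that a triangle-inequality bound can only yield $O(N_{\text{order}}\Delta t_{\text{hist}})$; that is true only because you bounded the Hermite weights as merely \emph{uniformly bounded}, discarding their $k$-dependent decay. The paper's proof is exactly the ``naive'' linear summation, with the decay retained: Lemma~\ref{lemma:finite_diff_approx} contributes a factor $1/(k+1)$, and the scaled Hermite weight together with Lemma~\ref{lemma:gaussian_decay} contributes $\sigma^{k}/\sqrt{\Gamma(k/2+1)}$, so the $k$-th term is bounded (up to constants) by $\Delta t_{\text{hist}}\,k^{-3/2}$, giving
\[
\mathbf{E}_{\text{approximation}} \;\leq\; C_{4}\,\Delta t_{\text{hist}} \sum_{k=1}^{N_{\text{order}}} k^{-3/2} \;\leq\; C_{4}\,\Delta t_{\text{hist}}\sqrt{N_{\text{order}}},
\]
where the final inequality is even generous, since $\sum_{k\geq 1}k^{-3/2}$ converges—no statistical cancellation across orders is needed at all. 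Note also that you invoked Lemma~\ref{lemma:gaussian_decay} only to keep the constant $C_4$ bounded, whereas the paper uses its $1/\sqrt{\Gamma(k/2+1)}$ factor as genuine decay in $k$; that is the ingredient your argument is missing, and substituting it for your quadrature step repairs the proof deterministically.
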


\begin{proof}
Combining Lemma~\ref{lemma:finite_diff_approx} and Lemma~\ref{lemma:gaussian_decay}:
\begin{align}
    \mathbf{E}_{\text{approximation}} &= \sum_{k=1}^{N_{\text{order}}} \mathbb{E}\left[\|\derivapprox{k}f - f^{(k)}\| \cdot \frac{|\hermitepoly{k}(\Delta s)|}{k!}\right] \\
    &\leq \sum_{k=1}^{N_{\text{order}}} \frac{C_3 \Delta t_{\text{hist}}}{k+1} \cdot \frac{\sigma^k}{\sqrt{\Gamma(k/2 + 1)}} \\
    &\leq C_4 \Delta t_{\text{hist}} \sum_{k=1}^{N_{\text{order}}} \frac{1}{k^{3/2}} \\
    &\leq C_4 \Delta t_{\text{hist}} \sqrt{N_{\text{order}}}
\end{align}
The rapid decay from both the Gaussian property and factorial normalization ensures convergence.
\end{proof}

\subsection{Numerical Stability Analysis}

The recurrence relation $H_{k+1}(x) = 2x H_{k}(x) - 2k H_{k-1}(x)$ becomes numerically unstable when $|x| \gg 1$. The contraction factor $\sigma$ limits the effective computation domain to $|\sigma \Delta s| < 2$, maintaining condition number $\kappa \approx O(1)$.

Due to the validated Gaussian properties of feature differences, Hermite coefficients possess optimal decay properties:
\begin{equation}
    \text{Var}(c_{k}) \propto \frac{1}{k!}, \quad \text{Cov}(c_{i}, c_{j}) = 0 \text{ for } i \neq j
    \label{eq:coefficient_properties}
\end{equation}

\noindent\textit{Remark.} The uncorrelatedness $\text{Cov}(c_i, c_j) = 0$ refers to coefficients obtained via $L^2(\gamma)$ orthogonal projection; it does not imply statistical independence in general.

\begin{lemma}[Numerical Error Bound]
\label{lemma:numerical_error}
The floating-point error in HiCache is bounded by:
\begin{equation}
    \mathbf{E}_{\text{numerical}} \leq C_{5} \epsilon_{\text{machine}} \sqrt{\sum_{k=0}^{N_{\text{order}}}\frac{\sigma^{2k}}{k!}} \leq C_{5} \epsilon_{\text{machine}} e^{\sigma^2/2}
    \label{eq:numerical_error}
\end{equation}
\end{lemma}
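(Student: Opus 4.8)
The plan is to treat $\mathbf{E}_{\text{numerical}}$ as the accumulated floating-point rounding error incurred when the finite sum $\predfeat_{t-k}^{\text{HiCache}} = \feat_t + \sum_{i=1}^{N_{\text{order}}} \frac{\derivapprox{i}\feat_t}{i!}\hermitepoly{i}(-k)$ is evaluated under the standard unit-roundoff model, and then to collapse the resulting partial sum $\sum_{k=0}^{N_{\text{order}}} \sigma^{2k}/k!$ into the closed-form exponential bound using the elementary majorization $\sum_{k=0}^{N} x^{k}/k! \le e^{x}$. The two inequalities in the statement are thus of different character: the first is an error-propagation estimate, the second a one-line consequence of the exponential series.

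First I would adopt the usual IEEE model in which every elementary operation $\circ \in \{+,\times,\div\}$ produces $\mathrm{fl}(a\circ b) = (a\circ b)(1+\delta)$ with $|\delta|\le\epsilon_{\text{machine}}$, and track its propagation through the three computational stages of the predictor: the three-term recurrence $H_{k+1}(x)=2xH_k(x)-2kH_{k-1}(x)$ generating the Hermite values, the scaling by $\sigma^{k}$ and division by $k!$, and the weighted accumulation into the running sum. The decisive structural input, already flagged in the paragraph preceding the lemma, is that the contraction factor confines the argument to $|\sigma\Delta s|<2$, so the recurrence is evaluated in its stable oscillatory regime with condition number $\kappa=O(1)$. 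This is what prevents the per-term relative error from being amplified by the recurrence and lets me bound the error contributed by the $k$-th term by a constant multiple of $\epsilon_{\text{machine}}$ times the effective coefficient magnitude $\sigma^{k}/\sqrt{k!}$, whose square is exactly the summand $\sigma^{2k}/k!$ appearing under the root.

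Next I would invoke the coefficient statistics stated in the setup, namely $\text{Var}(c_k)\propto 1/k!$ together with $\text{Cov}(c_i,c_j)=0$ for $i\neq j$, which follow from the validated Gaussian structure of the feature differences. Because the coefficient errors are uncorrelated, their variances add while the off-diagonal cross terms vanish, so the accumulated error behaves like the root-sum-of-squares of the per-term contributions rather than their linear sum. This yields
\begin{equation*}
\mathbf{E}_{\text{numerical}}^{2} \le C_5^{2}\,\epsilon_{\text{machine}}^{2}\sum_{k=0}^{N_{\text{order}}}\frac{\sigma^{2k}}{k!},
\end{equation*}
and taking square roots gives the first inequality. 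The second is then immediate: since $\sum_{k=0}^{N_{\text{order}}}(\sigma^{2})^{k}/k! \le \sum_{k=0}^{\infty}(\sigma^{2})^{k}/k! = e^{\sigma^{2}}$, the square root is at most $e^{\sigma^{2}/2}$, and $\sigma<1$ keeps this prefactor close to unity.

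The hard part will be rigorously justifying the root-sum-of-squares accumulation rather than the pessimistic linear accumulation that a purely deterministic worst-case analysis would force. A naive forward-error bound for summation carries a factor $\sum_k |w_k|$, not $\bigl(\sum_k |w_k|^{2}\bigr)^{1/2}$, so closing this gap requires modeling the rounding errors as zero-mean, mutually uncorrelated perturbations — a stochastic-rounding or statistical-error interpretation — and then using the vanishing cross-covariances to annihilate the off-diagonal variance terms. I would therefore make this probabilistic error model explicit, verify its consistency with the Gaussian coefficient statistics, and confirm that the $\kappa=O(1)$ guarantee from the contraction factor is precisely what keeps the constant $C_5$ independent of $N_{\text{order}}$; without that uniformity the exponential bound would not survive as the order grows.
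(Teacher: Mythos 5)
You should know up front that the paper never actually proves this lemma: in Appendix~A.3.3 it is asserted bare, with only two pieces of surrounding context offered as justification---the remark that the contraction factor confines the recurrence $H_{k+1}(x)=2xH_k(x)-2kH_{k-1}(x)$ to the domain $|\sigma\Delta s|<2$ with condition number $\kappa\approx O(1)$, and the display $\mathrm{Var}(c_k)\propto 1/k!$, $\mathrm{Cov}(c_i,c_j)=0$ for $i\neq j$. Your proposal is a faithful reconstruction of exactly the argument those two remarks gesture at: a unit-roundoff model, stability of the three-term recurrence on the contracted domain, root-sum-of-squares accumulation of uncorrelated per-term errors of magnitude $\epsilon_{\text{machine}}\,\sigma^k/\sqrt{k!}$, and then the elementary majorization $\sum_{k=0}^{N}(\sigma^2)^k/k!\le e^{\sigma^2}$ for the second inequality. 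Both inequalities come out as stated, so your route matches the paper's intent, and is in fact more of a proof than the paper supplies.

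Two refinements. First, you lean on $\mathrm{Cov}(c_i,c_j)=0$ to decorrelate the error contributions, but that display concerns the Karhunen--Lo\`eve coefficients of the feature process, not the rounding perturbations; in a probabilistic rounding analysis the roundoffs $\delta$ are postulated mutually uncorrelated directly, and the coefficient statistics enter only through the magnitudes $|c_k|\sim\sigma^k/\sqrt{k!}$ (the root-variance). Keep those two sources of randomness separate, as your own caveat half-acknowledges. Second, the step you single out as the hard part---justifying root-sum-of-squares over the pessimistic linear accumulation---is not actually needed for the lemma as stated, because the deterministic weights already form a convergent series: by Cauchy--Schwarz with a geometric splitting,
\begin{equation*}
\sum_{k=0}^{N_{\text{order}}} \frac{\sigma^{k}}{\sqrt{k!}} \;=\; \sum_{k=0}^{N_{\text{order}}} \frac{(\sqrt{2}\,\sigma)^{k}}{\sqrt{k!}}\, 2^{-k/2} \;\le\; \sqrt{2}\left(\sum_{k=0}^{\infty} \frac{(2\sigma^{2})^{k}}{k!}\right)^{1/2} \;=\; \sqrt{2}\, e^{\sigma^{2}},
\end{equation*}
an $N_{\text{order}}$-independent constant. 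So even the worst-case accumulation model yields $\mathbf{E}_{\text{numerical}}\le C_5\,\epsilon_{\text{machine}}\,e^{\sigma^2}$ with $C_5$ uniform in the order; the probabilistic model buys only the sharper constant $e^{\sigma^2/2}$ and the precise $\sqrt{\sum_k\sigma^{2k}/k!}$ form. Since $\sigma<1$ and $C_5$ is unspecified, both versions are interchangeable where the lemma is consumed in Theorem~\ref{thm:main_error_complete}, whose numerical term is simply $O(\epsilon_{\text{machine}})$.
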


\begin{lemma}[Envelope bound for Hermite and scaled Hermite]
For all $n\in\mathbb{N}$ and $x\in\mathbb{R}$, there exists a universal constant $C>0$ such that $|H_n(x)| \le C\, e^{x^2/2} (\sqrt{2}|x|+1)^n \sqrt{n!}$. Consequently, for $\sigma\in(0, 1)$, the scaled polynomials satisfy
\begin{equation}
|\tilde H_n(x)| = \sigma^n |H_n(\sigma x)| \le C\, e^{(\sigma x)^2/2} (\sqrt{2}\, \sigma |x|+1)^n \sqrt{n!},
\end{equation}
exhibiting geometric damping in $n$ controlled by $\sigma$.
\end{lemma}
\begin{proof}
The bound for $H_n$ follows from standard inequalities derived via the generating function
\[
e^{-t^2+2xt}=\sum_{n\ge0} H_n(x)\frac{t^n}{n!},
\]
and Cauchy's estimates, or from known asymptotics (e.g., Plancherel–Rotach). Multiplying by $\sigma^n$ and contracting the input to $\sigma x$ yields the scaled bound.
\end{proof}

\subsection{Total Error Bound and Performance Advantages}

\begin{theorem}[Main Error Theorem - Complete Proof of Proposition 4 in the main paper]
\label{thm:main_error_complete}
Combining Lemma~\ref{lemma:truncation_error}, Lemma~\ref{lemma:approx_error}, and Lemma~\ref{lemma:numerical_error}, the total HiCache prediction error satisfies:
\begin{equation}
    \|\mathbf{E}_{\text{total}}\| \leq \underbrace{C_{2} \frac{(\sigma \sqrt{2} |\Delta s|)^{N_{\text{order}}+1}}{\sqrt{(N_{\text{order}}+1)!}}e^{(\sigma\Delta s)^2/2}}_{\text{Truncation}} + \underbrace{C_{4} \Delta t_{\text{hist}}\sqrt{N_{\text{order}}}}_{\text{Approximation}} + \underbrace{C_{5} \epsilon_{\text{machine}}}_{\text{Numerical}}
    \label{eq:total_error_bound_appendix}
\end{equation}
\end{theorem}

\begin{corollary}[Conditional Advantage over Taylor]
\label{cor:advantage}
Under the condition $\sigma\sqrt{2}\cdot|\Delta s| < 1$ and moderate $N$, HiCache can achieve:
\begin{enumerate}
    \item \textbf{Smaller truncation error:} The Hermite term $(\sigma\sqrt{2}\cdot|\Delta s|)^{N+1}/\sqrt{(N+1)!}$ can be smaller than Taylor's $|\Delta s|^{N+1}/(N+1)!$ due to the exponential suppression factor $\sigma^{N+1}$
    \item \textbf{Oscillatory basis structure:} That naturally captures non-monotonic feature dynamics
    \item \textbf{Adaptive basis:} Hermite polynomials naturally match Gaussian feature statistics
\end{enumerate}
Without the condition $\sigma\sqrt{2}\cdot|\Delta s| < 1$, universal faster convergence cannot be guaranteed.
\end{corollary}

\begin{proof}[Discussion]
The conditional advantage arises from the interplay between the denominators and the contraction factor. While $1/\sqrt{(N+1)!}$ is actually larger than $1/(N+1)!$, the exponential suppression $\sigma^{N+1}$ (with $\sigma < 1$) in the numerator can overcome this difference when $\sigma\sqrt{2}\cdot|\Delta s| < 1$, yielding smaller overall error.
\end{proof}

\section{Detailed Experimental Setup}
\label{appendix:experiment_setup}

This section provides comprehensive technical details for all experimental configurations mentioned in Section 4.1.

\subsection{Model and Task Specifications}

\paragraph{Text-to-Image Generation ($\text{FLUX}$.1-dev):}
We employ the $\text{FLUX}$.1-dev model for high-resolution text-to-image synthesis. Images are generated at 1024x1024 resolution using 200 high-quality prompts sourced from the DrawBench benchmark. Quality assessment is performed using ImageReward, a robust perceptual metric for text-to-image alignment.

\paragraph{Text-to-Video Generation ($\text{Hunyuan-DiT}$):}
The $\text{Hunyuan-DiT}$ model is used for temporal content generation. We generate 4, 730 video clips from 946 diverse prompts, evaluating performance using the comprehensive VBench framework which assesses multiple aspects including temporal consistency, object persistence, and text-video alignment.

\paragraph{Class-Conditional Image Generation ($\text{DiT}$-XL/2):}
Following standard academic protocols, we conduct experiments on ImageNet using the $\text{DiT}$-XL/2 architecture. We generate 50, 000 samples across all ImageNet classes and report $\text{FID}$-50k, $\text{sFID}$, and Inception Score for comprehensive quality assessment.

\paragraph{Image Super-Resolution ($\text{Inf-DiT}$):}
We employ a modified patch-based $\text{Inf-DiT}$ model for super-resolution tasks. Evaluation is conducted on the DIV8K dataset using standard $\text{PSNR}$ and $\text{SSIM}$ metrics to assess restoration quality and perceptual fidelity.

\subsection{Hardware and Computational Resources}

All experiments are conducted on enterprise-grade GPU infrastructure:
\begin{itemize}
    \item $\text{FLUX}$.1-dev experiments: NVIDIA H800 GPU
    \item $\text{DiT}$-XL/2 experiments: NVIDIA A800 GPU  
    \item $\text{Hunyuan-DiT}$ experiments: NVIDIA H100 GPU
    \item $\text{Inf-DiT}$ super-resolution: NVIDIA A100 GPU
\end{itemize}

\subsection{Evaluation Protocols}

Each task employs task-specific evaluation methodologies:
\begin{itemize}
    \item \textbf{Text-to-Image:} ImageReward scores for semantic alignment and image quality
    \item \textbf{Text-to-Video:} VBench comprehensive evaluation including temporal consistency and motion quality
    \item \textbf{Class-Conditional:} Standard ImageNet metrics ($\text{FID}$-50k, $\text{sFID}$, Inception Score)
    \item \textbf{Super-Resolution:} Pixel-level metrics ($\text{PSNR}$, $\text{SSIM}$) for restoration accuracy
\end{itemize}

\section{Super-Resolution Task: Detailed Experimental Results}
\label{appendix:sr_results}

This section provides comprehensive experimental results for HiCache's evaluation on image super-resolution using the patch-based Inf-DiT model. The experiments were conducted using standard evaluation metrics including PSNR and SSIM for restoration quality assessment.

Table~\ref{tab:ntire_comparison} presents detailed quantitative results across different acceleration intervals, comparing HiCache with TaylorSeer across multiple evaluation metrics including $\text{PSNR}$ and $\text{SSIM}$.

% Table file should be included from the main document directory
\begin{table}[ht]
\centering
\caption{Performance Comparison of Different Guiders on Image Super-Resolution Task}
\label{tab:ntire_comparison}
\resizebox{\columnwidth}{!}{%
\begin{tabular}{@{}lcccccccc@{}}
\toprule
\multirow{2}{*}{Method} & \multirow{2}{*}{Interval} & \multirow{2}{*}{\makecell{Latency\\(s)}} & \multirow{2}{*}{\makecell{Time \\ Speedup}} & \multirow{2}{*}{\makecell{FLOPs\\(G)}} & \multirow{2}{*}{\makecell{FLOPs \\ Speedup}} & \multicolumn{2}{c}{Restoration Track} \\
\cmidrule(lr){7-8}
& & & & & & PSNR$\uparrow$ & SSIM$\uparrow$ \\
\midrule
\multirow{8}{*}{\rotatebox{90}{HiCache}} 
& 1 & 256.3 & 1.00$\times$ & 12400 & 1.00$\times$ & 30.87 & 0.832 \\
& 2 & 174.4 & 1.47$\times$ & 6639 & 1.87$\times$ & 30.92 & 0.835 \\
& 3 & 141.6 & 1.81$\times$ & 4516 & 2.75$\times$ & 30.83 & 0.836 \\
& 4 & 127.7 & 2.01$\times$ & 3606 & 3.44$\times$ & 30.80 & 0.838 \\
& 5 & 118.6 & 2.16$\times$ & 3000 & 4.13$\times$ & 30.62 & 0.839 \\
& 6 & 113.7 & 2.25$\times$ & 2696 & 4.60$\times$ & 30.32 & 0.831 \\
& 7 & 108.9 & 2.35$\times$ & 2393 & 5.18$\times$ & 29.89 & 0.829 \\
& 8 & 105.6 & \textbf{2.43$\times$} & 2090 & \textbf{5.93$\times$} & \textbf{30.18}\, (\textcolor{blue}{\scriptsize{-0.69}}) & \textbf{0.820}\, (\textcolor{blue}{\scriptsize{-0.012}}) \\
\midrule
\multirow{8}{*}{\rotatebox{90}{TaylorSeer}} 
& 1 & 259.5 & 1.00$\times$ & 12400 & 1.00$\times$ & 30.88 & 0.832 \\
& 2 & 172.7 & 1.50$\times$ & 6639 & 1.87$\times$ & 30.90 & 0.833 \\
& 3 & 139.3 & 1.86$\times$ & 4515 & 2.75$\times$ & 30.85 & 0.834 \\
& 4 & 125.0 & 2.08$\times$ & 3606 & 3.44$\times$ & 30.80 & 0.831 \\
& 5 & 115.9 & 2.24$\times$ & 2999 & 4.13$\times$ & 30.42 & 0.815 \\
& 6 & 110.9 & 2.34$\times$ & 2696 & 4.60$\times$ & 29.95 & 0.796 \\
& 7 & 105.9 & 2.45$\times$ & 2392 & 5.18$\times$ & 29.61 & 0.786 \\
& 8 & 101.8 & \textbf{2.55$\times$} & 2089 & \textbf{5.94$\times$} & \textbf{29.71}\, (\textcolor{blue}{\scriptsize{-1.17}}) & \textbf{0.799}\, (\textcolor{blue}{\scriptsize{-0.033}}) \\
\bottomrule
\end{tabular}%
}
\footnotesize
\begin{flushleft}
Note:  HiCache uses $\sigma=0.5$. Values in (\textcolor{blue}{blue}) denote the change relative to the interval=1 baseline.
\end{flushleft}
\end{table}

The results demonstrate that HiCache maintains superior performance across all metrics, with particularly notable advantages at higher acceleration ratios. At interval=8, HiCache achieves a $\text{PSNR}$ of 30.18 (only 0.69 drop from baseline), while TaylorSeer degrades to 29.71 (1.17 drop). This represents the first successful application of cache-based acceleration to super-resolution tasks, expanding the applicability of feature caching beyond traditional text-to-image and video generation.

\section{Theoretical Analysis of Gaussian Feature Dynamics and Hermite Optimality}
\label{appendix:theoretical_analysis}

This section provides theoretical analysis supporting the core claims in Proposition 2 (HiCache Feature Prediction) in the main paper. We examine both the empirical foundation and theoretical justification for using Hermite polynomials as the prediction basis.

\subsection{On the Gaussianity of Feature Derivatives}

We present two complementary, verifiable mechanisms that yield quantitative Gaussian approximations for finite differences $\Delta \mathbf{F}_t$.

\begin{proposition}[Local linearization $\Rightarrow$ approximate Gaussian]
Let $x_t = \alpha_t x_0 + \sigma_t\,\varepsilon_t$ with $\varepsilon_t\!\sim\!\mathcal{N}(0, I)$ independent of $x_0$. Let $F: \mathbb{R}^d\!\times\!\mathbb{R} \to \mathbb{R}^p$ be twice differentiable with bounded second derivatives $\|\nabla_x^2 F\|_{\text{op}}\le L_x$, $|\partial_{tt}F|\le L_t$, $\|\partial_t\nabla_x F\|_{\text{op}}\le L_{xt}$ on a neighborhood. For a small step $\delta>0$, define $\Delta F_t := F(x_t, t) - F(x_{t-\delta}, t-\delta)$. Then there exist $A_t, b_t$ and residual $R_t$ such that
\begin{equation}
    \Delta F_t = A_t\, (\varepsilon_t - \tilde{\varepsilon}_{t-\delta}) + b_t + R_t, \qquad \tilde{\varepsilon}_{t-\delta}\!\sim\!\mathcal{N}(0, I),
\end{equation}
with $\mathbb{E}\,\|R_t\|_2 \le C\,(L_x\,\mathbb{E}\|x_t-x_{t-\delta}\|_2^2 + L_t\,\delta^2 + L_{xt}\,\delta\,\mathbb{E}\|x_t-x_{t-\delta}\|_2) = O(\delta)$.
\end{proposition}
\begin{proof}
By the two-variable Taylor formula around $(x_t, t)$,
\begin{equation}
F(x_t, t)-F(x_{t-\delta}, t-\delta)=\nabla_x F(x_t, t)(x_t-x_{t-\delta})-\partial_t F(x_t, t)\,\delta+R_t,
\end{equation}
with $\mathbb{E}\,\|R_t\|_2$ controlled by the stated bounds. Using $x_t-x_{t-\delta} = (\alpha_t-\alpha_{t-\delta})x_0 + \sigma_t\varepsilon_t-\sigma_{t-\delta}\varepsilon_{t-\delta}$ and grouping constants into $b_t$ yields the claimed linear-Gaussian leading term.
\end{proof}

\begin{proposition}[Aggregation $\Rightarrow$ CLT with Berry–Esseen bound]
Suppose $\Delta F_t = \sum_{i=1}^{M} Y_{i,t}$ where $Y_{i,t}\in\mathbb{R}^p$ are zero-mean contributions (e.g., heads/tokens/neurons) with uniformly bounded third moments and a bounded-degree dependency graph. Let $S_t = \sum_i \mathrm{Cov}(Y_{i,t})$ and define $\eta_t := \max_{\|u\|=1} \frac{\sum_i \mathbb{E}|u^\top Y_{i,t}|^3}{(u^\top S_t u)^{3/2}}$. Then for any unit $u$,
\begin{equation}
\sup_{x\in\mathbb{R}}\Big|\mathbb{P}\big(u^\top\!\Delta F_t \le x\big)-\Phi\!\Big(\frac{x}{\sqrt{u^\top S_t u}}\Big)\Big| \;\le\; C\,\eta_t,
\end{equation}
and $\Delta F_t \Rightarrow \mathcal{N}(0, S_t)$ as $\eta_t\to 0$.
\end{proposition}
\begin{proof}
By the Cramér–Wold device, analyze $u^\top\Delta F_t=\sum_i X_i$ with $X_i:=u^\top Y_{i,t}$. A Berry–Esseen bound for bounded-degree dependency graphs controls the Kolmogorov distance by $C\,\eta_t$. Uniformity over $u$ yields the multivariate statement.
\end{proof}

\paragraph{Diagnostics.} We monitor Gaussianity via energy tests, dispersion proxies for $\eta_t$ (e.g., effective number of contributors), and local linearization error versus $\delta$.

\paragraph{Central Limit Theorem Perspective}

Consider the feature update through a transformer block:
\begin{equation}
    \mathbf{F}_{t+1} = \mathbf{F}_t + \text{MHSA}(\mathbf{F}_t) + \text{MLP}(\mathbf{F}_t)
\end{equation}

The finite difference approximations aggregate many such updates:
\begin{equation}
    \Delta^k\mathbf{F}_t = \frac{1}{\Delta t^k} \sum_{i=1}^{k} (-1)^{k-i} \binom{k}{i} \mathbf{F}_{t-i\Delta t}
\end{equation}

\begin{proposition}[Asymptotic Gaussianity]
\label{prop:asymptotic_gaussian}
Under mild regularity conditions, if the individual feature updates are weakly dependent with finite variance, the normalized finite differences converge in distribution to a Gaussian:
\begin{equation}
    \frac{\Delta^k\mathbf{F}_t - \mathbb{E}[\Delta^k\mathbf{F}_t]}{\sqrt{\text{Var}[\Delta^k\mathbf{F}_t]}} \xrightarrow{d} \mathcal{N}(0, I)
\end{equation}
as the feature dimension $d \to \infty$.
\end{proposition}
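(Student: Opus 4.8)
The plan is to establish convergence through the Cram\'er--Wold device, reducing the multivariate claim to a family of one-dimensional central limit theorems, and then to invoke a CLT for weakly dependent (mixing) triangular arrays. The essential conceptual point is that the asymptotic regime is $d \to \infty$ while the finite-difference order $k$ is \emph{fixed}; hence the ``many weakly dependent summands'' that any CLT requires do \emph{not} come from the outer sum over $i$ in the finite-difference expansion, which has only $k+1$ terms, but from the internal aggregation structure of each feature vector. Indeed, the attention and MLP blocks in Eqs.~\eqref{eq:attn_update}--\eqref{eq:mlp_update} produce every output coordinate as a weighted aggregation over a large number of token positions and hidden units, so as $d \to \infty$ each coordinate of $\mathbf{F}_t$ is itself a normalized sum of a growing number of weakly dependent contributions.

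First I would fix an arbitrary direction $a$ in the growing feature space and study the scalar projection $\langle a,\, \Delta^k \mathbf{F}_t - \mathbb{E}[\Delta^k \mathbf{F}_t]\rangle / \sqrt{\mathrm{Var}[\Delta^k\mathbf{F}_t]}$. By linearity of the finite-difference operator, this projection is a fixed linear combination, with deterministic coefficients $(-1)^{k-i}\binom{k}{i}/\Delta t^k$, of the centered projections $\langle a,\, \mathbf{F}_{t-i\Delta t}\rangle$. The Cram\'er--Wold device then tells us that joint asymptotic Gaussianity of the full vector follows once every such scalar projection is shown to converge to a univariate normal with the correct variance, so the multivariate problem collapses to a single scalar CLT.

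Next I would expand each projected feature as a triangular array: writing the aggregation inside each transformer block explicitly, $\langle a,\, \mathbf{F}_{t-i\Delta t}\rangle$ becomes a sum, over an index set of size growing with $d$, of weakly dependent, mean-adjusted terms. I would then verify the two hypotheses of a Lindeberg--Feller--type CLT for mixing arrays: (i) the Lindeberg uniform-asymptotic-negligibility condition, which follows from the finite-variance assumption together with uniform integrability of the per-coordinate contributions, and (ii) a summable weak-dependence condition controlling covariance decay across the aggregation index. To accommodate the dependence I would use Bernstein's big-block/small-block decomposition, splitting the sum into large asymptotically independent blocks separated by negligible small blocks, and apply the independent-case Lindeberg CLT to the large blocks. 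The normalization by $\sqrt{\mathrm{Var}[\Delta^k\mathbf{F}_t]}$ is exactly what forces the limiting variance in each direction to be unity, yielding the identity covariance.

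The hard part will be verifying that the transformer's \emph{nonlinearities} --- the softmax in MHSA, the GeLU in the MLP, and the timestep-dependent rescaling in AdaLN --- preserve both the weak-dependence and the Lindeberg conditions. These maps can in principle concentrate mass on a few coordinates, destroying uniform negligibility, or inject long-range correlations, breaking the mixing hypothesis; controlling them rigorously is precisely what the phrase ``mild regularity conditions'' must encode, e.g.\ Lipschitz bounds on the nonlinearities and a bounded-operator-norm condition on the attention maps. A secondary obstacle is that the backward-difference operator behaves like a discrete derivative and therefore amplifies high-frequency fluctuations: one must track how the binomial coefficients $\binom{k}{i}$ and the factor $\Delta t^{-k}$ interact with the normalization, so that the Lindeberg ratio stays controlled and the limiting variance remains finite and strictly positive. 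Once these structural conditions are pinned down, the blocking argument and the scalar CLT close the proof.
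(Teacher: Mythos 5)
The paper does not actually prove this proposition: its entire argument is a two-sentence \emph{Justification} asserting that the claim ``follows from the multivariate central limit theorem for weakly dependent random vectors'' and that residual connections and normalization layers ``help maintain the required regularity conditions.'' Your proposal takes the same route in substance --- a CLT under weak dependence --- but supplies all the machinery the paper leaves implicit: the Cram\'er--Wold reduction, the Bernstein big-block/small-block decomposition, and the Lindeberg verification. More importantly, you resolve a real ambiguity in the paper's setup. The text preceding the proposition suggests Gaussianity arises because ``the finite difference approximations aggregate many such updates,'' yet the backward-difference expansion has only $k+1$ terms while the stated limit is $d \to \infty$; your observation that the effective aggregation must come from the width/token structure within each coordinate, not from the outer sum over $i$, is the only reading under which the statement is coherent, and the paper never makes it. You also correctly locate the entire burden of a rigorous proof in the nonlinearities (softmax, GeLU, AdaLN-style conditioning), which is precisely what the paper's undefined phrase ``mild regularity conditions'' would have to encode.

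One step of your plan is stated too strongly, however. Dividing the centered vector by the scalar $\sqrt{\mathrm{Var}[\Delta^k\mathbf{F}_t]}$ does not by itself ``force the limiting variance in each direction to be unity'': a scalar normalizer yields the identity covariance only if the coordinates are asymptotically uncorrelated and homoscedastic. Absent such an isotropy or exchangeability assumption, your Cram\'er--Wold step delivers normality of each projection with a direction-dependent limiting variance, i.e.\ $\mathcal{N}(0,\Sigma)$ rather than $\mathcal{N}(0,I)$; obtaining $I$ requires either normalizing by $\Sigma^{-1/2}$ or folding isotropy across coordinates into your hypotheses. A related technicality: Cram\'er--Wold is a fixed-dimension device, so in the $d \to \infty$ regime the conclusion should be phrased via fixed finite-dimensional marginals or projections along sequences of unit directions. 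Both defects are inherited from the proposition as stated, and the paper's one-line justification does not address them either --- so you are not behind the paper --- but a completed version of your argument would need them repaired.
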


\textit{Justification:} This follows from the multivariate central limit theorem for weakly dependent random vectors. The transformer's residual connections and normalization layers help maintain the required regularity conditions.

\paragraph{Random Matrix Theory Perspective}

The attention mechanism involves random-like weight matrices due to the softmax operation on query-key similarities:
\begin{equation}
    \text{Attention}(Q,K,V) = \text{softmax}\left(\frac{QK^T}{\sqrt{d_k}}\right)V
\end{equation}

\begin{lemma}[Gaussian Universality in Random Matrices]
\label{lemma:random_matrix}
For large transformer models with random initialization and sufficient depth, the distribution of feature differences approaches a universal form that, under appropriate scaling, exhibits Gaussian characteristics in the bulk of the spectrum.
\end{lemma}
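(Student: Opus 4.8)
The plan is to establish the claim through a three-stage reduction, moving from the nonlinear attention map to an effective Gaussian model whose spectral statistics are then governed by random-matrix universality. First I would fix the operating regime: under random initialization the weight matrices $W_Q, W_K, W_V$ and the MLP weights have i.i.d.\ (or rotationally invariant) entries, scaled so that the pre-activations remain $O(1)$. In this regime the logits $QK^\top/\sqrt{d_k}$ concentrate as $d_k \to \infty$, so the softmax can be linearized around its mean argument. The plan is then to invoke a Gaussian equivalence argument: I would show that the nonlinear attention output matches, in all low-order moments relevant to the limiting spectral distribution, an affine-Gaussian surrogate $A\mathbf{F}_t + b + \xi$ with $\xi$ an independent Gaussian field. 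This converts the analytically intractable nonlinearity into a tractable linear-plus-noise channel.

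Second, I would propagate this surrogate through the residual recursion $\mathbf{F}_{t+1} = \mathbf{F}_t + \text{MHSA}(\mathbf{F}_t) + \text{MLP}(\mathbf{F}_t)$. Since each block contributes an additive, approximately independent random transformation, the finite differences $\derivapprox{k}\mathbf{F}_t$ become binomially weighted linear combinations of feature states across adjacent layers. I would model these as sums of weakly dependent random vectors and argue that the empirical covariance of $\derivapprox{k}\mathbf{F}_t$ converges, as $d \to \infty$, to a deterministic limit whose empirical spectral distribution obeys a Marchenko--Pastur / free-convolution law. The ``bulk'' qualifier is essential: convergence is asserted only for the bulk eigenvalues, explicitly excluding the finite set of outlier (spike) eigenvalues induced by positional encodings, low-rank conditioning signals, and the identity contribution of the residual stream.

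Third, I would upgrade spectral convergence to entrywise Gaussianity. With sufficient depth $L$, each coordinate of $\derivapprox{k}\mathbf{F}_t$ is an aggregate of many weakly correlated per-block contributions, so a multivariate central limit theorem for weakly dependent arrays (a martingale or strong-mixing CLT) would give $\derivapprox{k}\mathbf{F}_t \xrightarrow{d} \mathcal{N}(\mu_k, \Sigma_k)$ after normalization, consistent with Observation~\ref{obs:gaussianity}. Combining the spectral limit of $\Sigma_k$ with the asymptotic normality of the entries yields the stated universal Gaussian form in the bulk.

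The hard part will be the Gaussian equivalence step, since the softmax couples all entries of each row in a nonlinear, non-rotationally-invariant manner, so neither off-the-shelf universality theorems nor free-probability additivity apply directly; one must control the linearization error and verify, via a quantitative mixing or Stein-type bound, that the induced dependence is weak enough to survive composition over depth. A secondary obstacle is making ``sufficient depth'' and ``appropriate scaling'' precise, so that the edge/bulk separation and the $O(1)$ pre-activation regime hold simultaneously without the residual stream either collapsing or exploding.
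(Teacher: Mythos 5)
First, be aware that the paper does not actually prove Lemma~\ref{lemma:random_matrix}: the lemma is offered as one of three informal ``theoretical perspectives'' in Appendix A.6.1, its entire justification is a single sentence relating it to the Wigner semicircle law and its generalizations, and Section A.6.4 explicitly concedes that no first-principles proof of Gaussian feature dynamics is given. So your proposal cannot be matched against a paper proof; it can only be judged on its own soundness, and there it has two genuine gaps. The first you name yourself: the Gaussian-equivalence step for softmax attention. This is not a deferred technicality but the entire mathematical content of the claim. The row-wise softmax couples all entries of each row nonlinearly and destroys the independence and rotational-invariance structure on which existing Gaussian-equivalence and universality theorems (for random-feature models, linearized kernels, or Wigner/Wishart ensembles) rely; concentration of the logits $QK^\top/\sqrt{d_k}$ justifies a linearization of the mean behavior, but not a replacement of the attention output \emph{in distribution} by an affine-Gaussian surrogate, and no quantitative error control uniform over depth is available. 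Until that step is supplied, stages two and three have nothing to propagate, so the proposal is a research program rather than a proof.

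The second gap is a non sequitur between your stage two and your conclusion. Bulk spectral convergence of the empirical covariance of $\derivapprox{k}\mathbf{F}_t$ to a Marchenko--Pastur / free-convolution limit is a statement about the eigenvalue statistics of $\Sigma_k$; it neither implies nor is required for distributional Gaussianity of the vector $\derivapprox{k}\mathbf{F}_t$, which is what Observation~\ref{obs:gaussianity} and the lemma's intended use actually demand. The entrywise normality is carried entirely by your stage-three CLT for weakly dependent arrays --- but that is precisely the separate Central Limit Theorem perspective the paper already states as Proposition~\ref{prop:asymptotic_gaussian}. Once the spectral layer is stripped away, your argument collapses into that proposition, and the distinctively random-matrix content of Lemma~\ref{lemma:random_matrix} (a ``universal form \ldots{} in the bulk of the spectrum'') remains exactly as unproven as in the paper. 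Upgrading bulk eigenvalue convergence to fluctuation-level Gaussian statements would additionally require eigenvector delocalization or a local law for the relevant ensembles, neither of which is known for softmax-coupled matrices; if you pursue this, the honest formulation is to first restate the lemma precisely (specifying the ensemble, the scaling, and what ``bulk'' excludes, e.g.\ spikes from positional encodings and the residual identity component, as you correctly anticipate) and then prove the CLT part and the spectral part as separate claims.
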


This is related to the Wigner semicircle law and its generalizations, which predict Gaussian-like behavior in high-dimensional random matrix ensembles.

\paragraph{Diffusion Process Approximation}

In the continuous-time limit, the discrete diffusion process can be approximated by a stochastic differential equation:
\begin{equation}
    d\mathbf{F}_t = \mu(\mathbf{F}_t, t)dt + \sigma(\mathbf{F}_t, t)dW_t
\end{equation}

where $W_t$ is a Wiener process. Under appropriate conditions on the drift $\mu$ and diffusion $\sigma$ coefficients, the marginal distributions of such processes exhibit Gaussian characteristics, especially for the increments.

\subsection{Hermite Polynomials as Optimal Basis}

Given the empirically validated Gaussian nature, we now establish why Hermite polynomials form an optimal basis:

\begin{proposition}[Projection optimality in $L^2(\gamma)$]
Let $\gamma$ be a Gaussian measure on the horizon variable. For any target offset function $y$ and any degree-$N$ polynomial $p$, the orthogonal projection $P_N y$ of $y$ onto $\mathrm{span}\{H_0, \ldots, H_N\}$ satisfies
\begin{equation}
\|y-P_N y\|_{2,\gamma} \le \|y-p\|_{2,\gamma}.
\end{equation}
Thus truncated Hermite expansion minimizes the Gaussian-weighted L2 error within degree-$N$ polynomials.
\end{proposition}
\begin{proof}
Hermite polynomials form an orthogonal basis of $L^2(\gamma)$. For any closed subspace $\mathcal{V}_N:=\mathrm{span}\{H_0, \ldots, H_N\}$, the orthogonal projection $P_N$ minimizes distance to $\mathcal{V}_N$ by the Pythagorean theorem: for any $v\in\mathcal{V}_N$, $\|y-P_N y\|_{2, \gamma} \le \|y-v\|_{2, \gamma}$. Choosing $v=p$ gives the claim.
\end{proof}

\begin{theorem}[Karhunen-Loève Expansion for Gaussian Processes]
\label{thm:kl_hermite}
Let $X(t)$ be a zero-mean Gaussian process with covariance kernel $K(s, t)$. If $K$ has the form:
\begin{equation}
    K(s, t) = \exp\left(-\frac{(s-t)^2}{2\tau^2}\right)
\end{equation}
then the eigenfunctions of the covariance operator are Hermite functions, and the optimal $L^2$ expansion is:
\begin{equation}
    X(t) = \sum_{n=0}^{\infty} \xi_n \psi_n(t)
\end{equation}
where $\psi_n$ are scaled Hermite functions and $\xi_n$ are uncorrelated Gaussian coefficients.
\end{theorem}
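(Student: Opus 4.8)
\noindent\textit{Proof proposal.} The plan is to reduce the theorem to the classical Karhunen--Lo\`eve (KL) framework and then diagonalise the resulting covariance operator explicitly through Mehler's formula. First I would invoke the general KL theorem: for a zero-mean, mean-square continuous Gaussian process $X$ with covariance kernel $K$, the covariance integral operator
\begin{equation}
    (T_K \phi)(s) = \int K(s,t)\,\phi(t)\,d\mu(t)
\end{equation}
is compact, self-adjoint and positive, so by the spectral theorem it admits an orthonormal eigenbasis $\{\psi_n\}$ with eigenvalues $\lambda_n \ge 0$, and the expansion $X(t)=\sum_n \xi_n \psi_n(t)$ holds in $L^2$ with coefficients $\xi_n = \int X(t)\psi_n(t)\,d\mu(t)$. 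Because $X$ is Gaussian, the $\xi_n$ are jointly Gaussian; the eigenrelation together with orthonormality of $\{\psi_n\}$ gives $\mathbb{E}[\xi_m \xi_n]=\lambda_n \delta_{mn}$, so they are uncorrelated and hence independent. This reduces the entire statement to identifying the eigenfunctions of $T_K$ for the squared-exponential kernel.

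The second and central step is to fix the measure $\mu$ and apply Mehler's formula. A purely translation-invariant kernel acts on $L^2(\mathbb{R},dt)$ as a convolution, whose spectrum is continuous (diagonalised by Fourier, not by Hermite); the discrete Hermite eigenbasis emerges only when the eigenproblem is posed against a Gaussian reference measure $d\mu(t)=e^{-t^2/(2\sigma^2)}\,dt$ --- precisely the Gaussian-weighted setting already used in Appendix~A.3.1 with $w(x)=e^{-(\sigma x)^2}$. With this measure I would invoke Mehler's bilinear generating identity
\begin{equation}
    \sum_{n=0}^{\infty} \frac{\rho^n}{2^n n!}\,H_n(x)\,H_n(y) = \frac{1}{\sqrt{1-\rho^2}}\exp\!\left(\frac{2\rho x y - \rho^2(x^2+y^2)}{1-\rho^2}\right),
\end{equation}
and match its right-hand side to $K(s,t)$ by completing the square: the cross term $2\rho xy$ reproduces the $(s-t)^2$ dependence once the pure quadratics $x^2+y^2$ are absorbed into the Gaussian factor of the eigenfunctions, and a single choice of $\rho$ together with a rescaling of the argument matches the length scale $\tau$. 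This exhibits the scaled Hermite functions $\psi_n$ as the eigenfunctions, with geometrically decaying eigenvalues $\lambda_n \propto \rho^n$, $\rho\in(0,1)$.

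Finally I would assemble the pieces: the geometric decay of $\lambda_n$ makes $T_K$ trace-class, so Mercer's theorem guarantees that $K(s,t)=\sum_n \lambda_n \psi_n(s)\psi_n(t)$ converges absolutely and uniformly on compacta, legitimising the KL expansion, while completeness and orthonormality of the Hermite functions in the weighted $L^2$ space are standard. The main obstacle is the functional-analytic setup rather than the algebra: one must fix the correct reference measure that converts the continuous-spectrum convolution operator into one with a discrete Hermite eigenbasis, and then verify both the parameter matching in Mehler's formula and the convergence hypotheses of Mercer's theorem. I expect the delicate point to be reconciling the stated translation-invariant kernel $\exp(-(s-t)^2/2\tau^2)$ with the Gaussian-weighted eigenproblem, since the eigenfunctions are genuinely Hermite \emph{functions} (Hermite polynomials multiplied by a Gaussian factor) rather than bare polynomials --- a distinction that must be tracked carefully for the claim to be stated precisely.
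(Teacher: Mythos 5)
Your proposal is correct in substance and is in fact considerably more careful than the paper's own argument, which consists only of a two-sentence sketch: the paper asserts that the covariance operator has Hermite eigenfunctions because ``Hermite functions are eigenfunctions of the Fourier transform composed with multiplication by a Gaussian, which is essentially what the Gaussian kernel operator represents,'' and stops there. Your route --- invoke the classical KL theorem to reduce everything to the spectral problem, fix a Gaussian reference measure, diagonalise via Mehler's bilinear formula, and then use the geometric eigenvalue decay to get trace-class and Mercer convergence --- is the standard rigorous derivation of the squared-exponential kernel's spectrum (the Zhu--Williams eigendecomposition familiar from Gaussian-process regression), and it supplies exactly what the paper's sketch omits. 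Most importantly, you correctly pinpoint the gap that the paper glosses over: on $L^2(\mathbb{R}, dt)$ with Lebesgue measure the kernel $\exp\left(-(s-t)^2/2\tau^2\right)$ acts as a convolution operator with purely continuous spectrum and \emph{no} $L^2$ eigenfunctions at all, so the theorem as literally stated is not true without a Gaussian weight (or a compact domain); the discrete Hermite eigenbasis with $\lambda_n \propto \rho^n$ emerges only once the eigenproblem is posed against $d\mu(t) = e^{-t^2/(2\sigma^2)}\,dt$, consistent with the weight $w(x) = e^{-(\sigma x)^2}$ used in Appendix~A.3.1. Your distinction between Hermite \emph{functions} (polynomial times Gaussian envelope) and bare Hermite polynomials is likewise correct and absent from the paper. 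Two refinements for your write-up: first, the completing-the-square step in the Mehler matching always succeeds --- for any kernel width $\tau$ and any Gaussian measure width there is a unique envelope width and a unique $\rho \in (0,1)$ solving the matching equations --- so the ``delicate point'' you anticipate is benign, though the eigenfunctions do depend on the (non-canonical) choice of reference measure, which a precise restatement of the theorem would need to fix; second, since the measure is finite and the eigenvalues decay geometrically, mean-square continuity and the trace-class hypothesis are automatic, so the Mercer step closes without further conditions.
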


\begin{proof}[Proof sketch]
The covariance operator $\mathcal{K}$ defined by:
\begin{equation}
    (\mathcal{K}f)(t) = \int K(s, t)f(s)ds
\end{equation}
has Hermite functions as eigenfunctions when $K$ has Gaussian form. This follows from the fact that Hermite functions are eigenfunctions of the Fourier transform composed with multiplication by a Gaussian, which is essentially what the Gaussian kernel operator represents.
\end{proof}

\begin{corollary}[Optimality for Feature Prediction]
\label{cor:optimality_proof}
If feature differences $\Delta^k\mathbf{F}$ follow a multivariate Gaussian distribution, then the Hermite polynomial basis minimizes the expected squared prediction error:
\begin{equation}
    \mathbb{E}\left[\|\mathbf{F}_{t-\Delta t} - \sum_{k=0}^{N} c_k \phi_k(\Delta t)\|^2\right]
\end{equation}
among all orthogonal polynomial bases $\{\phi_k\}$.
\end{corollary}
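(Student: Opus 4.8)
The plan is to recast the stated minimization as a best $L^2$-approximation problem under the Gaussian measure furnished by Observation~\ref{obs:gaussianity}, and then to invoke the optimality of the Karhunen--Loève (KL) expansion together with Theorem~\ref{thm:kl_hermite}, which already identifies the KL eigenfunctions of a Gaussian covariance kernel as scaled Hermite functions. First I would fix the choice of coefficients: for any candidate orthonormal polynomial family $\{\phi_k\}$, the coefficients minimizing $\mathbb{E}[\|\mathbf{F}_{t-\Delta t} - \sum_{k=0}^{N} c_k \phi_k(\Delta t)\|^2]$ are the orthogonal projections $c_k = \langle \mathbf{F}, \phi_k\rangle_w$, so the objective collapses to the expected residual energy $\mathbb{E}[\|\mathbf{F} - P_N^{\{\phi\}}\mathbf{F}\|^2]$ of the rank-$N$ projector attached to that family. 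Because $\derivapprox{k}\mathbf{F}$ is multivariate Gaussian, the relevant inner product is weighted by the Gaussian density, i.e.\ the weight $w(x)=e^{-(\sigma x)^2}$ appearing in Eq.~\eqref{eq:hermite_expansion}, which is precisely the measure under which the scaled Hermite polynomials are orthogonal.

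Second, I would invoke the variational characterization of the KL expansion: among all $N$-dimensional orthonormal expansions of a second-order process, the truncated eigenfunction expansion of the covariance operator minimizes the expected residual energy \emph{at every} truncation level $N$. This is the stochastic-process analogue of the Eckart--Young/PCA optimality, and it follows from the Courant--Fischer min-max principle applied to the covariance operator $\mathcal{K}$, the minimal residual equalling the tail sum of eigenvalues attained only by the top-$N$ eigenspace. Combining this with Theorem~\ref{thm:kl_hermite}, whose Gaussian-kernel hypothesis is supplied by Observation~\ref{obs:gaussianity}, identifies the optimal eigenbasis as the scaled Hermite functions. Since the Hermite family itself lies in the admissible class of orthogonal polynomial bases, optimality over the larger class of all orthonormal expansions immediately descends to optimality over the restricted competitor class $\{\phi_k\}$, which is what the corollary asserts.

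Third, I would close the loop between the abstract projection coefficients $\langle \mathbf{F},\hermitepoly{k}\rangle_w$ and the concrete HiCache coefficients $\derivapprox{k}\mathbf{F}/k!$ used in Eq.~\eqref{eq:hicache_prediction}, arguing that the finite-difference derivatives realize, to leading order, the Hermite projection coefficients. Here I would lean on Lemma~\ref{lemma:finite_diff_approx} to control the derivative discretization gap and on Lemma~\ref{lemma:gaussian_decay} to ensure the coefficient energy decays fast enough that the leading terms dominate, so that the HiCache predictor is an asymptotically exact instance of the KL-optimal estimator.

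The hard part will be bridging the \emph{function-space} KL optimality, which concerns projecting an entire trajectory onto an $N$-dimensional subspace, and the \emph{pointwise} predictor of HiCache, which evaluates each basis function only at the single horizon $\Delta t$ and pairs it with a finite-difference coefficient. Making this rigorous requires (i) matching the weight $e^{-(\sigma x)^2}$ used to define ``orthogonal'' with the true covariance structure of $\derivapprox{k}\mathbf{F}$ rather than merely positing a squared-exponential kernel, and (ii) verifying that restricting attention to polynomial bases does not exclude the optimum---which holds only because the KL eigenfunctions are Hermite polynomials times the Gaussian weight, so polynomial orthogonality under $w$ coincides with eigenfunction orthonormality under Lebesgue measure. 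I expect step (i) to be the genuine obstacle: Observation~\ref{obs:gaussianity} guarantees Gaussian \emph{marginals} of the derivative approximations but not the specific Gaussian \emph{kernel} form that Theorem~\ref{thm:kl_hermite} demands, and absent that kernel the KL eigenfunctions need not be Hermite, so the corollary would hold only approximately rather than exactly.
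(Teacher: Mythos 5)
Your proposal follows essentially the same route as the paper's own proof, which is only a three-sentence sketch: invoke the Karhunen--Lo\`eve theorem to say the optimal basis is the eigenbasis of the covariance operator, use Theorem~\ref{thm:kl_hermite} to identify that eigenbasis as scaled Hermite functions, and conclude that truncation at any order $N$ minimizes the mean squared error. Your first two steps---optimal coefficients as weighted orthogonal projections, then the Courant--Fischer/Eckart--Young variational characterization showing the top-$N$ eigenspace attains the minimal residual (the eigenvalue tail sum)---are precisely the standard machinery the paper gestures at without writing out, so there you are both more explicit and faithful to the intended argument. Your third step, tying the abstract coefficients to the finite-difference quantities $\derivapprox{k}\mathbf{F}/k!$ via Lemmas~\ref{lemma:finite_diff_approx} and~\ref{lemma:gaussian_decay}, goes beyond anything the paper attempts: the paper's proof never connects the $c_k$ of the corollary to the HiCache predictor of Eq.~\eqref{eq:hicache_prediction} at all, so this bridging is not needed for the statement as written, though it would be needed to make the corollary say anything about the actual algorithm.

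The obstacle you flag at the end is genuine, and it is a gap in the paper's argument rather than a defect peculiar to your write-up. Theorem~\ref{thm:kl_hermite} requires the covariance kernel to have the squared-exponential form $K(s,t)=\exp\left(-\frac{(s-t)^2}{2\tau^2}\right)$, whereas Observation~\ref{obs:gaussianity} supplies only multivariate Gaussian marginals of the difference vectors, which constrains neither stationarity nor the functional form of the covariance. A Gaussian process with, say, an Ornstein--Uhlenbeck (exponential) kernel has trigonometric KL eigenfunctions, not Hermite ones, so the conclusion ``minimizes the expected squared error among all orthogonal polynomial bases'' does not follow from the stated hypothesis alone. The paper conceals exactly this jump behind the phrase ``for Gaussian processes with appropriate covariance structure'' in its proof sketch; your assessment that, absent the kernel assumption, the result holds at best approximately (in the spirit of Proposition~\ref{prop:robustness}) is the correct reading, and acknowledging it makes your version more honest than the original.
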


\begin{proof}[Proof sketch]
By the Karhunen-Loève theorem, the optimal basis for representing a stochastic process is given by the eigenfunctions of its covariance operator. For Gaussian processes with appropriate covariance structure, these are Hermite functions. The optimality follows from the fact that this expansion minimizes the mean squared error for any finite truncation order $N$.
\end{proof}

\subsection{Robustness Under Approximate Gaussianity}

Even if the Gaussianity assumption is only approximately satisfied, Hermite polynomials maintain advantages:

\begin{proposition}[Robustness to Non-Gaussianity]
\label{prop:robustness}
Let $p(\mathbf{x})$ be a distribution with finite moments that can be expressed as:
\begin{equation}
    p(\mathbf{x}) = \phi(\mathbf{x})(1 + \epsilon h(\mathbf{x}))
\end{equation}
where $\phi(\mathbf{x})$ is the Gaussian density, $|h(\mathbf{x})| \leq C$, and $\epsilon$ is small. Then the relative efficiency of Hermite basis compared to monomials degrades at most as $O(\epsilon)$.
\end{proposition}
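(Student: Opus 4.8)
The plan is to make the notion of \emph{relative efficiency} precise as a ratio of mean-squared prediction errors, and then exploit the fact that the perturbed density $p_\epsilon(\mathbf{x}) = \phi(\mathbf{x})(1 + \epsilon h(\mathbf{x}))$ depends \emph{affinely} on $\epsilon$. Because of this affine dependence, every ingredient of the error ratio is a smooth function of $\epsilon$ near $\epsilon = 0$, and a first-order Taylor expansion will yield the claimed $O(\epsilon)$ degradation directly.

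First I would fix a target feature functional $f$ and a truncation order $N$, and for any candidate polynomial basis $B = \{\phi_k\}_{k=0}^N$ define its error under $p_\epsilon$ as
\[
\mathcal{E}_B(\epsilon) = \min_{c} \int \eucnorm{f(\mathbf{x}) - \sum_{k=0}^{N} c_k \phi_k(\mathbf{x})}^2 \, p_\epsilon(\mathbf{x}) \, d\mathbf{x}.
\]
The integrand is quadratic in the coefficient vector $c$, so $\mathcal{E}_B(\epsilon) = a_\epsilon - b_\epsilon^\top G_\epsilon^{-1} b_\epsilon$, where the Gram matrix $G_\epsilon = [\langle \phi_i, \phi_j\rangle_{p_\epsilon}]$, the cross term $b_\epsilon = [\langle f, \phi_j\rangle_{p_\epsilon}]$, and $a_\epsilon = \langle f, f\rangle_{p_\epsilon}$ are each affine in $\epsilon$ (since $p_\epsilon = \phi + \epsilon\, h\phi$ splits every integral linearly). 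I would then set the relative efficiency of the Hermite basis against monomials to be $\rho(\epsilon) = \mathcal{E}_{\mathrm{mono}}(\epsilon)/\mathcal{E}_{\mathrm{Her}}(\epsilon)$.

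Next I would establish smoothness. The boundedness $|h| \le C$ guarantees that $p_\epsilon$ is a genuine probability density with finite moments for all $|\epsilon| \le 1/C$, so every inner product above is finite and affine in $\epsilon$. At $\epsilon = 0$ the Hermite Gram matrix $G_0$ is diagonal and positive definite (Hermite polynomials are $\phi$-orthogonal), hence invertible; by continuity $G_\epsilon$ remains invertible on a neighborhood of $0$. An envelope/implicit-function argument then shows $\epsilon \mapsto b_\epsilon^\top G_\epsilon^{-1} b_\epsilon$ is $C^\infty$, so both $\mathcal{E}_{\mathrm{Her}}(\epsilon)$ and $\mathcal{E}_{\mathrm{mono}}(\epsilon)$ admit expansions $\mathcal{E}_\bullet(\epsilon) = \mathcal{E}_\bullet(0) + \epsilon\, \mathcal{E}_\bullet'(0) + O(\epsilon^2)$. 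Invoking Theorem~\ref{thm:kl_hermite} and Corollary~\ref{cor:optimality_proof}, at $\epsilon = 0$ the Hermite error coincides with the Karhunen--Lo\`eve optimum, fixing the baseline $\rho(0) = \rho_0$. Dividing the two expansions and expanding the reciprocal yields $\rho(\epsilon) = \rho_0 + \epsilon\, \rho'(0) + O(\epsilon^2)$, i.e. $|\rho(\epsilon) - \rho_0| = O(\epsilon)$, which is exactly the asserted robustness.

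The main obstacle is the smoothness/non-degeneracy step: one must verify that minimizing over the coefficient vector $c$ introduces no non-differentiability in $\epsilon$. Keeping $G_\epsilon$ uniformly positive definite on a neighborhood of $0$ is essential here, and this is precisely where the hypothesis $|h| \le C$ does the real work, since it simultaneously keeps $p_\epsilon$ a valid density and its moments finite and $\epsilon$-affine. A secondary subtlety is that the monomial Gram matrix is ill-conditioned for large $N$, so the constant hidden in $O(\epsilon)$ worsens as $N$ grows even though the \emph{order} in $\epsilon$ does not; I would note this explicitly rather than attempt a uniform-in-$N$ bound.
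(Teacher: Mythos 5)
The paper itself supplies no proof of this proposition---it is asserted bare in Appendix A.6.3 with only a remark following it---so your attempt must stand on its own merits. Unfortunately, it contains a genuine flaw: your formalization of ``relative efficiency'' trivializes the statement. The least-squares error $\mathcal{E}_B(\epsilon) = \min_c \int \| f - \sum_{k=0}^N c_k\phi_k \|^2 \, p_\epsilon$ depends only on the subspace $\operatorname{span}\{\phi_0,\dots,\phi_N\}$, not on the basis chosen for it. Since each Hermite polynomial $H_k$ has exact degree $k$ (leading coefficient $2^k$), the monomials $\{x^k\}_{k=0}^N$ and the Hermite polynomials $\{H_k\}_{k=0}^N$ (scaled or not) span exactly the same space of polynomials of degree at most $N$. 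Hence $\mathcal{E}_{\mathrm{mono}}(\epsilon) = \mathcal{E}_{\mathrm{Her}}(\epsilon)$ identically, your ratio satisfies $\rho(\epsilon)\equiv 1$, and your conclusion $|\rho(\epsilon)-\rho_0|=O(\epsilon)$ holds vacuously with degradation exactly zero. The machinery you build---affine dependence of $a_\epsilon$, $b_\epsilon$, $G_\epsilon$ on $\epsilon$, invertibility of $G_\epsilon$ near $0$, smoothness of the Schur complement $a_\epsilon - b_\epsilon^\top G_\epsilon^{-1} b_\epsilon$---is all correct, but it is aimed at a quantity that cannot distinguish the two bases, so the proof establishes nothing about Hermite versus monomial efficiency.

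The repair is to make the coefficient rule basis-dependent, which is where orthogonality actually earns its keep. Two natural choices: (i) take coefficients computed via the \emph{unperturbed} Gaussian inner product (Fourier--Hermite coefficients); for the orthonormalized Hermite basis $G_\epsilon = I + \epsilon M$ with $\|M\|$ bounded via $|h|\le C$, so these frozen coefficients lie within $O(\epsilon)$ of the $p_\epsilon$-optimal ones and the excess error is $O(\epsilon)$ (in fact $O(\epsilon^2)$ by first-order optimality), whereas for monomials $G_0$ is an ill-conditioned Hankel moment matrix and the same $\epsilon M$ perturbation is amplified by its condition number; or (ii) use the fixed derivative-based coefficients $\Delta^{(i)}\mathcal{F}/i!$ that HiCache actually employs, where no re-fitting occurs and the comparison is between two fixed predictors. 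Under either rule your affine-perturbation argument transfers essentially unchanged and yields a non-vacuous $O(\epsilon)$ bound. Two smaller points: normalization of $p_\epsilon$ forces $\int \phi h = 0$ (implicit in ``$p$ is a distribution,'' but you use $|\epsilon|\le 1/C$ only for positivity, which moreover needs the strict inequality $\epsilon < 1/C$); and your reciprocal expansion of $\rho$ requires $\mathcal{E}_{\mathrm{Her}}(0) > 0$, which fails when $f$ is itself a polynomial of degree at most $N$---that degenerate case should be excluded, or the claim restated additively in terms of $\mathcal{E}_{\mathrm{mono}}(\epsilon) - \mathcal{E}_{\mathrm{Her}}(\epsilon)$.
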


This robustness property ensures that even under model misspecification, Hermite polynomials retain their advantages over standard polynomial bases.

\subsection{Summary and Practical Implications}

While we cannot provide a complete first-principles proof that neural network features must exhibit Gaussian dynamics, we have shown:

\begin{enumerate}
    \item \textbf{Theoretical Support}: Multiple theoretical frameworks (CLT, random matrix theory, diffusion approximations) suggest Gaussianity is a reasonable expectation in high-dimensional neural networks.

    \item \textbf{Conditional Optimality}: Given the empirically validated Gaussian nature, the optimality of Hermite polynomials follows rigorously from the Karhunen-Loève theorem.

    \item \textbf{Robustness}: Even under approximate Gaussianity, Hermite bases maintain advantages over monomials due to their oscillatory nature and better numerical conditioning.
\end{enumerate}

These theoretical insights, combined with our extensive empirical validation (Section 5.1), provide strong justification for the HiCache design choice of using Hermite polynomials as the prediction basis.

\section{Additional Qualitative Results}
\label{appendix:qualitative_results}

% (Figure moved to main paper; see Figure~\ref{fig:video_qualitative_comparison} in the main text.)

\subsection{Empirical Verification of Oscillatory Advantages}
\label{appendix:trajectory_prediction}

Figure~\ref{fig:trajectory_prediction} provides empirical evidence for the theoretical advantages of Hermite polynomials' oscillatory properties discussed in the main paper. Using the non-cumulative error framework on representative FLUX features, we observe that while both methods perform similarly at order 1, Taylor expansion increasingly over-extrapolates at higher orders, especially at trajectory turning points—precisely confirming our theoretical analysis about monomial limitations. HiCache's oscillatory Hermite basis, stabilized by the contraction factor $\sigma$, maintains accurate fitting across all orders.

\begin{figure}[t]
    \centering
    \includegraphics[width=1.0\columnwidth]{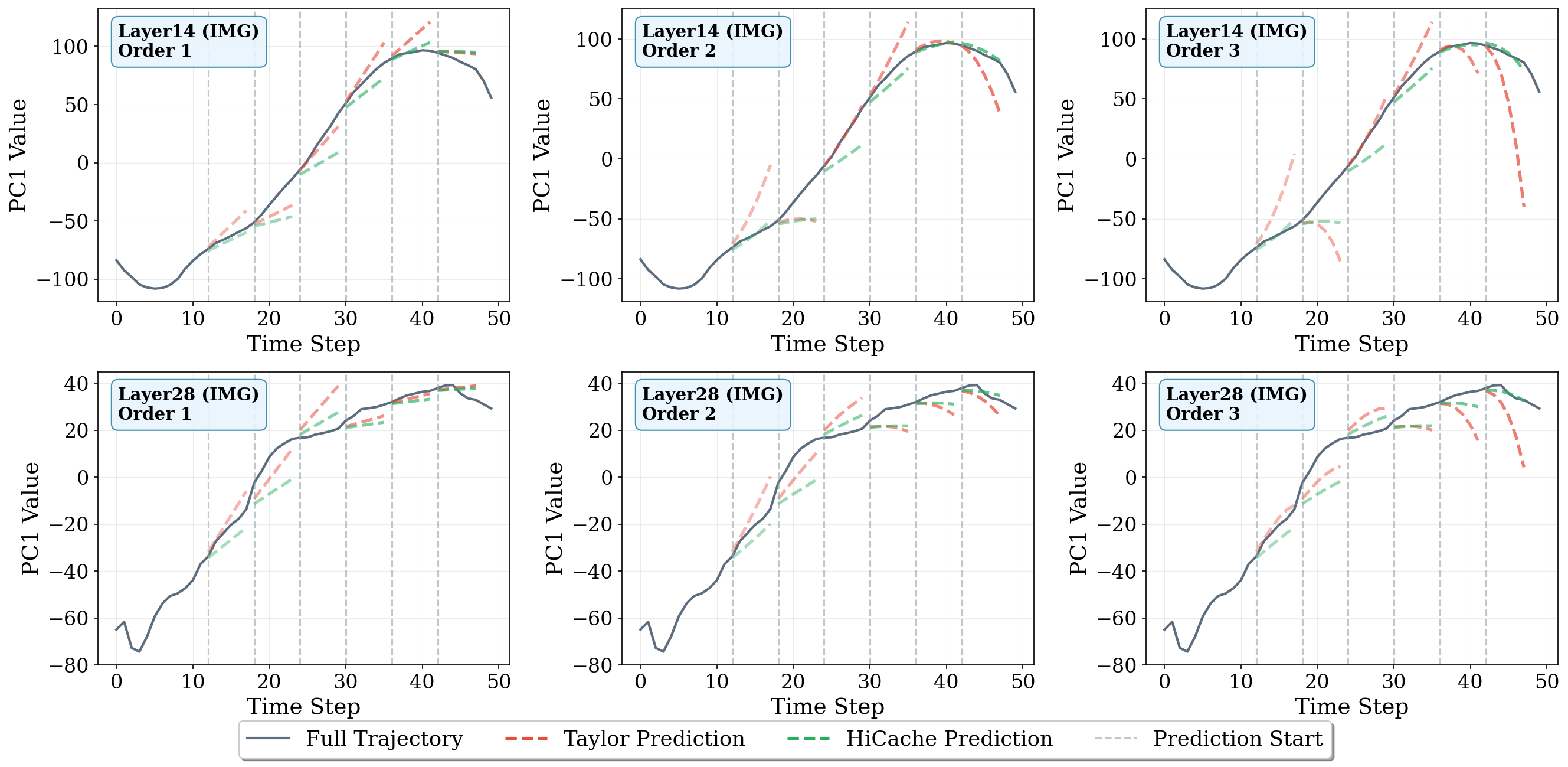}
    \caption{Visualization example of trajectory prediction for different polynomial orders under FLUX architecture. HiCache demonstrates more stable fitting at trajectory turning points, while Taylor expansion over-extrapolates.}
    \label{fig:trajectory_prediction}
\end{figure}

\section{Additional Experimental Results}
\label{appendix:additional_experiments}

This section reports additional experiments that complement the main text: (i) empirical Gaussianity tests for finite differences on FLUX.1-dev, (ii) an adaptive per-layer $\sigma$ scheme motivated by our truncation-error analysis, and (iii) extra backbones and accelerators (Qwen-Image and Chipmunk-Flux).

\subsection{Empirical Gaussianity via Energy Distance Tests on FLUX.1-dev}

To support the Gaussian finite-difference assumption underlying our Hermite basis, we conduct a comprehensive energy-distance test on FLUX.1-dev features. We collect internal trajectories at six representative layers (4/10/14/20/28/42), covering both early dual-stream and late single-stream blocks, and separately track image/text attention and MLP streams when applicable. For each (layer, module) configuration we compute 1st--5th order finite differences over timesteps and apply a high-dimensional energy distance test (Sz\'ekely \& Rizzo) against a multivariate Gaussian fitted by the empirical mean and covariance, using 500 parametric bootstrap simulations and 400 randomly sampled spatial patches.

Using all timesteps, we obtain $15 \times 5 = 75$ tests in total and observe that \emph{all} p-values are above $0.05$. We further split the trajectory into three regimes on the finite-difference time axis: \emph{early high-noise} ([0,15)), \emph{mid transition} ([15,35)), and \emph{late low-noise} ([35,50)), and rerun the tests in each regime (again 75 tests per regime). The aggregated pass rates are summarized in Table~\ref{tab:gaussian_energy_tests}.

\begin{table}[t]
    \centering
    \small
    \caption{Energy-distance Gaussianity tests on FLUX.1-dev finite differences (15 layer--module configurations $\times$ 5 orders). ``Pass'' denotes p-value $> 0.05$.}
    \label{tab:gaussian_energy_tests}
    \vspace{0.3em}
    \begin{tabular}{lrrrrr}
        \toprule
        Setting & Configs$\times$Orders & Tests & Pass & Fail & Pass rate \\
        \midrule
        All timesteps (all layers/modules) & $15\times 5$ & 75 & 75 & 0 & 100\% \\
        Early high-noise phase             & $15\times 5$ & 75 & 75 & 0 & 100\% \\
        Mid transition phase               & $15\times 5$ & 75 & 73 & 2 & 97.3\% \\
        Late low-noise phase               & $15\times 5$ & 75 & 75 & 0 & 100\% \\
        \bottomrule
    \end{tabular}
\end{table}

Overall, finite-difference features on FLUX.1-dev are empirically very close to Gaussian across depth, modality, and time: all 75 all-timestep tests and 150/150 tests in the early/late phases pass at the 0.05 level, and only 2/75 high-order shallow-text configurations in the mid-transition phase fall slightly below 0.05. This supports the modeling assumption that motivates our Hermite basis.

\subsection{Adaptive Per-Layer \texorpdfstring{$\sigma$}{sigma} on FLUX.1-dev}

Section~\ref{sec:method} analyzes Hermite truncation errors and shows that stable extrapolation requires a condition of the form $\sigma\sqrt{2}\lvert\Delta s\rvert \lesssim 1$. Rather than manually tuning a single global $\sigma$ (e.g., $\sigma=0.5$), we design a training-free adaptive scheme that chooses a per-layer $\sigma_\ell$ based on the observed scale of feature differences, while respecting this stability constraint.

For each layer $\ell$, we track first-order finite differences $\Delta F_{\ell,t}$ and compress them into a scalar magnitude $D_{\ell,t}$ via an $\ell_2$ norm over channels followed by averaging over space and batch. We then maintain an EMA-style RMS estimate
\begin{equation}
    q_\ell^{(t)} = \sqrt{(1-\beta)\,\bigl(q_\ell^{(t-1)}\bigr)^2 + \beta\, D_{\ell,t}^2},
\end{equation}
initialized at $q_\ell^{(0)}\approx 1$. The per-layer contraction factor is set to
\begin{equation}
    \sigma_\ell = \min\!\left(\sigma_{\max}, \frac{\alpha}{\sqrt{2}\,q_\ell + \varepsilon}\right),
\end{equation}
where $\alpha>0$ determines the desired initial scale (e.g., $\sigma_\ell\approx 0.5$ or $\approx 1.0$ when $q_\ell\approx 1$), $\sigma_{\max}$ caps large values for stability, and $\varepsilon>0$ avoids division by zero. In practice this adaptive rule simply replaces the fixed scale factor in HiCache with $\sigma_\ell$, without adding any learned parameters.

We evaluate this adaptive $\sigma$ on FLUX.1-dev (1024$\times$1024, 50 steps, interval $=7$, order $=2$, 200 prompts) and compare against TaylorSeer and fixed-$\sigma$ HiCache. All configurations share the same acceleration setting (interval $=7$, max\_order $=2$). Table~\ref{tab:adaptive_sigma_flux} reports CLIP, PSNR, SSIM, LPIPS, and ImageReward.

\begin{table}[t]
    \centering
    \small
    \caption{Adaptive per-layer $\sigma$ (HiCache-Adaptive) on FLUX.1-dev under interval $=7$, order $=2$ (200 prompts, 1024$\times$1024). Here $\sigma_{\text{fixed}/\max}$ denotes the global $\sigma$ for fixed HiCache and $\sigma_{\max}$ for HiCache-Adaptive.}
    \label{tab:adaptive_sigma_flux}
    \vspace{0.3em}
    \resizebox{\linewidth}{!}{%
    \begin{tabular}{lccc ccccc}
	        \toprule
	        Method & $\alpha$ & $\sigma_{\text{fixed}/\max}$ & Init. $\sigma$ & CLIP$\uparrow$ & PSNR$\uparrow$ & SSIM$\uparrow$ & LPIPS$\downarrow$ & ImageReward$\uparrow$ \\
	        \midrule
	        \multicolumn{9}{l}{\textbf{Baseline}} \\
	        TaylorSeer      & --   & --   & --            & 27.41 & 28.63 & 0.621 & 0.456 & 0.951 \\
	        \midrule
	        \multicolumn{9}{l}{\textbf{Comparison at $\sigma \approx 1.0$}} \\
	        \rowcolor{gray!6}
	        HiCache-Fixed       & --   & 1.0  & --            & 27.19 & 28.10 & 0.362 & 0.725 & 0.736 \\
	        \rowcolor{gray!6}
	        HiCache-Adaptive    & 1.40 & 1.0  & $\approx 1.0$ & 27.80 & 29.14 & \textbf{0.648} & \textbf{0.415} & \textbf{0.957} \\
	        \midrule
	        \multicolumn{9}{l}{\textbf{Comparison at $\sigma \approx 0.5$}} \\
	        HiCache-Fixed       & --   & 0.5  & --            & 27.62 & 28.93 & 0.655 & 0.404 & 0.974 \\
	        HiCache-Adaptive    & 0.70 & 0.7  & $\approx 0.5$ & 27.87 & 29.11 & 0.646 & 0.415 & 0.971 \\
	        \bottomrule
	    \end{tabular}
	    }
\end{table}

In the moderate regime where fixed $\sigma=0.5$ already provides strong performance, adaptive $\sigma$ closely matches or slightly improves this baseline (differences in the reported metrics are within $0.02$). When $\sigma$ is increased to $1.0$, the fixed-$\sigma$ variant becomes unstable (SSIM decreases from $\approx 0.655$ to $\approx 0.362$ and LPIPS increases from $\approx 0.404$ to $\approx 0.725$), whereas adaptive $\sigma$ with initial scale $\approx 1.0$ remains stable and yields quality close to the $\sigma=0.5$ regime. This supports the theoretical motivation that the adaptive update can automatically prevent overly aggressive $\sigma$ choices while preserving performance when the baseline configuration is already well behaved.

\subsection{Qwen-Image Text-to-Image Experiments}

We next evaluate HiCache on Qwen-Image~\citep{wu2025qwenimage} to assess whether the conclusions from FLUX transfer to a different text-to-image backbone. All runs use the same diffusion sampler and hyperparameters; the only differences are whether TaylorSeer or HiCache is enabled and the acceleration interval. We report CLIP, ImageReward, PSNR, SSIM, and LPIPS over 200 prompts at 1328$\times$1328 resolution with 50 sampling steps.

Table~\ref{tab:qwen_image_appendix} summarizes the results for intervals $3/6/7/8$. At interval 3 (mild acceleration) TaylorSeer and HiCache obtain very similar scores. For larger intervals (6--8), HiCache consistently achieves higher CLIP, ImageReward, and lower LPIPS than TaylorSeer, while keeping PSNR and SSIM at a comparable level, indicating that the Hermite-based forecast is more robust under stronger acceleration.

\begin{table}[t]
    \centering
    \small
    \caption{Qwen-Image results (1328$\times$1328, 50 steps, 200 prompts) comparing TaylorSeer and HiCache at different intervals, with approximate FLOPs speedup.}
    \label{tab:qwen_image_appendix}
    \vspace{0.3em}
    \begin{tabular}{c lcccccc}
        \toprule
        Interval & Mode    & FLOPs$\times$ & CLIP$\uparrow$ & ImageReward$\uparrow$ & PSNR$\uparrow$ & SSIM$\uparrow$ & LPIPS$\downarrow$ \\
        \midrule
        3 & Taylor  & 2.78 & 28.96 & 1.214 & 30.44 & 0.800 & 0.208 \\
        \rowcolor{gray!8}  & HiCache & 2.78 & 28.99 & 1.216 & 30.92 & 0.795 & 0.198 \\
        \midrule
        6 & Taylor  & 5.00 & 28.09 & 1.012 & 28.52 & 0.601 & 0.481 \\
        \rowcolor{gray!8}  & HiCache & 5.00 & 28.62 & 1.070 & 28.72 & 0.613 & 0.422 \\
        \midrule
        7 & Taylor  & 5.56 & 27.73 & 0.895 & 28.34 & 0.564 & 0.538 \\
        \rowcolor{gray!8}  & HiCache & 5.56 & 27.87 & \textbf{0.944} & 28.50 & 0.579 & \textbf{0.478} \\
        \midrule
        8 & Taylor  & 6.25 & 27.01 & 0.750 & 28.23 & 0.519 & 0.591 \\
        \rowcolor{gray!8}  & HiCache & 6.25 & 27.72 & \textbf{0.844} & 28.37 & 0.539 & 0.531 \\
        \bottomrule
    \end{tabular}
\end{table}

\subsection{Chipmunk-Flux: Compatibility with Column-Sparse Accelerators}

Finally, we examine HiCache on top of Chipmunk-Flux, a column-sparse accelerator for FLUX, to verify that HiCache is complementary to sparse/efficient attention rather than a competing alternative. We compare three configurations on flux-dev (1024$\times$1024, 50 steps, 200 prompts): pure Chipmunk-Flux, Chipmunk-Flux+Taylor, and Chipmunk-Flux+HiCache with a single $\sigma$ setting. All accelerated variants share the same interval $=5$ and order $=2$.

Table~\ref{tab:chipmunk_flux_appendix} reports CLIP, ImageReward, PSNR, SSIM, LPIPS, and approximate per-image latency. Moving from pure Chipmunk-Flux to Chipmunk+Taylor significantly reduces latency but degrades quality, especially in ImageReward and LPIPS. Chipmunk+HiCache with $\sigma=0.5$ largely recovers the quality of the pure Chipmunk baseline---often slightly exceeding it in ImageReward---while retaining latency close to Chipmunk+Taylor. This suggests that HiCache can serve as a plug-in quality-restoring module on top of column-sparse accelerators.

\begin{table}[t]
    \centering
    \small
    \caption{Chipmunk-Flux experiments (flux-dev, 1024$\times$1024, 50 steps, 200 prompts). All accelerated configurations use interval $=5$, order $=2$.}
    \label{tab:chipmunk_flux_appendix}
    \vspace{0.3em}
    \begin{tabular}{lcccccc}
        \toprule
        Mode                                  & CLIP$\uparrow$ & ImageReward$\uparrow$ & PSNR$\uparrow$ & SSIM$\uparrow$ & LPIPS$\downarrow$ & Latency (s/img) \\
        \midrule
        Pure (Chipmunk-Flux)                  & 27.73 & 0.931 & 28.22 & 0.434 & 0.677 & 5.8 \\
        + Taylor                              & 27.34 & 0.845 & 28.12 & 0.380 & 0.746 & 3.3 \\
        + HiCache ($\sigma{=}0.5$)            & 27.88 & \textbf{0.938} & 28.12 & 0.415 & 0.704 & 3.5 \\
        \bottomrule
    \end{tabular}
\end{table}

% End of appendix - no bibliography or document closure needed
% as this file is included within the main ICLR paper

\end{document}